\theoremstyle{plain}
\newtheorem{theorem}{Theorem}[section]
\newtheorem{proposition}[theorem]{Proposition}
\newtheorem{corollary}[theorem]{Corollary}
\theoremstyle{definition}
\newtheorem{definition}[theorem]{Definition}
\newtheorem{assumption}[theorem]{Assumption}
\theoremstyle{remark}
\newcommand{\loss}[1]{\mathcal{L}(#1)}
\icmltitlerunning{Why Domain Generalization Fail? A View of Necessity and Sufficiency}
\begin{document}

\twocolumn[
\icmltitle{Why Domain Generalization Fail? A View of Necessity and Sufficiency}



\icmlsetsymbol{equal}{*}

\begin{icmlauthorlist}
\icmlauthor{Tung-Long vuong}{comp}
\icmlauthor{Vy Vo}{comp}
\icmlauthor{Hien Dang}{yyy}
\icmlauthor{Van-Anh Nguyen}{comp}
\icmlauthor{Thanh-Toan Do}{comp}
\icmlauthor{Mehrtash Harandi}{comp}
\icmlauthor{Trung Le}{comp}
\icmlauthor{Dinh Phung}{comp}

\end{icmlauthorlist}

\icmlaffiliation{yyy}{ University of Texas at Austin, USA}
\icmlaffiliation{comp}{Monash University, Australia}

\icmlcorrespondingauthor{Tung-Long Vuong}{Tung-Long.Vuong@monash.edu}

\icmlkeywords{Machine Learning, ICML}

\vskip 0.3in
]




\printAffiliationsAndNotice{} 

\begin{abstract}

Despite a strong theoretical foundation, empirical experiments reveal that existing domain generalization (DG) algorithms often fail to consistently outperform the ERM baseline. We argue that this issue arises because most DG studies focus on establishing theoretical guarantees for generalization under unrealistic assumptions, such as the availability of sufficient, diverse (or even infinite) domains or access to target domain knowledge. As a result, the extent to which domain generalization is achievable in scenarios with limited domains remains largely unexplored. This paper seeks to address this gap by examining generalization through the lens of the conditions necessary for its existence and learnability. Specifically, we systematically establish a set of necessary and sufficient conditions for generalization. Our analysis highlights that existing DG methods primarily act as regularization mechanisms focused on satisfying sufficient conditions, while often neglecting necessary ones. However, sufficient conditions cannot be verified in settings with limited training domains. In such cases, regularization targeting sufficient conditions aims to maximize the likelihood of generalization, whereas regularization targeting necessary conditions ensures its existence. Using this analysis, we reveal the shortcomings of existing DG algorithms by showing that, while they promote sufficient conditions, they inadvertently violate necessary conditions. To validate our theoretical insights, we propose a practical method that promotes the \textit{sufficient condition} while maintaining the \textit{necessary conditions} through a novel subspace representation alignment strategy. This approach highlights the advantages of preserving the necessary conditions on well-established DG benchmarks.

\end{abstract}

\section{Introduction}\label{sec:main_intro}

Domain generalization (DG) aims to train a machine learning model on multiple data distributions so that it can generalize to unseen data distributions. Although challenging, DG is crucial for practical scenarios where there is a need to quickly deploy a prediction model on a new target domain without access to target data. Various approaches have been proposed to address the DG problem, which can be broadly categorized into $4$ families: representation alignment, invariant prediction, augmentation and  ensemble learning. 

\textit{Representation alignment} which is established based on domain adaptation \citep{ben2010theory, ben2001support, phung2021learning, zhou2020deep, johansson2019support}, mainly discuss the differences between source and target domains and focuses on learning domain-invariant representations by reducing the divergence between latent marginal distributions \citep{long2017conditional, ganin2016domain, nguyen2021domain, shen2018wasserstein, xie2017controllable, ilse2020diva} or class-conditional distributions \citep{gong2016domain, li2018deep, tachet2020domain}. 


\textit{Invariant prediction} approaches, grounded in causality theory, ensure stable performance regardless of the domain by learning a consistently optimal classifier \citep{arjovsky2020irm, ahuja2020empirical, krueger2021out, li2022invariant,mitrovic2020representation, zhang2023causal}. 


\textit{Data augmentation} applies predefined or learnable transformations on the original samples or their features to create augmented data, thereby enhancing the model's generalization capabilities \citep{mitrovic2020representation, wang2022out, zhou2020deep, zhou2021domain, zhang2017mixup, wang2020heterogeneous, zhao2020maximum, yao2022improving, carluccidomain, yao2022pcl}.  With sufficient and diverse causal-preserving transformations, the model's generalization can be effectively guaranteed \citep{wang2020heterogeneous}.

Recently, \textit{Ensemble Learning}, which explicitly involves training multiple instances of the same architecture with different initializations or splits of the training data then combining their predictions \citep{zhou2021domain, ding2017deep, zhou2021domain, wang2020dofe, mancini2018best, cha2021swad, arpit2022ensemble}. Alternatively, implicit ensemble methods that approximate ensembling by averaging model weights (WA) from training trajectories (e.g., checkpoints at different time steps) have been shown to significantly enhance robustness under domain shifts \citep{izmailov2018averaging, cha2021swad, rame2022diverse, wortsman2022robust}.
Despite their strong performance in DG, most theoretical analyses of ensemble learning remain concentrated on the perspective of flatness of the loss landscape, or uncertainty-aware frameworks, leaving the connection between ensemble and DG largely underexplored.

Although the conventional DG algorithms (\textit{representation alignment, invariant prediction, data augmentation}) are developed with strong theoretical foundations, these methods have not consistently outperformed Empirical Risk Minimization (ERM) on fair model selection criteria \citep{gulrajani2020search, idrissi2022simple, ye2022ood, chen2022does}. In contrast, \textit{Ensemble}-based approaches (e.g., SWAD \citep{cha2021swad})  demonstrate a substantial performance improvement over the ERM  and other conventional DG algorithms by a large margin. We argue that this phenomenon arises because the theoretical frameworks behind conventional DG algorithms typically establish generalization under conditions where target domains are either known or sufficiently diverse, or when a large number of training domains are available. As a result, the extent to which domain generalization can be achieved in practical scenarios, characterized by a limited and finite number of domains, remains elusive unexplored. 
\begin{table*}[t]
\caption{Summary of Conditions for Generalization}
\begin{centering}
\resizebox{\linewidth}{!}{ %
\begin{tabular}{llll}
\toprule
\textbf{Condition} & \textbf{Type}  & \textbf{Target DG approach}\\
\midrule
Label-identifiability (\ref{as:label_idf}) &Assumption \\
\midrule
Causal support (\ref{as:sufficient_causal_support}) &  Assumption  \\
\midrule
Optimal hypothesis for training domains (\ref{def:joint_optimal})& Necessary  & \\
\midrule
Optimal hypothesis for training domains $+$ Invariant representation function (\ref{thm:sufficient_conditions}.1) &  Sufficient & Representation alignment \\
\midrule
Optimal hypothesis for training domains $+$ Sufficient and diverse domains (\ref{thm:sufficient_conditions}.2) & Sufficient & Invariant prediction \\
\midrule
Optimal hypothesis for training domains $+$ Invariance-preserving transformations (\ref{thm:sufficient_conditions}.3) & Sufficient & Data augmentation \\
\midrule
\textbf{Invariance-preserving representation function} (\ref{def:sufficient}) & \textbf{Necessary} & Ensembles \\
\bottomrule
\end{tabular}}
\par\end{centering}
\label{tab:conditions}
\end{table*}
Our work aims to fill in this gap with a comprehensive study of DG in scenarios with limited training domains through the lens of necessary and sufficient conditions for achieving generalization. The contributions of this paper are summarized as follows:

\textbf{1. DG through the lens of Necessity and Sufficiency.} We systematically establish a set of necessary and sufficient conditions for generalization, highlighting that existing DG methods act as regularization mechanisms that predominantly focus on satisfying sufficient conditions while often neglecting necessary ones (see Section \ref{sec:main_conds}.1). However, sufficient conditions are non-verifiable with limited training domains. In such cases, we argue that regularization targeting sufficient conditions aims to maximize the likelihood of generalization, while regularization targeting necessary conditions ensures its existence (see Section \ref{sec:main_conds}.2). 


\textbf{2. Why do conventional DG algorithms fail?} In Section~\ref{sec:discussion_DG}, we explain why conventional DG algorithms fail to consistently outperform the ERM baseline by demonstrating that, while they promote sufficient conditions, they may inadvertently violate necessary conditions.
We also establish a connection between the necessary conditions and the recent \textit{ensemble} strategy, demonstrating that \textit{ensemble} methods indeed encourage models to satisfy these conditions.

  
\textbf{3. Preserving Necessity in Representation Alignment.} Finally, we empirically validate our theories by introducing a practical method that encourages the \textit{sufficient condition} \underline{without violating} the \textit{necessary conditions} through a novel subspace representation alignment strategy. Our approach achieves superior performance across all experimental settings (see Section~\ref{sec:main_proposed_method}).



\section{Preliminaries}
We first introduce the notations and basic concepts in the paper. We use calligraphic letters (i.e., $\mathcal{X}$) for spaces, upper case letters (i.e. $X$) for random variables, lower case letters (i.e. $x$) for their values, $\mathbb{P}$ for probability distributions and $\text{supp}(\cdot)$ specifies the support set of a distribution.

\subsection{Revisiting Domain Generalization setting}


We consider a standard domain generalization setting with a potentially high-dimensional variable $X$ (e.g., an image), a label variable $Y$ and a discrete environment (or domain)
variable $E$ in the sample spaces $\mathcal{X}, \mathcal{Y}$, and $\mathcal{E}$, respectively. Specifically, we focus on a multi-class classification problem with the label set $\mathcal{Y}=\left[C\right]$,
where $C$ is the number of classes and $\left[C\right]:=\{1,\ldots,C\}$. Denote $\mathcal{Y}_{\Delta}:=\left\{ \alpha\in\mathbb{R}^{C}:\left \| \alpha \right \|_{1}=1\,\wedge\,\alpha\geq 0\right\} $
as the $C-$simplex, let $f:\mathcal{X}\mapsto\mathcal{Y}_{\Delta}$
be a hypothesis predicting a $C$-tuple $f\left(x\right)=\left[f\left(x\right)[i]\right]_{i=1}^{C}$,
whose element $f\left(x\right)[i]=p\left(y=i\mid x\right)$,  
is the probability to assign a data sample $x\sim\mathbb{P}$
to the class $i$ with $i\in\left\{ 1,...,C\right\} $. Let $l:\mathcal{Y}_{\Delta}\times\mathcal{Y}\mapsto\mathbb{R}$ be
a loss function, where $l\left(f\left(x\right),y\right)$ with $f\left(x\right)\in\mathcal{Y}_{\Delta}$ and $y\in\mathcal{Y}$
specifies the loss (e.g., cross-entropy, Hinge, L1, or L2 loss) to
assign a data sample $x$ to the class $y$ by the hypothesis $f$. The general 
loss of the hypothesis $f$ w.r.t. a given domain $\mathbb{P}^e$ is:
\begin{equation}
\mathcal{L}\left(f,\mathbb{P}^e\right):=\mathbb{E}_{\left(x,y\right)\sim\mathbb{P}^e}\left[\ell\left(f\left(x\right),y\right)\right].   
\end{equation}

\color{black}

\textit{Objective}: Given a set of training domains $\mathcal{E}_{tr}=\{e_1,...,e_K\} \subset \mathcal{E}$, the objective of DG is to exploit the `commonalities' present in the training domains to improve generalization to any domain of the population $e\in \mathcal{E}$. For supervised classification, the task is equivalent to seeking the set of \textbf{global optimal hypotheses} $\mathcal{F}^{*}$ 
where every $f\in \mathcal{F}^*$ is locally optimal for every domain:
    \begin{equation}
    \mathcal{F}^{*} := \bigcap_{{e}\in \mathcal{E}}\underset{f\in \mathcal{F}}{\text{argmin}} \ \loss{f,\mathbb{P}^{e}}
    \label{eq:optimal}
\end{equation}



\color{black}

We here examine the widely used {\it composite hypothesis} $f = h \circ g \in \mathcal{F}$, where $g : \mathcal{X} \rightarrow \mathcal{Z}$ belongs to a set of representation functions $\mathcal{G}$, mapping the data space $\mathcal{X}$ to a latent space $\mathcal{Z}$, and $h : \mathcal{Z} \rightarrow \mathcal{Y}_{\Delta}$ is the classifier in the space $\mathcal{H}$. 

\textbf{Presumption}. While our work considers limited and a finite number of domains, we follow recent theoretical works \citep{wang2022provable, rosenfeld2020risks, kamath2021does, ahuja2021invariance, chen2022iterative} assuming the infinite data setting for every training environment. This assumption distinguishes DG literature from traditional generalization analysis (e.g., PAC-Bayes framework) that focuses on in-distribution generalization where the testing data are drawn from the same distribution.

\subsection{Assumptions on Data Generation Process}
We consider the following family of distributions over the observed variables $(X,Y)$ given the environment $E=e \in \mathcal{E}$ where environment space under consideration $\mathcal{E} = \{ e \mid \mathbb{P}^e \in \mathcal{P} \}$:
\vspace{-2mm}
\begin{equation*}
    \mathcal{P}=\left \{\mathbb{P}^e(X, Y)=\int_{z_c}\int_{z_e}\mathbb{P}(X, Y, Z_c,Z_e, E=e)d z_c d z_e\right \}
\end{equation*}
The data generative process underlying every observed distribution $\mathbb{P}^e(X, Y)$ is characterized by a \textit{structural causal model} (SCM) over a tuple  $\left \langle V, U, \psi \right \rangle$ (See Figure~\ref{fig:graph}). The SCM consists of a set of \textit{endogenous} variables $V = \{X, Y, Z_c, Z_e, E\}$, a set of mutually independent \textit{exogenous} variables $U = \{U_x, U_y, U_{z_c}, U_{z_e}, U_e\}$ associated with each variable in $V$ and a set of deterministic equations $\psi = \{\psi_x, \psi_y, \psi_{z_c}, \psi_{z_e}, \psi_e\}$ representing the generative process for $V$. We note that this generative structure has been widely used and extended in several other studies, including \citep{chang2020invariant, mahajan2021domain, li2022invariant, zhang2023causal, lu2021invariant,liu2021heterogeneous}. 

The generative process begins with the sampling of an environmental variable $e$ from a prior distribution $\mathbb{P}(U_e)$\footnote{explicitly via the equation $e = \psi_e(u_e), u_e \sim P(U_e)$.}. We assume there exists a causal factor $z_c\in\mathcal{Z}_c$ determining the label $Y$ and a environmental feature $z_e\in\mathcal{Z}_e$ \textit{spuriously} correlated with $Y$. 
These two latent factors are 
generated from an environment $e$ via the mechanisms $z_c = \psi_{z_c}(e, u_{z_c})$ and $z_e = \psi_{z_e}(e, u_{z_e})$ with $u_{z_c} \sim \mathbb{P}(U_{z_c}), u_{z_e} \sim \mathbb{P}(U_{z_e})$. A data sample $x\in \mathcal{X}$ is generated from both the causal feature and the environmental feature i.e., $x = \psi_{x}(z_c, z_e, u_{x})$ with $u_x \sim \mathbb{P}(U_x)$.

\begin{figure}
    \centering
    \includegraphics[width=0.5\linewidth]{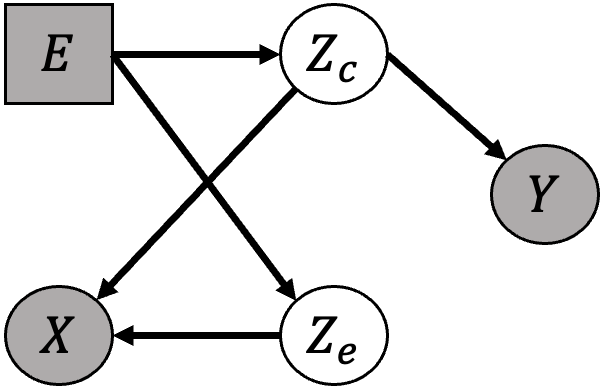}
    \caption{A directed acyclic graph (DAG) describing the causal relations among different factors producing data $X$ and label $Y$ in our SCM. Observed variables are shaded.}
    \label{fig:graph}
\end{figure}



Figure \ref{fig:graph} dictates that the joint distribution over $X$ and $Y$ can vary across domains resulting from the variations in the distributions of $Z_c$ and $Z_e$. Furthermore, \textit{both causal and environmental features are correlated with $Y$, but only $Z_c$ causally influences $Y$}.  However, because $Y \perp\!\!\!\perp E | Z_c$, the conditional distribution of $Y$ given a specific $Z_c = z_c$ remains unchanged across different domains i.e., $\mathbb{P}^e(Y | Z_c = z_c) = \mathbb{P}^{e'}(Y | Z_c = z_c)$ $\forall e, e' \in \mathcal{E}$. For readability, we omit the superscript $e$ and denote this invariant conditional distribution as $\mathbb{P}(Y | Z_c = z_c)$. 
\subsection{Assumptions on Possibility of Generalization}

We first establish crucial assumptions for the feasibility of generalization as described in Eq (\ref{eq:optimal}). These assumptions are essential for understanding the conditions under which generalization can be achieved.


\begin{assumption} (Label-identifiability). We assume that for any pair $z_c, z^{'}_c\in \mathcal{Z}_c$,  $\mathbb{P}(Y|Z_c=z_c) = \mathbb{P}(Y|Z_c=z^{'}_c) \text{ if } \psi_x(z_c,z_e,u_x)=\psi_x(z_c',z'_e,u'_x)$ for some $z_e, z'_e, u_x, u'_x$
\label{as:label_idf}.
\end{assumption}

The causal graph indicates that $Y$ is influenced by $z_c$, making $Y$ identifiable over the distribution $\mathbb{P}(Z_c)$. This assumption implies that different causal factors $z_c$ and $z^{'}_c$ cannot yield the same $x$, unless the condition $\mathbb{P}(Y|Z_c=z_c) = \mathbb{P}(Y|Z_c=z^{'}_c)$ holds, or  the distribution $\mathbb{P}(Y\mid x)$ is stable. 

Assumption \ref{as:label_idf} gives rise to a family of invariant representation function $\mathcal{G}_c$, as stated in Proposition \ref{thm:invariant_correlation} below. 

\begin{proposition} (Invariant Representation Function)
Under Assumption.\ref{as:label_idf}, there exists a set of deterministic representation function $(\mathcal{G}_c\neq \emptyset)\in \mathcal{G}$ such that for any $g\in \mathcal{G}_c$, $\mathbb{P}(Y\mid g(x)) = \mathbb{P}(Y\mid z_c)$ and $g(x)=g(x')$ holds true for all $\{(x,x',z_c)\mid  x= \psi_x(z_c, z_e, u_x), x'= \psi_x(z_c, z^{'}_e, u^{'}_x) \text{ for all }z_e,z^{'}_e, u_x, u^{'}_x\}$ (Appendix \ref{thm:invariant_correlation_apd}).   
\label{thm:invariant_correlation}
\end{proposition}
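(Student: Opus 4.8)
The plan is to prove the claim constructively: rather than characterizing all of $\mathcal{G}_c$, I would exhibit one explicit representation function $g^\star$, verify it satisfies both stated properties, and thereby conclude $\mathcal{G}_c \neq \emptyset$. The structural fact I would lean on is the conditional independence $Y \perp\!\!\!\perp X \mid Z_c$, which follows from the DAG in Figure~\ref{fig:graph} by d-separation: every path from $X$ to $Y$ routes through $Z_c$ (either $X \leftarrow Z_c \to Y$, or via $E$ and $Z_e$), so conditioning on $Z_c$ blocks all of them. Consequently, for any $x = \psi_x(z_c, z_e, u_x)$ we obtain $\mathbb{P}(Y \mid X = x) = \mathbb{P}(Y \mid Z_c = z_c)$, the invariant conditional introduced above.

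The natural candidate is the Bayes posterior $g^\star(x) := \mathbb{P}(Y \mid X = x)$, viewed as a point of $\mathcal{Y}_{\Delta}$ (post-composed, if needed, with a fixed injection into $\mathcal{Z}$ so that $g^\star \in \mathcal{G}$). For the invariance property, take $x = \psi_x(z_c, z_e, u_x)$ and $x' = \psi_x(z_c, z'_e, u'_x)$ sharing the same $z_c$; the identity above gives $g^\star(x) = \mathbb{P}(Y \mid Z_c = z_c) = g^\star(x')$, so $g^\star$ is constant on the fibre of each causal factor. For the label-preservation property, I would apply the tower rule: for any value $v$ in the range of $g^\star$, $\mathbb{P}(Y = y \mid g^\star(X) = v) = \mathbb{E}\!\left[\,\mathbb{P}(Y = y \mid X) \mid g^\star(X) = v\,\right] = \mathbb{E}\!\left[\,g^\star(X)[y] \mid g^\star(X) = v\,\right] = v[y]$, since $g^\star(X)$ equals $v$ on the conditioning event. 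Taking $v = \mathbb{P}(Y \mid Z_c = z_c)$ yields $\mathbb{P}(Y \mid g^\star(x)) = \mathbb{P}(Y \mid Z_c = z_c)$, which is exactly the first claimed property.

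The step I expect to be the crux is well-definedness, and this is precisely where Assumption~\ref{as:label_idf} enters. The map $x \mapsto z_c$ need not be single-valued: $\psi_x$ may collapse distinct causal factors $z_c \neq z'_c$ onto the same observation $x$. Were their label conditionals to disagree, the identity $g^\star(x) = \mathbb{P}(Y \mid Z_c = z_c)$ would be ambiguous and the invariance property could fail across fibres that happen to intersect. Assumption~\ref{as:label_idf} rules this out exactly: whenever $\psi_x(z_c, z_e, u_x) = \psi_x(z'_c, z'_e, u'_x)$ it forces $\mathbb{P}(Y \mid Z_c = z_c) = \mathbb{P}(Y \mid Z_c = z'_c)$, so the posterior $g^\star(x)$ coincides with $\mathbb{P}(Y \mid Z_c = z_c)$ no matter which causal pre-image is chosen. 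Finally, I would note that the codomain is a non-issue: any $g$ obtained by post-composing $g^\star$ with an injective reparametrization preserves both properties, so the entire family $\mathcal{G}_c$ is non-empty as asserted.
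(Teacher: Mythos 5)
Your proof is correct, but it takes a genuinely different route from the paper's. The paper's proof (Appendix~\ref{thm:invariant_correlation_apd}) constructs $g_c$ as a selector into the causal factor space itself: $g_c(x)$ is chosen from the causal preimage $\left\{ z_c \mid x=\psi_x(z_c,z_e,u_x) \text{ for some } z_e,u_x\right\}$, and Assumption~\ref{as:label_idf} guarantees that every admissible selection carries the same label conditional, so $\mathbb{P}(Y\mid g_c(x))=\mathbb{P}(Y\mid z_c)$ holds whichever representative is picked. You instead take the Bayes posterior $g^\star(x)=\mathbb{P}(Y\mid X=x)$, identify it with $\mathbb{P}(Y\mid Z_c=z_c)$ via d-separation together with Assumption~\ref{as:label_idf}, and obtain the first property from the tower-rule calibration identity $\mathbb{P}(Y\mid g^\star(X)=v)=v$. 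Each approach buys something. Yours makes the fibre-constancy property $g(x)=g(x')$ automatic, since $g^\star$ depends on $x$ only through the invariant conditional of its causal fibre; the paper's selector construction does not literally guarantee this (an arbitrary deterministic choice could assign distinct, merely label-equivalent, preimages to $x$ and $x'$ sharing the same $z_c$), a consistency point its one-line proof glosses over. Conversely, the paper's $g_c$ is valued in $\mathcal{Z}_c$, which is what Corollary~\ref{cor:proterties} (property 1) asserts of all members of $\mathcal{G}_c$ and what the proof of Theorem~\ref{thm:sufficient_conditions_apd}.1 exploits when it writes $g_c(x)=T(z_c)$ and feeds causal factors to the classifier; your $g^\star$ lives in the simplex $\mathcal{Y}_\Delta$ (modulo an injection into $\mathcal{Z}$), so it satisfies the proposition exactly as stated but sits in tension with that later characterization of $\mathcal{G}_c$.
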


Assumption \ref{as:label_idf} and Proposition~\ref{thm:invariant_correlation} license the existence of global optimal hypotheses as defined in Eq. (\ref{eq:optimal}).



\begin{assumption} (Causal support). We assume that the union of the support of \textit{causal} factors across training domains covers the entire causal factor space $\mathcal{Z}_c$, i.e., $\cup_{e\in \mathcal{E}_{tr}}\text{supp}\{\mathbb{P}^{e} \left (Z_c \right )\}=\mathcal{Z}_c$. 
\label{as:sufficient_causal_support}
\end{assumption}


This assumption holds significance in DG theories \citep{johansson2019support, ruan2021optimal, li2022sparse}, especially when we avoid imposing strict constraints on the target functions. Particularly, \citep{ahuja2021invariance} showed that without the support overlap assumption on the causal features, OOD generalization is impossible for such a simple linear setting. Meanwhile, for more complicated tasks, deep neural networks are typically employed. However, when trained via gradient descent, they cannot effectively approximate a broad spectrum of nonlinear functions beyond their support range \citep{xu2020neural}. It is worth noting that causal support overlap does not imply that the distribution over the causal features is held unchanged.

\section{DG: A view of Necessity and
Sufficiency}\label{sec:main_conds}

In this section, we present the necessary and sufficient conditions for achieving generalization defined in Eq. (\ref{eq:optimal}) (See Table \ref{tab:conditions} for summary). These conditions are critical to our analysis, where we first reveal that the existing DG methods aim to satisfy one or several of these necessary and sufficient conditions to achieve generalization. 

\subsection{Conditions for Generalization}

We begin with the definition of the global optimal hypothesis in Eq. (\ref{eq:optimal}), where $f\in \mathcal{F}^*$
  must also be the optimal solution across all domains. Thus, $f$ must first satisfy the requirement of being the optimal hypothesis for all training domains, which can be considered a \textit{necessary condition} for generalization, formally stated as follows:

\begin{definition} \textit{(Necessary-Condition-1: Optimal hypothesis for training domains) Given  $\mathcal{F}_{\mathbb{P}^e}=\underset{f\in \mathcal{F}}{\text{argmin}} \ \loss{f,\mathbb{P}^{e}}$ is set of optimal hypothesis for $\mathbb{P}^{e}$, the optimal hypothesis for all training domains is defined as $f\in\mathcal{F}_{\mathcal{E}_{tr}} = \bigcap_{{e}\in \mathcal{E}_{tr}}\mathcal{F}_{\mathbb{P}^e}$.}
\label{def:joint_optimal}
\end{definition}

However, $f \in \mathcal{F}_{\mathcal{E}_{tr}}$ is not sufficient to guarantee that $f \in \mathcal{F}^*$ as a global optimal hypothesis. This drives the exploration of analyses and algorithms that identify the conditions, objectives or constraints required to truly achieve generalization. The following theorem highlights that conventional DG algorithms predominantly focus on satisfying sufficient conditions for generalization.


\begin{theorem} (Sufficient conditions) Under Assumption \ref{as:label_idf} and Assumption \ref{as:sufficient_causal_support}, given a hypothesis $f=h\circ g$, if $f$ is optimal hypothesis for training domains i.e.,
\begin{equation*}
    f\in \bigcap_{{e}\in \mathcal{E}_{tr}}\underset{f\in \mathcal{F}}{\text{argmin}} \ \loss{f,\mathbb{P}^{e}}
\end{equation*}
and one of the following sub-conditions holds:
\begin{enumerate}
    \item $g$ belongs to the set of \textbf{invariant representation functions} as specified in Proposition~\ref{thm:invariant_correlation}.
    
    \item $\mathcal{E}_{tr}$ is a set of \textbf{Sufficient and diverse training domains} i.e., the union of the support of joint causal and spurious factors across training domains covers the entire causal and spurious factor space $\mathcal{Z}_c\times\mathcal{Z}_e$ i.e., $\cup_{e\in \mathcal{E}_{tr}}\text{supp}\{\mathbb{P}^{e} \left (Z_c, Z_e \right )\}=\mathcal{Z}_c\times\mathcal{Z}_e$.
    
    \item Given $\mathcal{T}$ is set of all \textbf{invariance-preserving transformations} such that for any $T\in \mathcal{T}$ and $g_c\in \mathcal{G}_c$: $(g_c\circ T)(\cdot)=g_c(\cdot)$, $f$ is also an optimal hypothesis on all augmented domains i.e., $$f\in \bigcap_{{e}\in \mathcal{E}_{tr}, T\in \mathcal{T}}\underset{f\in \mathcal{F}}{\text{argmin}} \ \loss{f,T\#\mathbb{P}^{e}}$$
\end{enumerate}
Then $f\in \mathcal{F}^*$ (Proof is in Appendix \ref{thm:sufficient_conditions_apd})
\label{thm:sufficient_conditions}.
\end{theorem}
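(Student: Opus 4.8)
The plan is to reduce all three sub-conditions to one common support-coverage argument resting on a single lemma: under Assumption~\ref{as:label_idf} the conditional law $\mathbb{P}^e(Y\mid X=x)$ does not depend on the domain $e$. Granting this, for any loss of the form $\mathcal{L}(f,\mathbb{P}^e)=\mathbb{E}_{(x,y)\sim\mathbb{P}^e}[\ell(f(x),y)]$ the risk splits into a pointwise conditional risk over $\text{supp}(\mathbb{P}^e(X))$, so the pointwise minimizer --- call it $f^*(x)$, the predictor induced by $\mathbb{P}(Y\mid X=x)$ --- is a \emph{single, domain-independent} function, and $\mathcal{F}_{\mathbb{P}^e}$ consists exactly of the hypotheses that agree with $f^*$ on $\text{supp}(\mathbb{P}^e(X))$. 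Hence $f\in\bigcup$-free membership $f\in\bigcap_{e\in\mathcal{E}_{tr}}\mathcal{F}_{\mathbb{P}^e}$ means precisely that $f=f^*$ on $S_{tr}:=\bigcup_{e\in\mathcal{E}_{tr}}\text{supp}(\mathbb{P}^e(X))$, and proving $f\in\mathcal{F}^*$ amounts to showing $f=f^*$ on $\text{supp}(\mathbb{P}^e(X))$ for \emph{every} $e\in\mathcal{E}$.

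To prove the lemma I would marginalize over the causal factor, $\mathbb{P}^e(Y\mid X=x)=\int_{z_c}\mathbb{P}(Y\mid Z_c=z_c)\,\mathbb{P}^e(Z_c=z_c\mid X=x)\,dz_c$, using $Y\perp\!\!\!\perp X\mid Z_c$ and $Y\perp\!\!\!\perp E\mid Z_c$ read off the SCM. The posterior $\mathbb{P}^e(Z_c\mid X=x)$ genuinely varies with $e$, but Assumption~\ref{as:label_idf} forces every $z_c$ in its support (i.e. every $z_c$ that can generate $x$) to share the \emph{same} value of $\mathbb{P}(Y\mid Z_c=z_c)$; the integral therefore collapses to that common value, which is $e$-free.

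The three sub-conditions then each supply the coverage needed to pass from $S_{tr}$ to an arbitrary target support. For (1) I would invoke Proposition~\ref{thm:invariant_correlation}: $g$ is constant on the fibre of each $z_c$. Given a target $e$ and $x\in\text{supp}(\mathbb{P}^e(X))$ with generating factor $z_c$, Assumption~\ref{as:sufficient_causal_support} puts $z_c$ in some training domain's causal support, so there is a training point $x'$ with the same $z_c$ and thus $g(x')=g(x)$; optimality on that domain gives $f(x')=f^*(x')$, and constancy of $g$ transports this to $f(x)=h(g(x))=h(g(x'))=f(x')=f^*(x)$. For (2), covering the full joint space $\cup_{e\in\mathcal{E}_{tr}}\text{supp}(\mathbb{P}^e(Z_c,Z_e))=\mathcal{Z}_c\times\mathcal{Z}_e$ together with the domain-independent exogenous law $\mathbb{P}(U_x)$ in $x=\psi_x(z_c,z_e,u_x)$ yields $\text{supp}(\mathbb{P}^e(X))\subseteq S_{tr}$ for every $e$, so agreement on $S_{tr}$ is agreement everywhere. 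For (3), since $\mathcal{T}$ is the set of \emph{all} invariance-preserving transformations and each $T$ fixes the causal representation ($g_c\circ T=g_c$), the augmented support $\cup_{e\in\mathcal{E}_{tr},\,T\in\mathcal{T}}\text{supp}(T\#\mathbb{P}^e)$ sweeps out every sample whose causal factor is covered; combined with Assumption~\ref{as:sufficient_causal_support} this is all of $\mathcal{X}$, while $g_c\circ T=g_c$ guarantees the augmented label conditional still equals $f^*$, so optimality on the augmented domains pins $f=f^*$ on the enlarged support.

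The step I expect to be the main obstacle is the reduction ``global optimality $\Rightarrow$ pointwise optimality on the support,'' and the associated claim that $\mathcal{F}_{\mathbb{P}^e}$ is characterized by agreement with $f^*$ on $\text{supp}(\mathbb{P}^e(X))$. This needs (a) an interchange of minimization and integration, which requires $\mathcal{F}$ to be rich enough to realize the pointwise optimum (or the argument to be run over the idealized class of measurable predictors), and (b) care with the heterogeneous losses the paper permits: for strictly proper losses such as cross-entropy the pointwise minimizer is literally $\mathbb{P}(Y\mid X=x)$, whereas for margin- or $L_1$-type losses one only recovers the loss-specific optimal statistic of the conditional --- but since the whole conditional $\mathbb{P}^e(Y\mid X=x)$ is domain-invariant by the lemma, \emph{whatever} statistic the loss selects is itself domain-invariant, and the coverage arguments go through verbatim. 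A secondary subtlety is the measure-theoretic ``there exists a training point $x'$ with $g(x')=g(x)$'' in case (1), which must be stated modulo null sets using the fibre structure guaranteed by Proposition~\ref{thm:invariant_correlation}.
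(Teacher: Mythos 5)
Your proposal is correct, and it reaches the same destination by a genuinely different (and in some respects cleaner) decomposition than the paper's. Your pivot is the lemma that Assumption~\ref{as:label_idf} makes the conditional law $\mathbb{P}^e(Y\mid X=x)$ domain-free, so every domain risk shares one pointwise conditional-risk minimizer and each $\mathcal{F}_{\mathbb{P}^e}$ is characterized by pointwise optimality on $\text{supp}(\mathbb{P}^e(X))$; all three sub-conditions then collapse into support-coverage arguments over $\mathcal{X}$. The paper never states this lemma (it only remarks informally after Assumption~\ref{as:label_idf} that $\mathbb{P}(Y\mid x)$ is ``stable''); instead, its Appendix proof of sub-condition~1 pushes the risk through the SCM into an integral over the \emph{causal-factor space}, invoking the properties of invariant representation functions (Corollary~\ref{cor:proterties}: $g_c$ maps fibres to equivalent causal factors, and for cross-entropy there is a single classifier $h^*$ pointwise optimal at every $z_c$), and then argues the classifier component must coincide with $h^*$ on all of $\mathcal{Z}_c$ via causal support. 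Your case~1 replaces that machinery with the fibre-constancy transport $f(x)=h(g(x))=h(g(x'))=f(x')$, which is structurally the same transfer step but stays in $\mathcal{X}$ and avoids the equivalence-transformation bookkeeping. For sub-conditions~2 and~3 the two arguments essentially coincide: the paper proves~2 by noting full joint support makes training optimality pointwise optimality everywhere, and proves~3 by showing that access to all invariance-preserving transformations regenerates full joint support, i.e.\ reduces to~2 --- exactly your coverage reading. What your route buys is uniformity: one lemma drives all three cases, and it handles the paper's heterogeneous losses gracefully (the loss-specific optimal statistic of a domain-invariant conditional is itself domain-invariant), whereas the paper's property-3 argument is stated for cross-entropy. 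What the paper's route buys is an explicit picture of how optimal hypotheses factor through $\mathcal{Z}_c$, which it reuses later (e.g.\ in Corollary~\ref{thm:information}). Both treatments share the same unresolved measure-theoretic looseness --- ``optimal a.e.\ on the support'' versus optimality at points of a null fibre of $z_c$ --- which you at least flag explicitly and the paper silently elides.
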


Theorem \ref{thm:sufficient_conditions} states that the Necessary-Condition-1 ``optimal hypothesis for training domains" condition is the primary objective to achieve generalization and should not be violated when performing DG algorithms. The other three sub-conditions, \textit{Invariant Representation Function, Sufficient and Diverse Training Domains, and Invariance-Preserving Transformations}, are additional properties, or \textit{``constraints"}, to transfer the optimality on training domains to the true generalization on unseen domains. The theorem demonstrates that the main objective, combined with any one of these sub-conditions, forms a sufficient condition for the solution $f$ to generalize. 

Note that the high-level findings corresponding to each sufficient condition in theorem above are not new. However, generalizing these results within our framework in the form of sufficient conditions provides a fresh perspective on the functioning of DG algorithms, enabling a deeper analysis of when and why they succeed or fail.  Particularly, each sub-condition corresponds to a conventional DG strategy, as follows:
\begin{itemize}
    \item \textit{Invariant Representation Function}: This is the core objective of the \textit{representation alignment} methods \cite{ben2010theory, lu2021invariant,zhang2023causal}.
    
    \item \textit{Sufficient and Diverse Training Domains}: This constraint is independent of specific algorithms but appears as the backbone in the analysis of \textit{invariant prediction}-based algorithms \citep{ ahuja2021invariance}.
    
    \item \textit{Invariance-Preserving Transformations}: This ensures generalization for the family of \textit{augmentation}-based DG algorithms \citep{mitrovic2020representation,gao2023out}.
\end{itemize}

However, satisfying these sub-conditions (or constraints) is often impractical in scenarios with a limited number of training domains. In practice, these constraints act as regularization mechanisms, shaping the feasible optimal hypothesis space. To gain deeper insights into the dynamics of the hypothesis space under such limitations, we shift our focus to another necessary condition, \textit{the invariance-preserving representation function}, which is crucial yet frequently overlooked in the DG literature. It is formally defined as follows:





\begin{definition}\textit{(Necessary-Condition-2: Invariance-preserving representation function) A set of representation functions $\mathcal{G}_s\subset \mathcal{G}$ is considered as invariance-preserving representation functions if for any $g\in\mathcal{G}_s$,
$I(g(X),g_c(X))=I(X,g_c(X)$ where $g_c$ is an invariant representation function (i.e., $g(X)$ retains all the information about the invariant features of $g_c(X)$ from $X$). }
\label{def:sufficient}
\end{definition}

Recall that an invariant representation function seeks to extract \textit{exact} invariant representations that remain consistent across all environments. In contrast, an invariance-preserving representation function captures representations that contain invariant information, along with potentially other information, rather than strictly enforcing invariance. 

\begin{theorem} Given a representation function $g$,
$\exists h: h\circ g\in \mathcal{F}^*$ if and only if $g\in \mathcal{G}_s$. (Proof is in Appendix~\ref{thm:nacessary_apd})
\label{thm:nacessary}
\end{theorem}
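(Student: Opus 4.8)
The plan is to prove the two directions of the iff separately, with the central bridge being the data-processing / information-preservation characterization of $\mathcal{G}_s$ in Definition~\ref{def:sufficient} together with the invariance characterization from Proposition~\ref{thm:invariant_correlation}.

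\textbf{Sufficiency ($g\in\mathcal{G}_s \Rightarrow \exists h:\,h\circ g\in\mathcal{F}^*$).} Suppose $g\in\mathcal{G}_s$, so that $I(g(X),g_c(X))=I(X,g_c(X))$ for some invariant $g_c\in\mathcal{G}_c$. First I would argue that this mutual-information equality forces $\mathbb{P}(Y\mid g(X))=\mathbb{P}(Y\mid z_c)$ to be recoverable from $g(X)$: since $g(X)$ retains all information $X$ carries about $g_c(X)$, and since by Proposition~\ref{thm:invariant_correlation} we have $\mathbb{P}(Y\mid g_c(X))=\mathbb{P}(Y\mid z_c)$, the label posterior factors through $g(X)$. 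Concretely, I would show there is a deterministic map $r$ with $g_c(X)=r(g(X))$ almost surely (the information-preservation condition makes $g_c(X)$ a measurable function of $g(X)$ on the common support), and then define $h = h_c\circ r$, where $h_c$ is the Bayes-optimal classifier on the invariant feature, i.e. $h_c(z_c)=\mathbb{P}(Y\mid z_c)$. Then $h\circ g$ reproduces the invariant posterior on every domain. Using Assumption~\ref{as:sufficient_causal_support} (causal support covers $\mathcal{Z}_c$) I would conclude that matching the invariant posterior on the training support in fact matches it on all of $\mathcal{Z}_c$, hence $h\circ g$ minimizes $\loss{\cdot,\mathbb{P}^e}$ for every $e\in\mathcal{E}$ simultaneously, giving $h\circ g\in\mathcal{F}^*$ by Eq.~(\ref{eq:optimal}).

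\textbf{Necessity ($\exists h:\,h\circ g\in\mathcal{F}^* \Rightarrow g\in\mathcal{G}_s$).} For the converse I would argue by contraposition: assume $g\notin\mathcal{G}_s$, i.e. $I(g(X),g_c(X))<I(X,g_c(X))$, meaning $g$ discards some information about the invariant feature. The goal is to exhibit two inputs $x,x'$ that share the same causal factor $z_c$ but with $g(x)=g(x')$ collapsing two points whose Bayes-optimal labels differ, OR conversely to show $g$ merges distinct invariant content so that no classifier $h$ on top of $g$ can reproduce the invariant posterior everywhere. Since the strict loss of information means there exist $z_c\neq z_c'$ mapped by $g$ to indistinguishable representations while $\mathbb{P}(Y\mid z_c)\neq\mathbb{P}(Y\mid z_c')$ (here Assumption~\ref{as:label_idf}, label-identifiability, is what guarantees distinct causal factors carry distinguishable label information), any $h$ must assign the same prediction to both, incurring irreducible excess risk on at least one domain whose support (via Assumption~\ref{as:sufficient_causal_support}) contains the offending $z_c$. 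Hence no $h$ achieves global optimality, contradicting $h\circ g\in\mathcal{F}^*$.

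\textbf{Main obstacle.} The delicate step is the necessity direction: translating the quantitative statement ``$I(g(X),g_c(X))<I(X,g_c(X))$'' into the qualitative combinatorial fact that $g$ \emph{merges} two invariant-distinguishable points. A strict drop in mutual information does not by itself pinpoint which points are confused, and I must rule out the possibility that the lost information is label-irrelevant (i.e. relates only to $z_e$). The resolution is to use that the lost information is specifically about $g_c(X)$ — the invariant, label-relevant feature — so the mutual-information gap is exactly a gap in preserved \emph{label-relevant} content; combining this with label-identifiability (Assumption~\ref{as:label_idf}) ensures the confusion occurs between points with genuinely different posteriors, and causal support (Assumption~\ref{as:sufficient_causal_support}) ensures such points are actually realized in some domain so the excess risk is nonzero. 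Carefully formalizing ``retains all information about $g_c(X)$'' as measurability of $g_c(X)$ w.r.t. the $\sigma$-algebra generated by $g(X)$ is where the argument must be made rigorous.
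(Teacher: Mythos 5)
Your overall strategy mirrors the paper's proof in Appendix~\ref{thm:nacessary_apd} almost exactly: for sufficiency you factor the invariant feature through $g$ (your map $r$ plays the role of the paper's $\phi$, with $\phi\circ g\in\mathcal{G}_c$) and compose with the Bayes classifier $h_c$; for necessity you argue by contraposition that the lost information forces any classifier on top of $g$ to err on some set of points, which some domain then exposes (the paper makes this last step explicit by constructing adversarial target domains that place mass $1-\delta$ on the confused set $\mathcal{B}$, rather than appealing to Assumption~\ref{as:sufficient_causal_support} as you do — a harmless difference, since $\mathcal{F}^*$ quantifies over all $e\in\mathcal{E}$ and one is free to construct such domains).

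However, there is a genuine gap exactly at the step you yourself flag as the ``main obstacle,'' and your proposed resolution does not close it. You assert that Assumption~\ref{as:label_idf} ``guarantees distinct causal factors carry distinguishable label information.'' It does not: label-identifiability says that causal factors capable of generating the \emph{same} observation $x$ have \emph{equal} posteriors — the implication runs in the opposite direction — and the framework explicitly permits distinct causal factors $z_c\neq z'_c$ with $\mathbb{P}(Y\mid z_c)=\mathbb{P}(Y\mid z'_c)$ (the ``equivalent causal factors'' of Corollary~\ref{cor:proterties}, property 2). Consequently a strict drop $I(g(X),g_c(X))<I(X,g_c(X))$ can occur because $g$ merges two posterior-equivalent causal factors that the chosen $g_c$ happens to distinguish; such a $g$ still supports a globally optimal classifier, so your chain ``information loss $\Rightarrow$ merged points with different posteriors $\Rightarrow$ no optimal $h$'' breaks at its first link. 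The repair is not Assumption~\ref{as:label_idf} but the quantifier structure of Definition~\ref{def:sufficient}: $g\notin\mathcal{G}_s$ must be read as failing the information-preservation identity for \emph{every} $g_c\in\mathcal{G}_c$, in particular for a minimal $g_c$ whose value encodes exactly the posterior-equivalence class of $z_c$ and nothing more; losing information about \emph{that} $g_c$ does force $g$ to confuse points whose posteriors genuinely differ. This is what the paper's step ``there does not exist $\phi$ with $\phi\circ g\in\mathcal{G}_c$'' is implicitly accomplishing (if such a $\phi$ existed, $g$ would preserve the information of the member $\phi\circ g$ of $\mathcal{G}_c$), after which the nonempty error set $\mathcal{B}$ and the adversarial-domain construction complete the argument.
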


This theorem implies that if $g$ is not an invariance-preserving representation function, i.e., $g \notin \mathcal{G}_s$, no classifier $h$ can exist such that $f=h \circ g \in \mathcal{F}^*$. In other words, $g \in \mathcal{G}_s$ is a \textit{necessary} condition for $f \in \mathcal{F}^*$. This property is critical for understanding the generalization ability of DG algorithms, as discussed in the following section.

\subsection{Why conventional DG algorithms fail?}
\label{sec:efficacy_DG}

\begin{figure}[h!]
    \centering
    \includegraphics[width=1.0\linewidth]{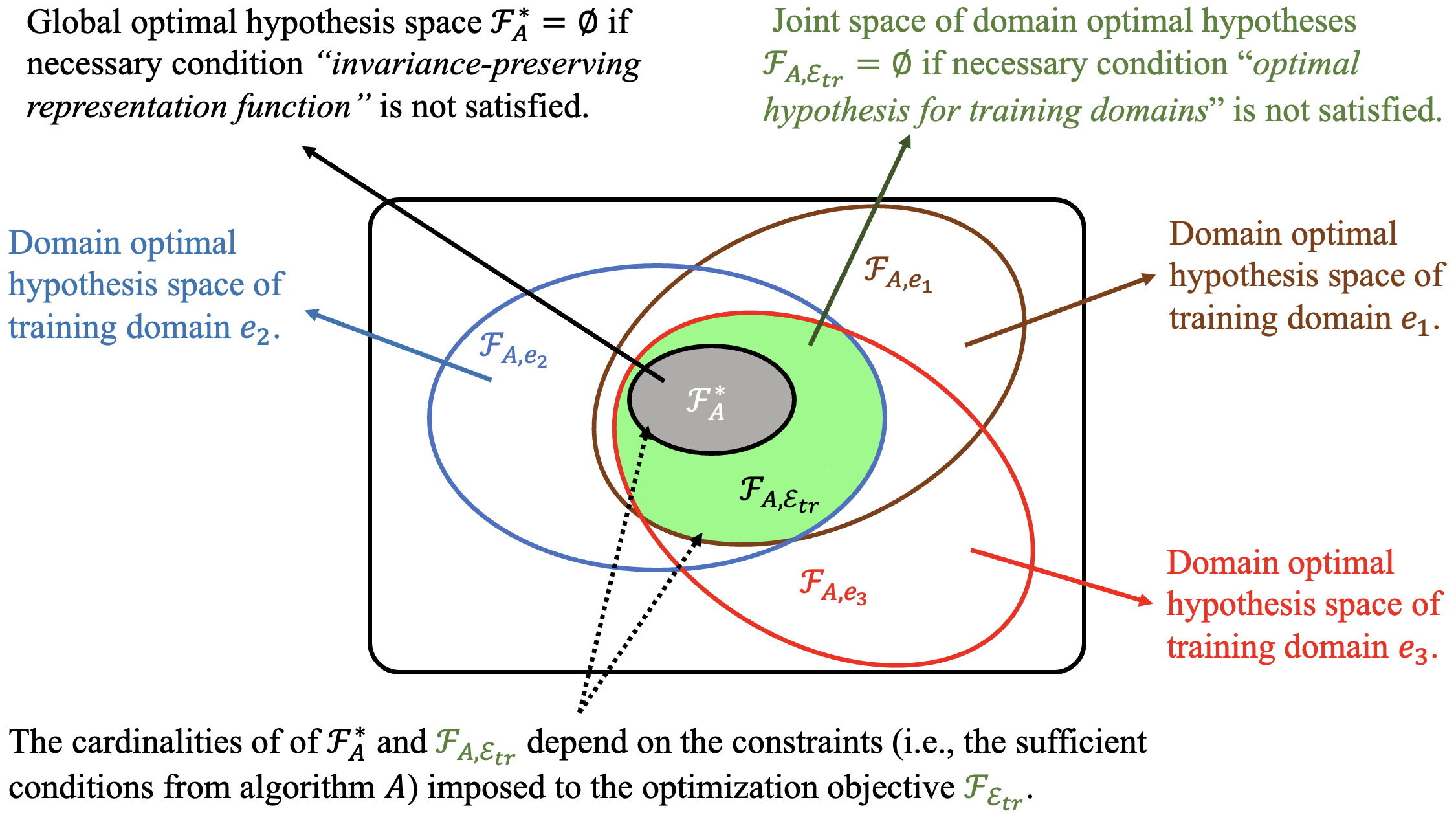}
    \vspace{-7mm}
    \caption{Venn diagram of the optimal hypothesis spaces induced by a DG algorithm $A$.}
    \vspace{-7mm}
\label{fig:space}
\end{figure}

In this Section, we illustrate the relationship between sufficient and necessary conditions using Venn diagrams of hypothesis spaces corresponding to each condition to understand how DG algorithms help generalization when sufficient conditions cannot be fully met. Additionally, we denote the feasible hypothesis spaces induced by applying a DG algorithm $A$ as $\mathcal{F}^*_{A},\mathcal{F}_{A,e_i}, \mathcal{F}_{A,\mathcal{E}_{tr}}$.

We start with two necessary conditions. By definition of the first necessary condition, "Optimal Hypothesis for Training Domains," we have 
$\mathcal{F}^*_A \subseteq \mathcal{F}_{A,\mathcal{E}_{tr}}$. Thus, in Figure~\ref{fig:space}, the global optimal hypothesis space $\mathcal{F}^*_A$ (grey area) lies within the joint domain-optimal hypothesis space $\mathcal{F}_{A,\mathcal{E}_{tr}}$ (green area). According to Theorem~\ref{thm:nacessary}, if the "Invariance-Preserving Representation Function" condition is not satisfied, the feasible global optimal hypothesis space becomes empty $\mathcal{F}^*_A=\emptyset$. In other words, satisfying the Necessary-Condition-2 ensures the existence of generalization.

Turning to sufficient conditions, Theorem~\ref{thm:sufficient_conditions} states that if these conditions (or constraints) are satisfied, a global optimal hypothesis is achieved, meaning the green area converges to the grey area. Otherwise, the constraints imposed by conventional DG algorithms act as regularization on the first necessary condition, primarily aiming to shrink the green area. If the grey area remains intact, reducing the green area increases the likelihood of achieving generalization. However, while restricting the set of feasible joint optimal hypotheses for the training domains (green area), these regularizations may inadvertently shrink the grey area. If the constraints are arbitrary or overly restrictive, there is a risk that the grey area reduces to null, ultimately causing the DG algorithms to fail.



\section{Understanding DG Literature via Necessity} 
 \label{sec:discussion_DG}

In the previous section, we examined the role of sufficient conditions and the importance of necessary conditions in ensuring generalization. Building on this, we analyze why existing DG methods fail by highlighting that while they promote sufficient conditions, they violate necessary ones.

\subsection{Representation Alignment} 

Representation Alignment focus on learning domain-invariant representations by reducing the divergence between latent marginal distributions $\mathbb{E}[g(X) | E]$ where $E$ represents a domain environment. Other methods seek to align the conditional distributions $\mathbb{E}[g(X) | Y=y, E]$ across domains. Achieving true invariance is inherently challenging and may impose overly restrictive conditions. In certain cases, better feature alignment can result in higher joint errors, as demonstrated in the following theorem:

\begin{theorem}
\label{theorem:single_tradeoff} \citep{zhao2019learning, phung2021learning, le2021lamda} Distance between two marginal distribution $\mathbb{P}^{e}_\mathcal{Y}$ and $\mathbb{P}^{e'}_\mathcal{Y}$ can be upper-bounded: 
\vspace{-1mm}
\begin{equation*}
\begin{aligned}
D\left(\mathbb{P}_{\mathcal{Y}}^{e},\mathbb{P}_{\mathcal{Y}}^{e'}\right) \leq 
D\left ( g_{\#}\mathbb{P}^{e},g_{\#}\mathbb{P}^{e'} \right )
+\mathcal{L}\left ( f,\mathbb{P}^{e} \right )
+
\mathcal{L}\left ( f,\mathbb{P}^{e'} \right )
\end{aligned}
\end{equation*}
where $g_{\#}\mathbb{P}(X)$ denotes representation distribution on  representation space $\mathcal{Z}$ induced by applying encoder with $g: \mathcal{X} \mapsto \mathcal{Z}$ on data distribution $\mathbb{P}$. $D$ can be $\mathcal{H}$-divergence \citep{zhao2019learning}, Hellinger distance \citep{phung2021learning} or Wasserstein distance \citep{le2021lamda} (see Appendix~\ref{apd:tradeoff}).
\end{theorem}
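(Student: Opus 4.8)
The plan is to obtain the bound from a single application of the triangle inequality for $D$, using the predicted label distributions induced by the full hypothesis $f = h\circ g$ as intermediate anchors, and then controlling the three resulting terms separately. Write $f_{\#}\mathbb{P}$ for the law of the soft prediction $f(X)\in\mathcal{Y}_{\Delta}$ when $X\sim\mathbb{P}$, so that $f_{\#}\mathbb{P}^{e}$ and $f_{\#}\mathbb{P}^{e'}$ are the predicted label distributions on the two domains. Inserting these between the two true label marginals gives
\begin{align*}
D\!\left(\mathbb{P}^{e}_{\mathcal{Y}},\mathbb{P}^{e'}_{\mathcal{Y}}\right)
&\leq D\!\left(\mathbb{P}^{e}_{\mathcal{Y}},f_{\#}\mathbb{P}^{e}\right)
+ D\!\left(f_{\#}\mathbb{P}^{e},f_{\#}\mathbb{P}^{e'}\right)\\
&\quad + D\!\left(f_{\#}\mathbb{P}^{e'},\mathbb{P}^{e'}_{\mathcal{Y}}\right),
\end{align*}
which is valid whenever $D$ satisfies a triangle inequality. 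It then remains to identify the middle term with the representation-distance term and the two outer terms with the respective domain losses.

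For the cross-domain middle term I would exploit the composite structure $f = h\circ g$, so that $f_{\#}\mathbb{P}^{e} = h_{\#}\!\left(g_{\#}\mathbb{P}^{e}\right)$ and likewise for $e'$. Applying the \emph{contraction} (data-processing / non-expansiveness) of $D$ under the common post-processing map $h$ yields $D\!\left(f_{\#}\mathbb{P}^{e},f_{\#}\mathbb{P}^{e'}\right) \leq D\!\left(g_{\#}\mathbb{P}^{e},g_{\#}\mathbb{P}^{e'}\right)$, producing exactly the representation-distance term on the right-hand side. For the two outer terms, each of which compares the true and predicted label marginals \emph{within} a single domain, I would bound $D\!\left(\mathbb{P}^{e}_{\mathcal{Y}},f_{\#}\mathbb{P}^{e}\right)\leq\mathcal{L}\!\left(f,\mathbb{P}^{e}\right)$ by exhibiting the natural coupling that pairs each $x$ with its own label $y$ through the joint $\mathbb{P}^{e}$: the marginals of this coupling are precisely $f_{\#}\mathbb{P}^{e}$ and $\mathbb{P}^{e}_{\mathcal{Y}}$, and its transport/mismatch cost equals $\mathbb{E}_{(x,y)\sim\mathbb{P}^{e}}\!\left[\ell\!\left(f(x),y\right)\right]=\mathcal{L}\!\left(f,\mathbb{P}^{e}\right)$. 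The same argument applied to $e'$ completes the summation.

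The main obstacle is that both the contraction step and the loss-domination step are \emph{metric-dependent}, which is why the statement is credited jointly to \citep{zhao2019learning, phung2021learning, le2021lamda} and the three cases are treated in Appendix~\ref{apd:tradeoff}. For the $\mathcal{H}$-divergence and the Hellinger distance the contraction under $h$ holds for any measurable map (Hellinger is an $f$-divergence, and the $\mathcal{H}$-divergence supremum only shrinks under post-processing), whereas for the Wasserstein distance the inequality $W\!\left(h_{\#}\mu,h_{\#}\nu\right)\leq\mathrm{Lip}(h)\,W(\mu,\nu)$ requires $h$ to be $1$-Lipschitz, so this regularity must be assumed or absorbed. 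Symmetrically, the outer-term bound requires the chosen loss $\ell$ to dominate the ground cost (for Wasserstein) or the corresponding divergence (for Hellinger and the $\mathcal{H}$-divergence) between the soft prediction $f(x)$ and the one-hot label of $y$; verifying this compatibility between $\ell$ and $D$ for each of the three instantiations is the only nonroutine part, and the triangle-inequality skeleton above is shared across all of them.
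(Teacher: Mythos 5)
Your triangle-inequality skeleton (anchor at predicted label distributions, contract the middle term by data processing, dominate the outer terms by the losses) is indeed the structure of the proofs in the works this theorem is imported from; note that the paper itself never reproves the statement --- Appendix~\ref{apd:tradeoff} only restates the Hellinger-distance version from \citep{phung2021learning} (Theorem~\ref{theorem:single_lower_bound_A}) and points to \citep{zhao2019learning, le2021lamda} for the other two metrics. The genuine gap is your choice of anchor: you define $f_{\#}\mathbb{P}$ as the \emph{law of the soft prediction} $f(X)$ on the simplex $\mathcal{Y}_{\Delta}$. With that choice, the outer-term bound $D\left(\mathbb{P}^{e}_{\mathcal{Y}}, f_{\#}\mathbb{P}^{e}\right)\leq\mathcal{L}\left(f,\mathbb{P}^{e}\right)$ is false in the Hellinger case, no matter which bounded loss you pick. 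Concretely, take binary classification with $f(x)=(0.9,0.1)$ for every $x$ and $Y=1$ almost surely: then $f_{\#}\mathbb{P}^{e}$ is a point mass at $(0.9,0.1)$ while the one-hot embedded $\mathbb{P}^{e}_{\mathcal{Y}}$ is a point mass at $(1,0)$; the two laws are mutually singular, so their Hellinger distance is maximal ($\sqrt{2}$), yet the domain loss (cross-entropy or Hellinger loss) is close to $0$. No ``compatibility between $\ell$ and $D$'' can repair this, because your coupling argument is genuinely Wasserstein-specific: $f$-divergences of laws admit no coupling characterization, so the expected pointwise divergence between $f(x)$ and the one-hot label does not upper-bound the divergence between the two laws. (It would also rescue the $\mathcal{H}$-divergence/TV case only if you switch the anchors to laws of \emph{hard} predictions on $\mathcal{Y}$, which is what \citep{zhao2019learning} actually do.)

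The fix used in the cited proofs is to anchor instead at the \emph{induced label marginal on} $\mathcal{Y}$, namely $\hat{\mathbb{P}}^{e}_{\mathcal{Y}}(y):=\mathbb{E}_{x\sim\mathbb{P}^{e}}\left[f(x)[y]\right]$, i.e., the law of a label sampled from the soft prediction. The middle term is then controlled by data processing under the Markov kernel $z\mapsto h(z)$ (valid for any $f$-divergence), and each outer term follows from \emph{joint convexity} of the squared Hellinger divergence,
\begin{equation*}
D_{1/2}\left(\hat{\mathbb{P}}^{e}_{\mathcal{Y}},\mathbb{P}^{e}_{\mathcal{Y}}\right)\leq\mathbb{E}_{x\sim\mathbb{P}^{e}}\left[D_{1/2}\left(f(x),\mathbb{P}(Y\mid x)\right)\right],
\end{equation*}
which equals the domain loss once $\ell$ is taken to be the Hellinger loss. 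This also surfaces a discrepancy you could not have seen blind: carried out correctly, the Hellinger bound comes with the losses raised to the power $1/2$ and with $\ell$ fixed to the Hellinger loss --- exactly the form of Theorem~\ref{theorem:single_lower_bound_A} that the paper actually invokes --- while the unexponentiated main-text statement is a simplification. As written, your argument establishes only the Wasserstein instantiation (under a Lipschitz assumption on $h$ that you correctly flag), not the theorem for all three choices of $D$.
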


Theorem~\ref{theorem:single_tradeoff} suggests that if there is a substantial discrepancy in the label marginal distribution $D(\mathbb{P}_{\mathcal{Y}}^{e},\mathbb{P}_{\mathcal{Y}}^{e'})$ across training domains, strongly enforcing \textit{representation alignment} $D( g_{\#}\mathbb{P}^{e},g_{\#}\mathbb{P}^{e'})$ will lead to an increase in \textit{domain-losses} $\left ( \mathcal{L}( f,\mathbb{P}^{e}) + \mathcal{L}( f,\mathbb{P}^{e'})\right )$. In other words, while the representation alignment strategy promotes the development of an invariant representation function, it may also pose a challenge to Necessary-Condition-1. 
\label{sec:representation_alignment}

\subsection{Augmentation} 
\begin{figure}[h!]
    \centering
\includegraphics[width=1.0\linewidth]{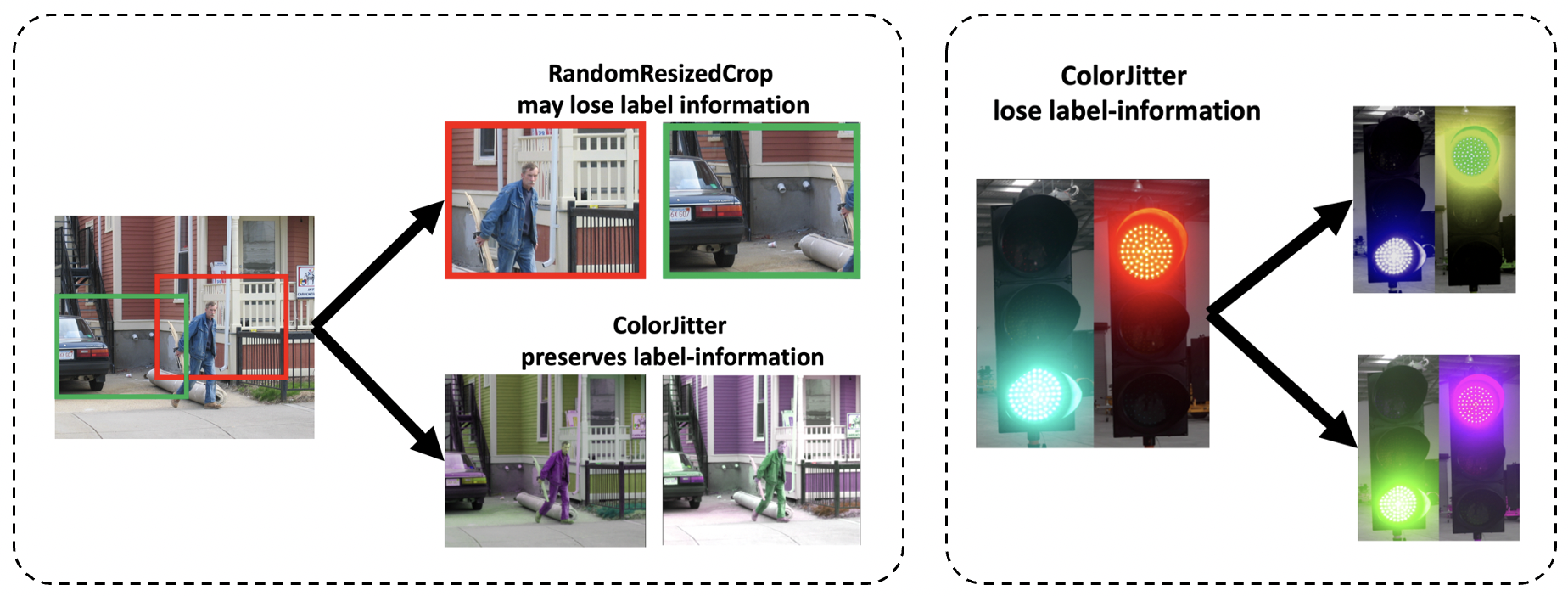}
\vspace{-7mm}
    \caption{(Left) RandomResizedCrop alters the label-information, whereas ColorJitter does not. (Right) ColorJitter modifies the label-information of traffic light.}
    \label{fig:aug}
    \vspace{-4mm}
\end{figure}
Data augmentation utilizes predefined or learnable transformations $T$ on the original sample $X$ or its features $g(x)$ to create augmented data $T(X)$ or $T(g(x))$. However, it's crucial that transformation $T$ maintains the integrity of the causal factors. For instance, in Figure~\ref{fig:aug}.Left, random-cropping augmentation alters the causal features, whereas color-based augmentation does not. However, in Figure~\ref{fig:aug}.Right, color-based augmentation modifies the causal features. This implies a \textit{necessity for some knowledge of the target domain} to ensure the transformations do not alter the label information \citep{gao2023out,zhang2022rethinking}, otherwise it risks violating Necessary-Condition-2.
\label{sec:augmentation}

\subsection{Invariant Prediction} These methods aim to learn a consistent optimal classifier across domains. For example, Invariant Risk Minimization (IRM) \citep{arjovsky2020irm} seeks to learn a representation function $g(x)$ with invariant predictors $\mathbb{E}[Y | g(x), E]$. This goal aligns with Necessary-Condition-1 without imposing restrictions that could affect Necessary-Condition-2. In fact, in fully informative invariant features setting (i.e., $Y \!\perp\! X\mid g(x)$) or the number of training domains is limited, IRM does not provide significant advantages over ERM \citep{rosenfeld2020risks, ahuja2020empirical, ahuja2021invariance}. VREx \citep{krueger2021out} relaxes the IRM's constraint to enforce equal risks across domains, assuming that the optimal risks are similar across domains. If, however, the optimal solutions exhibit large loss variations, balancing risks could result in suboptimal performance for some domains, violating Necessary-Condition-1.
IB-IRM \citep{ahuja2021invariance} posits that integrating the information bottleneck (minimizing $I(g(X); X)$) principle to invariant prediction strategies aids generalization.

However, information bottleneck approach is beneficial only when there is a sufficient and diverse set of training domains \cite{ahuja2021invariance}. Otherwise, the information bottleneck may lead to a higher risk of violating Necessary-Condition-2. The following corollary demonstrates our arguments.


\begin{corollary} Under Assumption \ref{as:label_idf} and Assumption \ref{as:sufficient_causal_support}, let the minimal representation function $g_{\text{min}}$ be defined as:
\begin{equation*}
g_{\text{min}} \in \left\{\underset{g \in \mathcal{G}}{\text{argmin }} I(g(X); X) \ \text{s.t.} \ f = h \circ g \in \mathcal{F}_{\mathcal{E}_{\text{tr}}} \right\},
\end{equation*}
where $I(\cdot,\cdot)$ denotes mutual information. Then, for any $g_c\in \mathcal{G}_c$ the following holds:
\begin{equation}
I(g_{\text{min}}(X), g_c(X)) \leq I(X, g_c(X)),
\label{eq:ineq}
\end{equation}
and the equality holds if and only if at least one of sufficient conditions is hold. (Proof is provided in Appendix \ref{thm:information_apd}.)
\label{thm:information}.
\end{corollary}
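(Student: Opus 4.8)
The plan is to separate the inequality, which is essentially free, from the equality characterization, which carries all the content and can be routed through machinery already built in the paper. First I would prove the bound~(\ref{eq:ineq}). Since $g_{\text{min}}(X)$ and $g_c(X)$ are both deterministic functions of $X$, the triple $g_c(X)\!-\!X\!-\!g_{\text{min}}(X)$ forms a Markov chain, so the data-processing inequality yields $I(g_{\text{min}}(X);g_c(X)) \le I(X;g_c(X))$ for every $g_c \in \mathcal{G}_c$, with no appeal to either assumption.

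Next I would reduce the equality case to a membership statement. Equality in the data-processing inequality holds precisely when $g_c(X)$ is conditionally independent of $X$ given $g_{\text{min}}(X)$, i.e. when $g_{\text{min}}(X)$ retains all the information $X$ carries about $g_c(X)$; by Definition~\ref{def:sufficient} this is exactly $g_{\text{min}} \in \mathcal{G}_s$. Theorem~\ref{thm:nacessary} then converts this into a generalization statement: $g_{\text{min}} \in \mathcal{G}_s$ if and only if some classifier $h$ satisfies $h\circ g_{\text{min}} \in \mathcal{F}^*$. Hence the target equality is equivalent to asserting that the minimal-information training-optimal representation is itself globally-optimal-capable, and the whole corollary collapses to characterizing when this happens.

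For the ``if'' direction I would argue case by case over the sufficient conditions of Theorem~\ref{thm:sufficient_conditions}. The sufficient-and-diverse-domains case is the cleanest: condition~2 of Theorem~\ref{thm:sufficient_conditions} combined with the trivial inclusion $\mathcal{F}^* \subseteq \mathcal{F}_{\mathcal{E}_{tr}}$ forces $\mathcal{F}_{\mathcal{E}_{tr}} = \mathcal{F}^*$, so the defining constraint $h\circ g_{\text{min}} \in \mathcal{F}_{\mathcal{E}_{tr}}$ already places $g_{\text{min}}$ in $\mathcal{G}_s$. The invariance-preserving-transformation case reduces to the same argument once the augmented domains it renders optimal are folded into the optimality constraint, and the invariant-representation case is immediate, since condition~1 of Theorem~\ref{thm:sufficient_conditions} gives $g_{\text{min}} \in \mathcal{G}_c \subseteq \mathcal{G}_s$.

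The main obstacle is the ``only if'' direction, which I would establish by contraposition: assuming all sufficient conditions fail, I must produce a training-optimal representation whose information content is strictly smaller than that of every representation in $\mathcal{G}_s$, so that the information-bottleneck objective strictly prefers it, $g_{\text{min}} \notin \mathcal{G}_s$, and~(\ref{eq:ineq}) is strict. The construction exploits the failure of the diverse-coverage analogue of Assumption~\ref{as:sufficient_causal_support}: when $\cup_{e}\mathrm{supp}\{\mathbb{P}^e(Z_c,Z_e)\}$ leaves part of $\mathcal{Z}_c\times\mathcal{Z}_e$ uncovered and no augmentation fills the gap, $Z_e$ is partially predictive of $Z_c$ on the training support, so one may compress $g_c$ along the causal directions rendered redundant by this spurious correlation while staying loss-optimal on each $\mathbb{P}^e$. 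The delicate steps are quantitative: (i) verifying that this compressed map strictly lowers $I(g(X);X)$, and (ii) showing that the discarded directions correspond to genuine causal content, so that $I(g(X);g_c(X)) < I(X;g_c(X))$. Step~(ii) is where Assumption~\ref{as:label_idf} is needed to guarantee that the lost information is indispensable on some unseen domain; controlling this compression quantitatively rather than merely qualitatively is where I expect the real work to lie.
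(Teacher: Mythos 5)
Your data-processing argument for the inequality~(\ref{eq:ineq}) and your ``if'' direction are correct and essentially coincide with the paper's proof: under any of the sufficient conditions, Theorem~\ref{thm:sufficient_conditions} collapses $\mathcal{F}_{\mathcal{E}_{tr}}$ onto $\mathcal{F}^*$, and Definition~\ref{def:sufficient} (via Theorem~\ref{thm:nacessary}) then yields equality. The genuine gap is the ``only if'' direction, which you explicitly leave unfinished and for which you propose a needlessly hard route. The idea you are missing is that every $g_c \in \mathcal{G}_c$ is itself \emph{feasible} for the constrained minimization defining $g_{\text{min}}$: by Corollary~\ref{cor:proterties} there exists $h^*$ with $h^* \circ g_c \in \mathcal{F}^* \subseteq \mathcal{F}_{\mathcal{E}_{tr}}$, so minimality caps the information of $g_{\text{min}}$ from above, $I(g_{\text{min}}(X);X) \le I(g_c(X);X) = I(X;g_c(X))$. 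If equality holds in~(\ref{eq:ineq}), then since $g_c(X)$ is a deterministic function of $X$ we also have $I(g_{\text{min}}(X);X) \ge I(g_{\text{min}}(X);g_c(X)) = I(X;g_c(X))$, so all three quantities coincide; for deterministic maps this means $H(g_{\text{min}}(X)) = H(g_c(X)) = I(g_{\text{min}}(X);g_c(X))$, i.e., $g_{\text{min}}(X)$ and $g_c(X)$ determine each other almost surely, hence $g_{\text{min}}$ is an invariant representation function up to an equivalent causal relabeling (Corollary~\ref{cor:proterties}, property 2): $g_{\text{min}} \in \mathcal{G}_c$. That is precisely sufficient condition 1 of Theorem~\ref{thm:sufficient_conditions}, and the direction closes in a few lines; this is how the paper argues in Appendix~\ref{thm:information_apd}.

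By contrast, your contrapositive plan overcommits. To prove that equality implies at least one sufficient condition, it is enough to show equality forces one specific condition (condition 1); your contrapositive instead obliges you to control the simultaneous failure of all three conditions, two of which (domain diversity, available augmentations) are properties of the data with no direct link to the information identity being negated. Moreover, as you yourself note, your construction would have to produce a training-optimal competitor whose information content lies strictly below that of \emph{every} feasible element of $\mathcal{G}_s$ --- merely exhibiting some compressible training-optimal map does not rule out that the argmin set still contains an invariance-preserving minimizer --- and this quantitative control is exactly the part you leave open. None of that machinery is needed once the feasibility of $g_c$ in the bottleneck problem is noticed.
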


This corollary, derived from Theorem~\ref{thm:sufficient_conditions}, shows that the minimal representation \( g_{\text{min}} \) violates Necessary-Condition-2  i.e., $I(g_{\text{min}}(X), g_c(X)) < I(X, g_c(X))$ unless one of the sufficient conditions is met, in that case, the equality holds.

\begin{figure}[h!]
    \centering
\includegraphics[width=1.0\linewidth]{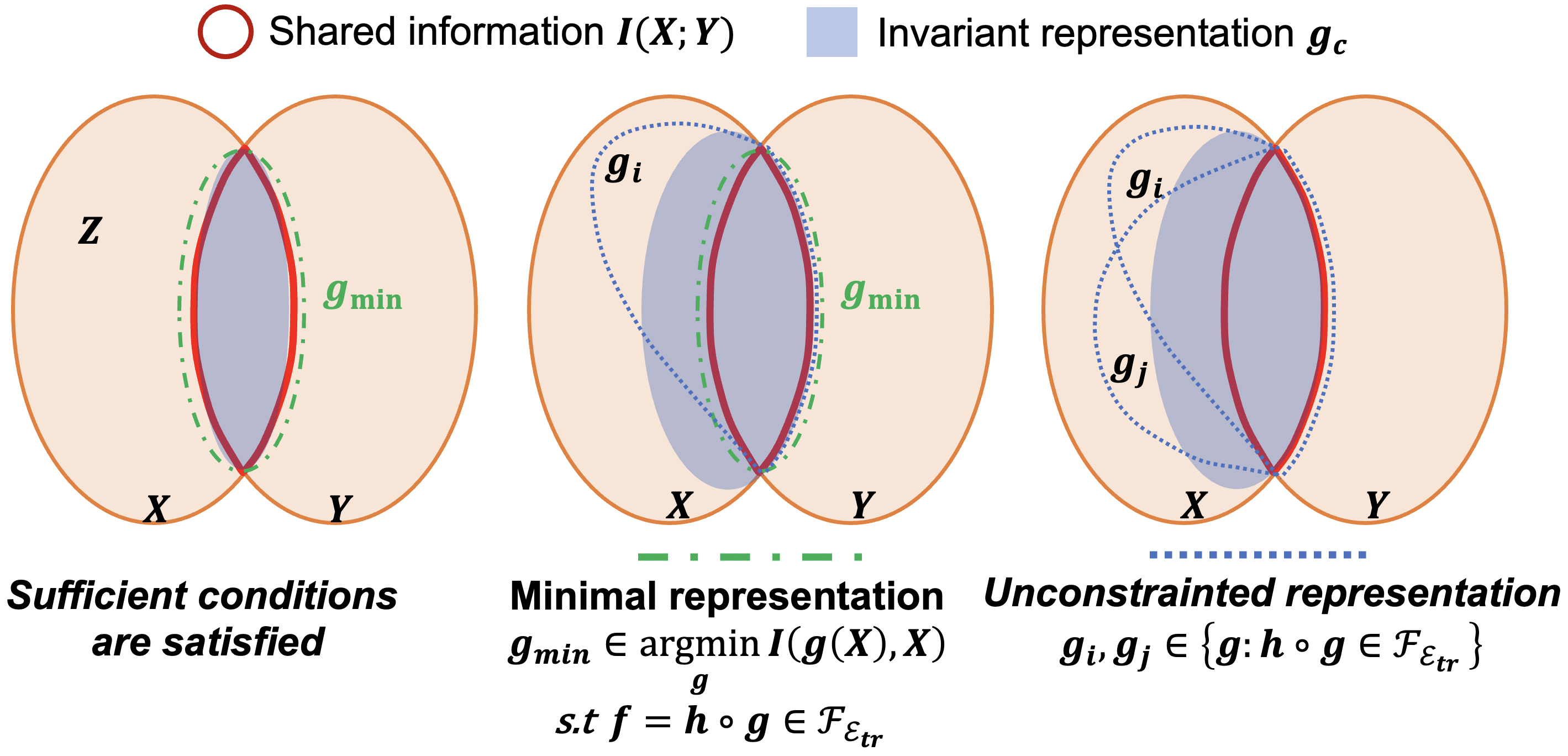}
\vspace{-7mm}
    \caption{Information diagrams of \( X, Y \); the invariant representation \( g_c(X) \); the minimal representation \( g_{\text{min}}(X) \); and the representations \( g_i(X), g_j(X) \), where there exist corresponding classifiers on top of these representations that form optimal hypotheses for the training domains.}
    \label{fig:info_min}
\end{figure}

Particularly, the intuition of this corollary is illustrated in Figure~\ref{fig:info_min}.Mid which depicts the mutual information among the four variables $X,Y,g_c(X)$ and $g_{min}(X)$. By Assumption~\ref{as:sufficient_causal_support} and Proposition~\ref{thm:invariant_correlation}, it follows that $X$ contains all information about $Z_c$. Based on the data generation process, we have $I(X,Y \mid Z_c)=0$, indicating that the causal features $Z_c$ must encapsulate the shared information $I(X;Y)$. 
By Theorem~\ref{thm:sufficient_conditions}, if one sufficient condition is satisfied, generalization is achieved with $g_{min}$, which implies: $I(g_{min}(X), g_c(X)) = I(X, g_c(X))$ (Figure~\ref{fig:info_min}.Left).
Otherwise, \( g_{\text{min}} \) tends to favor simple features, such as textures or backgrounds, rather than more complex and generalizable semantic features, like shapes \citep{geirhos2020shortcut}, leading to violate Necessary-Condition-2 (Figure~\ref{fig:info_min}.Right).

\subsection{Ensemble Learning} 
\label{sec:sufficient_constraint}

In line with the analyses based on the information-theoretic perspective in the previous section, we intuitively discuss a connection between the Necessary-Condition-2 and the recent \textit{ensemble} strategy, demonstrating that \textit{ensemble} methods indeed encourage models to satisfy this. Particularly, it can be observed that any representation \( g_i \) capable of forming an optimal hypothesis for the training domains captures the shared information \( I(X; Y) \) and potentially some additional information about \( g_c(X) \) (Figure~\ref{fig:info_min} (Mid)).

One possible way to increase the likelihood of \( g_i(X) \) capturing information about \( g_c(X) \) is to maximize \( I(X, g_i(X)) \). However, this approach may not be appropriate when combined with domain generalization (DG) algorithms designed to encourage sufficient conditions. For instance, representation alignment approaches aim to retain shared information across domains while removing other information, whereas maximizing \( I(X, g_i(X)) \) seeks to preserve as much information about \( X \) as possible, regardless of its relevance.

In contrast, \textit{Ensemble Learning} can encourage the representation to capture more information from \( g_c(X) \) by learning multiple diverse versions of representations through ensemble methods to capture as much information as possible about \( g_c(X) \) (as illustrated in Figure~\ref{fig:info_min} (Right)). Additionally, each version \( g_i(X), \dots, g_j(X) \) can also align with the constraints necessary to satisfy the sufficient conditions.



\section{Necessity-Preserving DG Algorithm}
\label{sec:main_proposed_method}

Section~\ref{sec:discussion_DG} highlights that while conventional DG algorithms promote sufficient conditions, they often violate necessary conditions, which, as analyzed in Section~\ref{sec:main_conds}, is the primary reason for their failure to generalize. To further validate our analysis, we propose a method that promote sufficient conditions without violate necessary conditions.

To recap, the three conventional strategies are as follows: \textit{Invariant prediction}-based methods require sufficient and diverse domains, which depend heavily on the data;
\textit{Augmentation}-based methods rely on invariance-preserving transformations, which often require access to the target domain;
\textit{Representation alignment}-based methods are more flexible and can be directly manipulated. Therefore, in this section, we present a method that imposes the representation alignment constraint without violating necessary conditions, thereby offering a practical means to verify our analysis.

\subsection{Subspace Representation Alignment (SRA)}

As demonstrated in Theorem~\ref{theorem:single_tradeoff}, \textit{Representation Alignment}-based methods face a trade-off between alignment constraints and domain losses due to discrepancies in label distribution across domains, resulting in a violation of necessary conditions. The following theorem demonstrates that by appropriately organizing training domains into distinct subspaces, where differences in marginal label distributions are removed, we can align representations within these subspaces without violating necessary conditions.

\begin{theorem}
\label{theorem:multi_bound} \textit{(Subspace Representation Alignment)} Given \textit{subspace projector} $\Gamma: \mathcal{X}\rightarrow \mathcal{M}$, and a subspace index $m\in \mathcal{M}$, let $A_{m}=\Gamma^{-1}(m)=\left\{ x:\Gamma(x)=m\right\} $ is the region on data space which has the same index $m$ and $\mathbb{P}_{m}^{e}$ be the distribution restricted by $\mathbb{P}^{e}$ over the set $A_{m}$, then $\pi^{e}_m=\frac{\mathbb{P}^{e}\left(A_{m}\right)}{\sum_{m'\in\mathcal{M}}\mathbb{P}^{e}\left(A_{m'}\right)}$ is mixture co-efficient, if the loss function $\ell$ is upper-bounded by a positive
constant $L$, then:

(i)  The target general loss is upper-bounded: 
\begin{align*}
\left | \mathcal{E}_{tr} \right |\sum_{e\in \mathcal{E}_{tr}}\mathcal{L}\left ( f,\mathbb{P}^{e} \right )
&\leq
\sum_{e\in \mathcal{E}_{tr}} \sum_{m\in\mathcal{M}}\pi^{e}_m
\mathcal{L}\left ( f,\mathbb{P}^{e}_{m} \right ) \\+
&L\sum_{e, e'\in \mathcal{E}_{tr}}\sum_{m\in\mathcal{M}}\pi^{e}_{m}D\left ( g_{\#}\mathbb{P}^{e}_{m},g_{\#}\mathbb{P}^{e'}_{m} \right ),
\end{align*}
(ii) Distance between two label marginal distribution $\mathbb{P}^{e}_{m}(Y)$ and $\mathbb{P}^{e'}_{m}(Y)$ can be upper-bounded: 
\begin{equation*}
\begin{aligned}
D\left(\mathbb{P}^{e}_{\mathcal{Y},m},\mathbb{P}^{e'}_{\mathcal{Y},m}\right) &\leq 
D\left ( g_{\#}\mathbb{P}^{e}_{m},g_{\#}\mathbb{P}^{e'}_{m} \right )\\
+&\mathcal{L}\left ( f,\mathbb{P}^{e}_{m}\right )
+
\mathcal{L}\left ( f,\mathbb{P}^{e'}_{m} \right )
\end{aligned}
\end{equation*}
(iii) Construct the subspace projector $\Gamma$ as the optimal hypothesis for the training domains i,e., \(\Gamma \in \mathcal{F}_{\mathcal{E}_{tr}}\), which defines $\mathcal{M}=\{m=\hat{y}\mid \hat{y}=\Gamma(x), x\in\bigcup_{e\in\mathcal{E}_{tr}}\text{supp}\mathbb{P}^{e} \}\subseteq \mathcal{Y}_\Delta$, then
$
D\left(\mathbb{P}^{e}_{\mathcal{Y}, m},\mathbb{P}^{e'}_{\mathcal{Y},m}\right)=0$ for all \(m \in \mathcal{M}\).

where $g_{\#}\mathbb{P}$ denotes representation distribution on $\mathcal{Z}$ induce by applying $g$ with $g: \mathcal{X} \mapsto \mathcal{Z}$ on data distribution $\mathbb{P}$, $D$ can be $\mathcal{H}$-divergence, Hellinger or Wasserstein distance. 
\end{theorem}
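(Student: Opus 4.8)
The plan is to handle the three parts in increasing order of difficulty, reusing the single-domain trade-off bound (Theorem~\ref{theorem:single_tradeoff}) as a black box wherever possible. For part (i), I would first record the exact mixture decomposition induced by the partition $\{A_m\}_{m\in\mathcal{M}}$. Since $\Gamma$ assigns every $x$ a unique index, the sets $A_m$ partition $\mathcal{X}$, so $\sum_{m'}\mathbb{P}^e(A_{m'})=1$, hence $\pi^e_m=\mathbb{P}^e(A_m)$, and the law of total expectation gives the identity $\mathcal{L}(f,\mathbb{P}^e)=\sum_{m}\pi^e_m\,\mathcal{L}(f,\mathbb{P}^e_m)$. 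Summing over $e$ and multiplying by $|\mathcal{E}_{tr}|$ reproduces the left-hand side while introducing a free average over a second domain index, via $|\mathcal{E}_{tr}|\,\mathcal{L}(f,\mathbb{P}^e)=\sum_{e'}\mathcal{L}(f,\mathbb{P}^e)$. The substantive step is then a \emph{per-pair transfer inequality}: for every pair $(e,e')$ and every subspace $m$, bound $\mathcal{L}(f,\mathbb{P}^e_m)\le \mathcal{L}(f,\mathbb{P}^{e'}_m)+L\,D(g_\#\mathbb{P}^e_m,g_\#\mathbb{P}^{e'}_m)$, using that $\ell$ is bounded by $L$ so that the loss, viewed as a test function of the representation, is $L$-controlled by the chosen divergence $D$. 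Weighting by $\pi^e_m$, summing over $m$ and over all pairs $(e,e')$, and identifying the source-loss terms $\mathcal{L}(f,\mathbb{P}^{e'}_m)$ with the mixture decomposition yields the stated bound.

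For part (ii), I would simply instantiate the already-established single-domain trade-off (Theorem~\ref{theorem:single_tradeoff}) with the restricted distributions $\mathbb{P}^e_m$ and $\mathbb{P}^{e'}_m$ in place of $\mathbb{P}^e$ and $\mathbb{P}^{e'}$. Since that bound is proved for arbitrary distributions in Appendix~\ref{apd:tradeoff}, no new argument is required and (ii) follows verbatim after the substitution.

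For part (iii), the key observation is that when $\Gamma\in\mathcal{F}_{\mathcal{E}_{tr}}$ is optimal for every training domain under a strictly proper loss, it must coincide with the posterior $x\mapsto\mathbb{P}(Y\mid x)$; moreover, by Assumption~\ref{as:label_idf} and Proposition~\ref{thm:invariant_correlation} this posterior is domain-invariant, since $x$ determines the relevant causal factor and $\mathbb{P}(Y\mid Z_c)$ is stable across environments. Consequently, on each level set $A_m=\Gamma^{-1}(m)$ we have $\mathbb{P}(Y\mid x)=m$ \emph{constantly}, so the restricted label marginal satisfies $\mathbb{P}^e_{\mathcal{Y},m}(Y=c)=\frac{1}{\mathbb{P}^e(A_m)}\int_{A_m}\mathbb{P}(Y=c\mid x)\,d\mathbb{P}^e(x)=m[c]$, which is independent of $e$. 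Both restricted label marginals thus equal $m$, giving $D(\mathbb{P}^e_{\mathcal{Y},m},\mathbb{P}^{e'}_{\mathcal{Y},m})=0$.

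I expect the main obstacle to be the per-pair transfer inequality in part (i), because $\ell(h(g(x)),y)$ depends on the label $y$ as well as on the representation $g(x)$, whereas the right-hand side only measures discrepancy between the pushed-forward representation marginals $g_\#\mathbb{P}^e_m$. Making this rigorous requires controlling how the label enters: one must either invoke the within-subspace label alignment that part (iii) guarantees for the recommended choice of $\Gamma$, or absorb the labeling mismatch into $D$ through the boundedness constant $L$ and the specific metric ($\mathcal{H}$-divergence, Hellinger, or Wasserstein). I would therefore verify the inequality separately for each admissible $D$, mirroring the case analysis underlying Theorem~\ref{theorem:single_tradeoff}.
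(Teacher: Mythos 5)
Your proposal follows essentially the same route as the paper's own proof: part (i) is obtained by instantiating the single-domain transfer bound (Theorem~\ref{theorem:single_tradeoff}, taken as a black box from the cited works) on each restricted pair $\left(\mathbb{P}^{e}_{m},\mathbb{P}^{e'}_{m}\right)$, weighting by $\pi^{e}_m$ and summing over $m$ and over both domain indices; part (ii) is the verbatim per-subspace instantiation of the lower bound; and part (iii) uses exactly the paper's argument that an optimal $\Gamma\in\mathcal{F}_{\mathcal{E}_{tr}}$ coincides with the posterior, which is constant and equal to $m$ on each level set $A_m$, forcing both restricted label marginals to equal $m$. The label-dependence subtlety you flag at the end is precisely what the paper delegates to the cited single-domain results, so your plan to re-verify it per divergence is merely a more cautious version of the same step.
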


Theorem \ref{theorem:multi_bound}.(i) shows that the \textit{domain losses} on the left-hand side (LHS) are upper-bounded by the \textit{subspace losses} and \textit{subspace alignments} within individual subspaces on the right-hand side (RHS). 
If the RHS is perfectly optimized (i.e., without trade-offs), representation alignment can be imposed without violating the necessary conditions.

Theorem \ref{theorem:multi_bound}.(ii) demonstrates that the distance between the marginal label distributions is now defined within subspaces, denoted as \( D\left(\mathbb{P}^{e}_{\mathcal{Y},m},\mathbb{P}^{e'}_{\mathcal{Y},m}\right) \). It is worth noting that \( D\left(\mathbb{P}^{e}_{\mathcal{Y},m},\mathbb{P}^{e'}_{\mathcal{Y},m}\right) \) can be adjusted based on how \(\Gamma\) distributes the training domains across subspaces. 

Theorem \ref{theorem:multi_bound}.(iii) shows that by constructing the subspace projector \(\Gamma\) as the optimal hypothesis for the training domains i.e., \(\Gamma \in \mathcal{F}_{\mathcal{E}_{tr}}\), we achieve $D\left(\mathbb{P}^{e}_{\mathcal{Y},m}, \mathbb{P}^{e'}_{\mathcal{Y},m}\right) = 0$
for all \(m \in \mathcal{M}\), allowing us to jointly optimize both \textit{subspace losses} and \textit{subspace alignments} without trade-off.


\textbf{SRA objective.} Building on Theorem \ref{theorem:multi_bound}, the objective of the subspace representation alignment algorithm is defined as:
\begin{align}
&\min_{f=h\circ g} \underset{\text{Subspace Representation Alignment}}{\underbrace{\sum_{e,e'\in \mathcal{E}_{tr}}\sum_{m\in \mathcal{M}}D\left( g\#\mathbb{P}_m^{e},g\#\mathbb{P}_m^{e'}\right)}}\label{eq:final_objective}\\
&\text{ s.t. } \underset{\text{Training domain optimal hypothesis}}{\underbrace{f=h\circ g\in \bigcap_{e\in \mathcal{E}_{tr}}\underset{ f}{\text{argmin }} \mathcal{L}\left(f,\mathbb{P}^{e}\right)}} \nonumber
\end{align}


where $\mathcal{M}=\{\hat{y}\mid \hat{y}=f(x), x\in\bigcup_{e\in\mathcal{E}_{tr}}\text{supp}\mathbb{P}^{e} \}$ and $D$ can be $\mathcal{H}$-divergence, Hellinger distance, Wasserstein distance. We provide the details on the practical implementation of the proposed objective in Appendix~\ref{Sec:practical}.

\subsection{Experimental results}\label{sec:main_exp}
\begin{table}[h!]
\caption{Classification accuracy (\%) across datasets.
}
\begin{centering}
\resizebox{1.0\columnwidth}{!}{ %
\begin{tabular}{lcccccc}
\toprule
\textbf{Algorithm}  & \textbf{VLCS} & \textbf{PACS} & \textbf{OfficeHome} & \textbf{TerraInc}  & \textbf{DomainNet} & \textbf{Avg} \\
\toprule
ERM~\citep{gulrajani2020search}  & 73.5  & 85.5  & 66.5  & 46.1  & 40.9  & 63.3 \\
IRM~\citep{arjovsky2020irm} & 78.5 & 83.5 & 64.3 & 47.6 & 33.9 & 61.6 \\
VREx~\citep{krueger2021out} & 78.3 & 84.9 & 66.4 & 46.4 & 33.6 & 61.9 \\
IB-IRM~\citep{ahuja2021invariance} & 77.7 & 85.8 & 70,9  &43.4  & 35.1 & 62.6  \\
DANN~\citep{ganin2016domain} & 78.6  & 83.6  & 65.9  & 46.7  & 38.3  & 62.6 \\
CDANN~\citep{li2018domain}  & 77.5  & 82.6  & 65.8  & 45.8  & 38.3  & 62.0 \\
\textbf{Ours} (SRA)  & 76.4  & 86.3  & 66.4  & 49.5  & 44.5  & 64.6 \\
\midrule
SWAD~\citep{cha2021swad} & 79.1  & 88.1  & 70.6  & 50.0   &46.5  & 66.9\\
SWAD + IRM & 78.8 & 88.1 & 70.4 & 49.6 & 39.9 & 65.4\\
SWAD + VRE & 78.1 & 85.4 & 69.9 & 50.0 & 40.0 &64.9\\
SWAD + IB-IRM~ & 78.8 & 88.1 & 70.9 & 49.3 & 38.4  & 65.1 \\
SWAD + DANN & 79.2  & 87.9  & 70.5  & 50.6   &45.7  & 66.8\\
SWAD + CDANN & 79.3  & 87.7  & 70.4  & 50.7   &45.7  & 66.8\\
\textbf{Ours} (SRA + SWAD) & \underline{79.4}  & \underline{88.7}  &  \underline{72.1}  &  \underline{51.6}  & \underline{47.6}   & \underline{67.9} \\
\midrule
SRA + SWAD + Ensemble & \textbf{79.8}  & \textbf{89.2}  &  \textbf{73.2}  &  \textbf{52.2}  & \textbf{48.7}   & \textbf{68.6} \\
\bottomrule
\end{tabular}}
\par\end{centering}
\label{tab:Averages}
\end{table}
We present empirical evidence from experiments conducted on 5 datasets from the DomainBed benchmark (full experimental results and detailed settings are provided in Appendix~\ref{apd:settings}) to validate our theoretical analyses:

\textit{(i) Conventional DG algorithms fail because they inadvertently violate necessary conditions:} As shown in Table~\ref{tab:Averages}, none of the baseline methods consistently surpass ERM overall, regardless of whether SWAD is applied. In contrast, our proposed approach SRA consistently outperforms all baseline methods across all datasets.  

Additionally, we highlight that SRA is most similar to DANN and CDANN. Like these methods, SRA utilizes $\mathcal{H}$-divergence for alignment; however, the key distinction lies in the alignment strategy:  DANN aligns the entire domain representation,  CDANN aligns class-conditional representations, while SRA employs subspace-conditional alignment.
As discussed in Theorem~\ref{theorem:single_tradeoff}, DANN and CDANN potentially violate necessary conditions, whereas SRA does not (Theorem~\ref{theorem:multi_bound}). The superior performance of SRA compared to DANN and CDANN further confirms this.

\textit{(ii) Ensemble-based approaches promote the "Invariance-Preserving Representation Function" Condition, leading to improve generalization:} 
The results of the baselines with SWAD in Table~\ref{tab:Averages} highlight the crucial benefits of using ensembles to encourage Necessary-Condition-2 for improved generalization. To further demonstrate the advantages of an ensemble strategy, we average the predictions of models trained with different random seeds (SRA + SWAD + Ensemble), leading to a noticeable performance boost.

\begin{table}[h!]
\vspace{-5mm}
\caption{Classification accuracy on PACS using ResNet-50 with varying information bottleneck coefficient $\lambda$.}
\begin{centering}
\resizebox{0.8\columnwidth}{!}{ %
\begin{tabular}{lccccc}
\toprule
$\lambda$  & \textbf{Art\_painting} & \textbf{Cartoon} & \textbf{Photo} & \textbf{Sketch} & \textbf{Average}  \\
\midrule
1 & 89.8 & 82.4 & 97.7 & 82.6 & 88.1\\
10 & 87.9 & 82.2 & 97.2 & 78.5 & 86.5\\
100 & 85.6 & 75.9 & 97.5 & 44.1 & 75.8\\
\bottomrule
\end{tabular}}
\par\end{centering}
\label{tab:infor}
\vspace{-2mm}
\end{table}
\textit{(iii) The Information-Bottleneck regularization potentially violating Necessary-Condition-2, being ineffective for generalization:}  As shown in Table~\ref{tab:infor}, the results indicate that the information bottleneck strategy performs poorly in practical scenarios, particularly in the IB-IRM baseline. Increasing the information bottleneck regularization strength can lead to significant performance degradation, especially when the source domains (e.g., Photo, Art, Cartoon) contain rich label information while the target domain ("Sketch") has limited label information (e.g., $\lambda=100$, the default value in IB-IRM). In contrast, when the target domain is "Photo" or "Art," the model achieves relatively better performance.
\vspace{-1mm}

\section{Conclusion}\label{sec:main_conclusion}

This paper provides a fresh perspective on existing DG algorithms in the context of limited training domains, analyzed through the lens of necessary and sufficient conditions for generalization. Our analysis reveals that the failure of conventional DG algorithms arises from their focus on promoting sufficient conditions while neglecting and often inadvertently violating necessary conditions. Furthermore, we provide new insights into two recent strategies, ensemble learning and information bottleneck. The success of ensemble learning lies in its promotion of the necessary condition of the "Invariance-Preserving Representation Function." In contrast, the information bottleneck approach proves ineffective for generalization as it violates this condition, contradicting findings from previous research.

\section*{Impact Statement}

``This paper presents work whose goal is to advance the field of 
Machine Learning. There are many potential societal consequences 
of our work, none which we feel must be specifically highlighted here.''

\bibliography{references}
\bibliographystyle{icml2025}

\newpage
\appendix
\onecolumn
\section{Theoretical development}
\label{apd:proof}
In this section, we present all the proofs  of our theoretical development. 

For readers' convenience, we recapitulate our definition and assumptions:

\textit{Domain objective}: Given a domain $\mathbb{P}^e$, let the hypothesis $f:\mathcal{X}\rightarrow\Delta_{\left | \mathcal{Y} \right |}$ is a map from the data space $\mathcal{X}$ to the the $C$-simplex label space $\Delta_{\left | \mathcal{Y} \right |}:=\left\{ \alpha\in\mathbb{R}^{\left | \mathcal{Y} \right |}:\left \| \alpha \right \|_{1}=1\,\land\,\alpha\geq 0\right\}$.
Let $l:\mathcal{Y}_{\Delta}\times\mathcal{Y}\mapsto\mathbb{R}$ be a loss function, where $\ell\left(f\left(x\right),y\right)$ with
$f\left(x\right)\in\mathcal{Y}_{\Delta}$ and $y\in\mathcal{Y}$
specifies the loss (i.e., cross-entropy) to
assign a data sample $x$ to the class $y$ by the hypothesis $f$. The general 
loss of the hypothesis $f$ w.r.t. a given domain $\mathbb{P}^e$ is:
\begin{equation}
\mathcal{L}\left(f,\mathbb{P}^e\right):=\mathbb{E}_{\left(x,y\right)\sim\mathbb{P}^e}\left[\ell\left(f\left(x\right),y\right)\right].   
\end{equation}


\begin{assumption} (Label-identifiability). We assume that for any pair $z_c, z^{'}_c\in \mathcal{Z}_c$,  $\mathbb{P}(Y|Z_c=z_c) = \mathbb{P}(Y|Z_c=z^{'}_c) \text{ if } \psi_x(z_c,z_e,u_x)=\psi_x(z_c',z'_e,u'_x)$ for some $z_e, z'_e, u_x, u'_x$
\label{as:label_idf_apd}.
\end{assumption}

\begin{assumption} (Causal support). We assume that the union of the support of causal factors across training domains covers the entire causal factor space $\mathcal{Z}_c$: $\cup_{e\in \mathcal{E}_{tr}}\text{supp}\{\mathbb{P}^{e} \left (Z_c \right )\}=\mathcal{Z}_c$ where $\text{supp}(\cdot)$ specifies the support set of a distribution. 
\label{as:sufficient_causal_support_apd}
\end{assumption}

\begin{corollary}
    $\mathcal{F}\neq \emptyset$ if and only if Assumption~\ref{as:label_idf_apd} holds.
    \label{thm:existence_apd}
\end{corollary}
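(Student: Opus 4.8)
The plan is to convert the set-emptiness question into a statement about Bayes-optimal predictors. Because cross-entropy (indeed any strictly proper loss) is minimized exactly by the true conditional, the per-domain optimal set $\mathcal{F}_{\mathbb{P}^e}=\underset{f}{\text{argmin}}\,\mathcal{L}(f,\mathbb{P}^e)$ consists of the hypotheses with $f(x)=\mathbb{P}^e(Y\mid X=x)$ for $\mathbb{P}^e$-almost every $x$. Hence the global optimal set $\mathcal{F}^{*}$ of Eq.~(\ref{eq:optimal}) is nonempty \emph{iff} the posterior maps $x\mapsto\mathbb{P}^e(Y\mid X=x)$ coincide across all $e\in\mathcal{E}$ on $\bigcup_{e}\text{supp}\,\mathbb{P}^e(X)$; when they do, their common value (extended arbitrarily off the support) is a global optimal hypothesis. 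So the whole corollary reduces to showing that this posterior is environment-invariant if and only if Assumption~\ref{as:label_idf_apd} holds.

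First I would establish the posterior decomposition
\[
\mathbb{P}^e(Y\mid X=x)=\int_{\mathcal{Z}_c}\mathbb{P}(Y\mid Z_c=z_c)\,\mathbb{P}^e(Z_c=z_c\mid X=x)\,dz_c,
\]
reading off from the SCM that $X\leftarrow Z_c\rightarrow Y$ and that $Y$ is produced by $\psi_y$ from $Z_c$ alone, so that $Y\perp\!\!\!\perp X\mid Z_c$ and $Y\perp\!\!\!\perp E\mid Z_c$; this both removes $x$ and $e$ from the inner conditional and makes $\mathbb{P}(Y\mid Z_c)$ domain-free. For the \emph{if} direction, the posterior $\mathbb{P}^e(Z_c\mid X=x)$ is supported on $S_x=\{z_c:\exists\,z_e,u_x,\ x=\psi_x(z_c,z_e,u_x)\}$. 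Under Assumption~\ref{as:label_idf_apd}, $\mathbb{P}(Y\mid Z_c=z_c)$ is constant over $z_c\in S_x$ (equal to some $q_x$), so the integral collapses to $q_x$ independently of $e$; the posterior is therefore environment-invariant and $\mathcal{F}^{*}\neq\emptyset$.

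For the \emph{only if} direction I would argue by contraposition. If Assumption~\ref{as:label_idf_apd} fails there exist $z_c,z_c'$ generating a common $x$ with $\mathbb{P}(Y\mid z_c)\neq\mathbb{P}(Y\mid z_c')$, i.e.\ $z_c,z_c'\in S_x$ carry different label laws. I would then exhibit two environments $e,e'\in\mathcal{E}$ whose causal-factor distributions concentrate the generation of (a positive-probability neighborhood of) $x$ on $z_c$ versus $z_c'$, so that by the decomposition above $\mathbb{P}^{e}(Y\mid X=x)=\mathbb{P}(Y\mid z_c)\neq\mathbb{P}(Y\mid z_c')=\mathbb{P}^{e'}(Y\mid X=x)$; no single $f$ can then minimize both domain losses, forcing $\mathcal{F}^{*}=\emptyset$. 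The main obstacle is exactly this construction: it requires the family $\mathcal{P}$ (equivalently $\mathcal{E}=\{e:\mathbb{P}^e\in\mathcal{P}\}$) to be rich enough that $\mathbb{P}^e(Z_c)$ can be steered to realize conflicting posteriors at $x$. Here I would lean on the variability of $\mathbb{P}^e(Z_c)$ across environments together with Assumption~\ref{as:sufficient_causal_support_apd}, and I would handle the measure-zero technicality by passing from the single point $x$ to a positive-measure set on which the two predictors disagree.
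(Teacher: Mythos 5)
Your proposal is correct and takes essentially the same route as the paper: your ``if'' direction (the posterior $\mathbb{P}^e(Y\mid X=x)$ collapses to a common value over $S_x$ and is therefore environment-invariant) is exactly the content of the paper's Proposition~\ref{thm:invariant_correlation_apd}, and your ``only if'' direction is the paper's contradiction argument, which likewise produces an $x$ generated by $z_c$ and $z_c'$ with $\mathbb{P}(Y\mid Z_c=z_c)\neq\mathbb{P}(Y\mid Z_c=z_c')$ and forces any $f\in\mathcal{F}^*$ to take both values at $x$. The construction you flag as the main obstacle --- steering $\mathbb{P}^e(Z_c)$ so that conflicting posteriors are realized at $x$ --- is dispatched in the paper by a one-line remark that a domain placing all of its mass on a single sample (hence on a single generating $z_c$) is itself a valid member of $\mathcal{P}$ under the SCM, so neither a richness assumption on $\mathcal{E}$ nor a positive-measure refinement is needed.
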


\begin{proof}

    The \textbf{"if"} direction is directly derived from the Proposition~\ref{thm:invariant_correlation_apd}.  We prove \textbf{"only if"} direction by contraction.

    If Assumption~\ref{as:label_idf_apd} does not hold, there a pair $x=x'$ such that $x= \psi_x(z_c,z_e,u_x)$ $x'=\psi_x(z_c',z'_e,u'_x)$ for some $z_e, z'_e, u_x, u'_x$ and $\mathbb{P}(Y|Z_c=z_c) \neq \mathbb{P}(Y|Z_c=z^{'}_c)$.

    By definition of $f\in\mathcal{F}^*$, $f(x)=\mathbb{P}(Y|Z_c=z_c)\neq \mathbb{P}(Y|Z_c=z^{'}_c)=f(x')=f(x)$ which is a contradiction. (It is worth noting that a domain containing only one sample $x$ is also valid within our data-generation process depicted in Figure~\ref{fig:graph}.).
\end{proof}

\begin{proposition} (Invariant Representation Function)
Under Assumption.\ref{as:label_idf_apd}, there exists a set of deterministic representation function $(\mathcal{G}_c\neq \emptyset)\in \mathcal{G}$ such that for any $g\in \mathcal{G}_c$, $\mathbb{P}(Y\mid g(x)) = \mathbb{P}(Y\mid z_c)$ and $g(x)=g(x')$ holds true for all $\{(x,x',z_c)\mid  x= \psi_x(z_c, z_e, u_x), x'= \psi_x(z_c, z^{'}_e, u^{'}_x) \text{ for all }z_e,z^{'}_e, u_x, u^{'}_x\}$
\label{thm:invariant_correlation_apd}
\end{proposition}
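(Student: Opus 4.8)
The plan is to exhibit one concrete member of $\mathcal{G}_c$ and then argue it generates a whole family, so that in particular $\mathcal{G}_c \neq \emptyset$. The natural candidate is the \emph{posterior map} $g_0(x) := \mathbb{P}(Y \mid X = x)$, i.e. the deterministic function sending each $x \in \mathcal{X}$ to the $C$-tuple $\big(\mathbb{P}(Y=i \mid X=x)\big)_{i=1}^C \in \mathcal{Y}_\Delta$. I would first show $g_0$ is well defined independently of the environment, then verify it collapses data points sharing a causal factor and preserves the invariant conditional, and finally note that post-composing $g_0$ with any injective map $\phi$ preserves both properties, yielding a set $\mathcal{G}_c \supseteq \{ \phi \circ g_0 \}$.

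The core step would be the following decomposition. Fix any environment $e$ and any $x$ in the support of $\mathbb{P}^e$. Because $X = \psi_x(Z_c, Z_e, U_x)$ and $Y$ is a function of $Z_c$ alone with $Y \perp\!\!\!\perp E \mid Z_c$, conditioning on $X=x$ and marginalizing over the causal factor gives
\begin{equation*}
\mathbb{P}^e(Y \mid X = x) = \int_{\mathcal{Z}_c} \mathbb{P}(Y \mid Z_c = z_c)\, \mathbb{P}^e(Z_c = z_c \mid X = x)\, dz_c,
\end{equation*}
where I use the invariance of $\mathbb{P}(Y \mid Z_c = z_c)$ across domains established after Figure~\ref{fig:graph}. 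The posterior $\mathbb{P}^e(Z_c = z_c \mid X=x)$ is supported on the set $S(x) := \{ z_c : x = \psi_x(z_c, z_e, u_x) \text{ for some } z_e, u_x \}$ of causal factors capable of producing $x$. By Assumption~\ref{as:label_idf_apd}, every $z_c, z_c' \in S(x)$ satisfies $\mathbb{P}(Y \mid Z_c = z_c) = \mathbb{P}(Y \mid Z_c = z_c')$, so the integrand is constant in $z_c$ over the support of the posterior. Pulling this constant out and using that the posterior integrates to one yields $\mathbb{P}^e(Y \mid X = x) = \mathbb{P}(Y \mid Z_c = z_c^\star)$ for any $z_c^\star \in S(x)$, a quantity that no longer references $e$. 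Hence $g_0(x) = \mathbb{P}(Y \mid Z_c = z_c^\star)$ is a single deterministic function on $\mathcal{X}$.

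With this identity the two required properties are immediate. For property~2, if $x = \psi_x(z_c, z_e, u_x)$ and $x' = \psi_x(z_c, z'_e, u'_x)$ share the same $z_c$, then $z_c \in S(x) \cap S(x')$, so $g_0(x) = \mathbb{P}(Y \mid Z_c = z_c) = g_0(x')$. For property~1, since $g_0(X) = \gamma$ forces $\mathbb{P}(Y \mid X) = \gamma$ on that event, a one-line conditioning computation gives $\mathbb{P}(Y \mid g_0(X) = \gamma) = \gamma = \mathbb{P}(Y \mid Z_c = z_c^\star)$, i.e. $\mathbb{P}(Y \mid g_0(x)) = \mathbb{P}(Y \mid z_c)$. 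Any injective $\phi : \mathcal{Y}_\Delta \to \mathcal{Z}$ preserves both the level sets of $g_0$ and the induced conditional, so $\phi \circ g_0 \in \mathcal{G}_c$ as well, establishing $\mathcal{G}_c \neq \emptyset$.

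I expect the main obstacle to be justifying that $g_0$ is \emph{domain-independent} and hence genuinely deterministic: a priori the posterior weights $\mathbb{P}^e(Z_c \mid X=x)$ vary with $e$, so $\mathbb{P}^e(Y \mid X=x)$ could too. Assumption~\ref{as:label_idf_apd} is precisely what removes this dependence, by forcing all causal factors consistent with a given $x$ to share one conditional label law, so that the environment-dependent mixture weights never affect the value of $g_0$. Care is also needed on the measure-theoretic side — the posterior should be handled as a regular conditional distribution and the ``constant over the support'' step stated up to $\mathbb{P}^e$-null sets — but these are routine and do not affect the structure of the argument.
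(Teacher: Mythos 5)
Your proof is correct, but it takes a genuinely different route from the paper's. The paper's argument is a one-step selector construction: it defines $g_c(x)$ to be (any) one causal factor in the fiber $S(x)=\{z_c \mid x=\psi_x(z_c,z_e,u_x)\text{ for some }z_e,u_x\}$ and invokes Assumption~\ref{as:label_idf_apd} to guarantee that whichever representative is chosen, $\mathbb{P}(Y\mid g_c(x))=\mathbb{P}(Y\mid z_c)$; the codomain is thus literally $\mathcal{Z}_c$. You instead take the canonical posterior map $g_0(x)=\mathbb{P}(Y\mid X=x)$ valued in $\mathcal{Y}_{\Delta}$ and prove environment-independence via the mixture decomposition of the posterior over $S(x)$. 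Each approach buys something. Yours requires no arbitrary choices: well-definedness, the collapse property $g_0(x)=g_0(x')$ for $x,x'$ sharing a causal factor, and domain-independence all follow mechanically from the decomposition, whereas the paper's selector must be chosen coherently across all points in a fiber for $g_c(x)=g_c(x')$ to hold --- a consistency (and measurability) point the paper glosses over --- and your explicit mixture computation makes precise exactly the step the paper leaves implicit. Conversely, the paper's construction lands in the causal space $\mathcal{Z}_c$, which is what the downstream results (Corollary~\ref{cor:proterties}, property~1, and the proof of Theorem~\ref{thm:single_generalization_apd}) rely on when they treat $g_c(x)$ as an equivalent causal factor $T(z_c)$ and push distributions forward on $\mathcal{Z}_c$; your $g_0$ identifies all causal factors sharing the same label law and lives in $\mathcal{Y}_{\Delta}$, so to feed it into that later machinery you would either need an embedding of $\mathcal{Y}_{\Delta}$ into $\mathcal{Z}_c$ (not guaranteed to exist) or note that $g_0$ realizes precisely an equivalent-causal transformation in the sense of Corollary~\ref{cor:proterties}, property~2. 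For the proposition as stated, which only asks for some $g\in\mathcal{G}$ with values in $\mathcal{Z}$, your argument is complete.
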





\begin{proof}
Under Assumption.\ref{as:label_idf_apd}, we can always choose a deterministic function $g_c: \mathcal{X}\rightarrow \mathcal{Z}_c$ such that the outcome of $g_c(x)$, can be any $z_c\in\{z_c\mid x= \psi_x(z_c, z_e, u_x)\}$ and $\mathbb{P}(Y\mid g_c(x))=\mathbb{P}(Y\mid z_c)$, will consistently provide an accurate prediction of $Y$. In essence, Y is identifiable over the pushforward measure $g_c\#\mathbb{P}(X)$.  



\end{proof}

\begin{corollary} \label{cor:proterties}(Invariant Representation Function Properties) For any \(g \in \mathcal{G}_c\), the following properties hold:
\begin{enumerate}
    \item(Causal representation:) \(g\) is a mapping function directly from the sample space \(\mathcal{X}\) to the causal feature space \(\mathcal{Z}_c\), such that \(g: \mathcal{X} \rightarrow \mathcal{Z}_c\).
    \item (Equivalent causal representation) Given a deterministic equivalent causal transformation mapping \(T: \mathcal{Z}_c \rightarrow \mathcal{Z}_c\), which maps a causal factor \(z_c\) to another equivalent causal factor \(T(z_c)\), such that
 $\mathbb{P}(Y\mid z_c)=\mathbb{P}(Y\mid T(z_c))$, then we have \(g(x) = T(z_c)\) holds for all \(\{x \mid x = \psi_x(z_c, z_e, u_x), \text{ for all } z_e, u_x\}\).

\item Given $\ell$ is the Cross-Entropy Loss i.e., $\ell(h(z_c), y) = -\sum_{y \in \mathcal{Y}} \mathbb{P}(Y = y \mid z_c) \log h(z_c)[y]$, there exists $h^*$ such that:
\begin{equation*}
  h^* \in\bigcap_{z_c\in\mathcal{Z}_c} \underset{h\in \mathcal{H}}{\text{argmin }} \mathbb{E}_{y\sim\mathbb{P}(Y\mid z_c)} \ell\left ( h( z_c), y \right ),  
\end{equation*}
\end{enumerate}
\end{corollary}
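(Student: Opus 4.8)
The plan is to establish the three properties separately, exploiting the explicit construction of the invariant representation function from Proposition~\ref{thm:invariant_correlation_apd}. Parts~1 and~2 are essentially structural facts that follow by unwinding that construction, while part~3 is a Bayes-optimality argument for the cross-entropy loss. Throughout, I write $z_c(x)$ for a causal factor consistent with $x=\psi_x(z_c,z_e,u_x)$; such a factor exists by the generative process of Figure~\ref{fig:graph}, and by Assumption~\ref{as:label_idf_apd} any two admissible choices induce the same conditional $\mathbb{P}(Y\mid z_c)$, so the conditional is well-defined on the fibers of the generative map.

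For part~1, I would simply observe that the function $g_c$ produced in the proof of Proposition~\ref{thm:invariant_correlation_apd} is built so that $g_c(x)$ is chosen among $\{z_c: x=\psi_x(z_c,z_e,u_x)\}\subseteq\mathcal{Z}_c$; hence its codomain is $\mathcal{Z}_c$, i.e. $g_c:\mathcal{X}\rightarrow\mathcal{Z}_c$. By Assumption~\ref{as:sufficient_causal_support_apd} every $z_c\in\mathcal{Z}_c$ is realized by some $x$ in the union of training supports, so the map is onto $\mathcal{Z}_c$. For part~2, given an equivalence-preserving $T$ I would define $g:=T\circ z_c(\cdot)$, i.e. $g(x)=T(z_c(x))$, and verify the two defining conditions of $\mathcal{G}_c$. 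Well-definedness (determinism, and $g(x)=g(x')$ whenever $x,x'$ share a causal factor) is immediate since $g$ depends on $x$ only through $z_c$. For the label-preservation condition I must show $\mathbb{P}(Y\mid g(x))=\mathbb{P}(Y\mid z_c)$: conditioning on $\{g(X)=t\}$ with $t=T(z_c)$ aggregates over the whole preimage $T^{-1}(t)$, and the crucial observation is that if $T(z_c)=T(z_c')$ then, applying the preservation property to both arguments, $\mathbb{P}(Y\mid z_c)=\mathbb{P}(Y\mid T(z_c))=\mathbb{P}(Y\mid T(z_c'))=\mathbb{P}(Y\mid z_c')$. Thus all factors in a common fiber share one conditional, the conditional $\mathbb{P}(Y\mid g(X)=t)$ is unambiguous and equals $\mathbb{P}(Y\mid z_c)$, and $g\in\mathcal{G}_c$.

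For part~3, I would rewrite the population cross-entropy at a fixed $z_c$ via the identity
$$\mathbb{E}_{y\sim\mathbb{P}(Y\mid z_c)}\,\ell\big(h(z_c),y\big) = H\big(\mathbb{P}(Y\mid z_c)\big) + D_{\mathrm{KL}}\big(\mathbb{P}(Y\mid z_c)\,\|\,h(z_c)\big),$$
where $H$ is the Shannon entropy. The first term is independent of $h$, and the Kullback--Leibler term is non-negative and vanishes exactly when $h(z_c)=\mathbb{P}(Y\mid z_c)$ (Gibbs' inequality), so the unique pointwise minimizer is $h(z_c)=\mathbb{P}(Y\mid z_c)$. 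Because the objectives indexed by distinct $z_c$ constrain disjoint output values and therefore decouple, the single map $h^*(z_c):=\mathbb{P}(Y\mid z_c)$ minimizes every member of the intersection simultaneously, giving $h^*\in\bigcap_{z_c\in\mathcal{Z}_c}\underset{h}{\text{argmin}}\;\mathbb{E}_{y\sim\mathbb{P}(Y\mid z_c)}\ell(h(z_c),y)$, provided $\mathcal{H}$ is rich enough to contain this Bayes classifier.

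The main obstacle is the label-preservation step in part~2: one must ensure a non-injective $T$ does not destroy invariance, and the transitivity argument above (all causal factors in a common fiber of $T$ carry the same conditional) is precisely what rescues it. A secondary point worth stating explicitly is the expressiveness requirement on $\mathcal{H}$ in part~3, without which the Bayes map $h^*$ need not be realizable; under the standard DG assumption of a sufficiently rich classifier family this causes no difficulty.
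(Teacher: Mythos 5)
Your part 3 is correct and follows essentially the paper's own argument (pointwise Bayes-optimality of $h^*(z_c)=\mathbb{P}(Y\mid z_c)$ under cross-entropy, combined with the invariance of this conditional across domains), but parts 1 and 2 have genuine gaps, and both stem from reading the corollary in the wrong direction. The corollary asserts properties of \emph{every} $g\in\mathcal{G}_c$, where membership in $\mathcal{G}_c$ is defined only by the two conditions of Proposition~\ref{thm:invariant_correlation} (namely $\mathbb{P}(Y\mid g(x))=\mathbb{P}(Y\mid z_c)$ and $g(x)=g(x')$ whenever $x,x'$ share a causal factor). For part 1 you argue that the particular $g_c$ \emph{constructed} in the proof of Proposition~\ref{thm:invariant_correlation_apd} has codomain $\mathcal{Z}_c$; this says nothing about an arbitrary member of $\mathcal{G}_c$, which a priori could map into any latent space $\mathcal{Z}$. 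The paper's proof instead takes an arbitrary $g$ satisfying the defining invariance and derives a contradiction: if $g(x)$ retained any dependence on the spurious factor, then since $Z_e\not\!\perp\!\!\!\perp Y$ in the SCM, one would have $\mathbb{P}(Y\mid g(\psi_x(z_c,z_e,u_x)))\neq\mathbb{P}(Y\mid g(\psi_x(z_c,z_e',u_x)))$ for some $z_e\neq z_e'$, contradicting $\mathbb{P}(Y\mid g(x))=\mathbb{P}(Y\mid z_c)$.

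For part 2 the mismatch is sharper: you prove that for any equivalence-preserving $T$ the composite $T\circ z_c(\cdot)$ lies in $\mathcal{G}_c$ (your transitivity argument handling non-injective $T$ is sound, modulo the need to fix a measurable selection $z_c(x)$ when the fiber of $\psi_x$ is not a singleton), but that is the \emph{converse} of the stated property. What the corollary claims, and what the paper proves, is that every $g\in\mathcal{G}_c$ \emph{must have} this form: since $g$ maps into $\mathcal{Z}_c$ (part 1) and satisfies $\mathbb{P}(Y\mid g(x))=\mathbb{P}(Y\mid z_c)$, its output on any $x=\psi_x(z_c,z_e,u_x)$ can only be a causal factor equivalent to $z_c$, and the constancy condition $g(x)=g(x')$ makes this assignment a well-defined map $T$ on $\mathcal{Z}_c$, so that $g(x)=T(z_c)$ for all such $x$. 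This direction is not cosmetic: it is precisely what is invoked as ``property-2'' in step $(1)$ of the proof of Theorem~\ref{thm:single_generalization_apd}, where $(h\circ g_c)(\psi_x(z_c,z_e,u_x))$ is rewritten as $h(T(z_c))$ so that the domain loss collapses to an integral over $\mathcal{Z}_c$. Your converse inclusion cannot be substituted into that step, so as written the proposal does not establish the statement that the downstream results rely on.
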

\color{black}
\begin{proof}
We prove each property as follows:


\underline{\textit{Proof of property-1:}} Suppose there exists $g: \mathcal{X}\rightarrow \mathcal{Z}$ such that $\mathbb{P}(Y\mid g(x)) = \mathbb{P}(Y\mid z_c)$ holds true for all $\{(x,z_c)\mid  x= \psi_x(z_c, z_e, u_x) \text{ for all }z_e,u_x\}$.

If \( g \) is not a function from \( \mathcal{X} \) to \( \mathcal{Z}_c \), then \( g(x) \) may include spurious features \( z_e \), or both \( z_c \) and \( z_e \) for $x=\psi(z_c,z_e,u_x)$.

Based on the structural causal model (SCM) depicted in Figure~\ref{fig:graph}, it follows that \( Z_e \not\!\perp\!\!\!\perp Y \), meaning that the environmental feature \( Z_e \) is spuriously correlated with \( Y \). Consequently, 
\[
\mathbb{P}(Y \mid g(x = \psi(z_c, z_e, u_x))) \neq \mathbb{P}(Y \mid g(x = \psi(z_c, z_e', u_x)))
\]
for some \( z_e \neq z_e' \), which is a contradiction.

\underline{\textit{Proof of property-2:}} Since $g: \mathcal{X}\rightarrow \mathcal{Z}_c$ and $\mathbb{P}(Y\mid g(x)) = \mathbb{P}(Y\mid z_c)$ holds true for all $\{(x,z_c)\mid  x= \psi_x(z_c, z_e, u_x) \text{ for all }z_e,u_x\}$, the outcome of $g(x)$ have to be any $z^{'}_c\in \mathcal{Z}_c$ such that $\mathbb{P}(Y\mid z_c)=\mathbb{P}(Y\mid z^{'})$, which means \(g(x) = T(z_c)\) holds for  \(\{x \mid x = \psi_x(z_c, z_e, u_x)\}\)

This highlights the flexibility of the family of invariant representation functions \(\mathcal{G}_c\), as they allow the model to map a sample \(x = \psi(z_c, z_e, u_x)\) to a set of equivalent causal factors \(\{z'_c \in \mathcal{Z}_c \mid \mathbb{P}(Y \mid z_c) = \mathbb{P}(Y \mid z'_c)\}\), rather than requiring an exact mapping to \(z_c\).

Finally, since $g(x)=g(x')$ holds true for all $\{(x,x',z_c)\mid  x= \psi_x(z_c, z_e, u_x), x'= \psi_x(z_c, z^{'}_e, u^{'}_x) \text{ for all }z_e,z^{'}_e, u_x, u^{'}_x\}$, \(g(x) = T(z_c)\) holds for all \(\{x \mid x = \psi_x(z_c, z_e, u_x), \text{ for all } z_e, u_x\}\)

\underline{\textit{Proof of property-3:}}

Given $z_c\in \mathcal{Z}_c$ and $\ell(h(z_c), y) = -\sum_{y \in \mathcal{Y}} \mathbb{P}(Y = y \mid z_c) \log h(z_c)[y]$, it is easy to show that the optimal $$h^*=\underset{h\in \mathcal{H}}{\text{argmin }} \mathbb{E}_{y\sim\mathbb{P}(Y\mid z_c)} \ell\left ( h( z_c), y \right )$$ is the conditional probability distribution $h^*(z_c)=\mathbb{P}(Y\mid z_c)$.

Based on structural causal model (SCM) depicted in Figure~\ref{fig:graph}, $\mathbb{P}(Y\mid z_c)$  remains stable across all domains. Therefore, there exists an optimal function \(h^*\) such that:
\begin{equation*}
  h^* \in\bigcap_{z_c\in\mathcal{Z}_c} \underset{h\in \mathcal{H}}{\text{argmin }} \mathbb{E}_{y\sim\mathbb{P}(Y\mid z_c)} \ell\left ( h( z_c), y \right ),  
\end{equation*}

where $h^*(z_c)=\mathbb{P}(Y\mid z_c)$ for all $z_c\in \mathcal{Z}_c$
\end{proof}

\subsection{Sufficient Conditions for achieving Generalization}

\begin{theorem} (\textbf{Theorem \ref{thm:sufficient_conditions} in the main paper}) Under Assumption \ref{as:label_idf} and Assumption \ref{as:sufficient_causal_support}, given a hypothesis $f=h\circ g$, if $f$ is optimal hypothesis for training domains i.e.,
\begin{equation*}
    f\in \bigcap_{{e}\in \mathcal{E}_{tr}}\underset{f\in \mathcal{F}}{\text{argmin}} \ \loss{f,\mathbb{P}^{e}}
\end{equation*}
and one of the following sub-conditions holds:
\begin{enumerate}
    \item $g$ belongs to the set of \textbf{invariant representation functions} as specified in Proposition~\ref{thm:invariant_correlation}.
    
    \item $\mathcal{E}_{tr}$ is set of \textbf{Sufficient and diverse training domains} i.e., the union of the support of joint causal and spurious factors across training domains covers the entire causal and spurious factor space $\mathcal{Z}_c\times\mathcal{Z}_e$ i.e., $\cup_{e\in \mathcal{E}_{tr}}\text{supp}\{\mathbb{P}^{e} \left (Z_c, Z_e \right )\}=\mathcal{Z}_c\times\mathcal{Z}_e$.
    
    \item Given $\mathcal{T}$ is set of all \textbf{invariance-preserving transformations} such that for any $T\in \mathcal{T}$ and $g_c\in \mathcal{G}_c$: $(g_c\circ T)(\cdot)=g_c(\cdot)$, $f$ is also an optimal hypothesis on all augmented domains i.e., $$f\in \bigcap_{{e}\in \mathcal{E}_{tr}, T\in \mathcal{T}}\underset{f\in \mathcal{F}}{\text{argmin}} \ \loss{f,T\#\mathbb{P}^{e}}$$
\end{enumerate}
Then $f\in \mathcal{F}^*$
\label{thm:sufficient_conditions_apd}.
\end{theorem}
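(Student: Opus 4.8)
The plan is to reduce all three sub-conditions to a single mechanism: optimality of $f$ on a family of domains pins $f$ down to the (essentially unique) loss-minimizing predictor on the union of their supports, and each sub-condition guarantees that this union already exhausts the full data support. First I would establish the key fact that, under Assumption~\ref{as:label_idf}, the pointwise minimizing action $f^\star(x):=\arg\min_{a}\,\mathbb{E}_{y\sim\mathbb{P}(Y\mid x)}\,\ell(a,y)$ is domain-invariant. For $x=\psi_x(z_c,z_e,u_x)$ the SCM gives $Y\perp\!\!\!\perp E\mid Z_c$, so $\mathbb{P}^e(Y\mid x)=\int \mathbb{P}(Y\mid z_c)\,\mathbb{P}^e(z_c\mid x)\,dz_c$; every $z_c$ in the posterior's support produces $x$ and hence, by label-identifiability, shares a common value of $\mathbb{P}(Y\mid z_c)$, so $\mathbb{P}^e(Y\mid x)$ equals that common value for every $e$. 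For cross-entropy this is exactly $f^\star(x)=\mathbb{P}(Y\mid x)$ (Corollary~\ref{cor:proterties}.3), and $f^\star$ depends on $x$ only through its causal factor.

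Because the conditional risk decomposes pointwise, $f$ is optimal for $\mathbb{P}^e$ iff $f\equiv f^\star$ on $\mathrm{supp}(\mathbb{P}^e(X))$; thus $f\in\mathcal{F}_{\mathcal{E}_{tr}}$ iff $f\equiv f^\star$ on $S_{tr}:=\bigcup_{e\in\mathcal{E}_{tr}}\mathrm{supp}(\mathbb{P}^e(X))$, whereas $f\in\mathcal{F}^*$ iff $f\equiv f^\star$ on $S:=\bigcup_{e\in\mathcal{E}}\mathrm{supp}(\mathbb{P}^e(X))$. It therefore suffices, in each case, to propagate agreement with $f^\star$ from $S_{tr}$ (or an augmented version of it) to all of $S$. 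Case~2 is immediate: since $U_x$ has the same law in every environment, $\mathrm{supp}(\mathbb{P}^e(X))$ is the $\psi_x$-image of $\mathrm{supp}(\mathbb{P}^e(Z_c,Z_e))$, so the diversity hypothesis $\bigcup_{e\in\mathcal{E}_{tr}}\mathrm{supp}(\mathbb{P}^e(Z_c,Z_e))=\mathcal{Z}_c\times\mathcal{Z}_e$ gives $S_{tr}=S$, and training-optimality alone yields $f\in\mathcal{F}^*$.

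For Case~1, pick any $x\in S$ with causal factor $z_c=g_c(x)$; by Assumption~\ref{as:sufficient_causal_support} some training domain contains a point $x'$ with the same causal factor, so $x'\in S_{tr}$. By Proposition~\ref{thm:invariant_correlation}/Corollary~\ref{cor:proterties} the invariant encoder collapses same-causal-factor points, giving $g(x)=g(x')$, whence $f(x)=h(g(x))=h(g(x'))=f(x')=f^\star(x')=f^\star(x)$, the final equality holding because $f^\star$ depends on $x$ only through its causal factor. For Case~3, for the same pair $x,x'$ the map sending $x'\mapsto x$ and fixing every other point satisfies $g_c\circ T=g_c$ and hence lies in $\mathcal{T}$; thus $x=T(x')\in\mathrm{supp}(T\#\mathbb{P}^{e'})$, and ranging over all such pairs shows the augmented supports cover $S$. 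Since the augmentation preserves the causal factor it leaves the label law at $T(x')$ unchanged, so the pointwise-optimal action there is still $f^\star$, and the assumed optimality of $f$ on every augmented domain forces $f\equiv f^\star$ on $S$.

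The main obstacle lies in making the first step rigorous and in Case~3. The domain-invariance step is where label-identifiability does the real work of turning ``optimal on the training domains'' into a support-restricted pointwise condition; stating it cleanly as agreement almost everywhere on the support, for a general bounded loss rather than only cross-entropy, requires care about uniqueness of the minimizing action. In Case~3 the delicate point is to verify that the family of \emph{all} invariance-preserving transformations is rich enough to realize every same-causal-factor point as an image $T(x')$ while each such $T$ remains a legitimate element of $\mathcal{T}$ (i.e.\ $g_c\circ T=g_c$ holds globally, not merely at $x'$), and to confirm that augmentation never corrupts the target conditional label distribution.
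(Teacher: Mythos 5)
Your overall mechanism---reduce optimality on a family of domains to agreement with the pointwise optimal predictor $f^{\star}$ on the union of their supports, then show each sub-condition forces that union to exhaust the full support---is essentially the paper's proof. Your Case~2 is the paper's argument verbatim, and your Case~1 (causal support provides a training point $x'$ with the same causal factor, the invariant encoder collapses $g(x)=g(x')$, so $f(x)=f(x')=f^{\star}(x')=f^{\star}(x)$) is the same argument the paper runs through its loss decomposition and the properties of $\mathcal{G}_c$ (Corollary~\ref{cor:proterties}). The almost-everywhere-versus-pointwise caveat you raise is real, but the paper glosses over it in exactly the same way, so it is not a gap relative to the paper's own standard of rigor.

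Case~3, however, contains a genuine flaw. Your transformation $T$ (send $x'\mapsto x$, fix every other point) is indeed a legitimate element of $\mathcal{T}$, but it is vacuous in the setting the paper works in: all losses are integrals against densities, so $\mathbb{P}^{e'}(\{x'\})=0$ for a non-atomic training distribution, hence $T\#\mathbb{P}^{e'}=\mathbb{P}^{e'}$ \emph{as measures}. Optimality of $f$ on this ``augmented'' domain is then literally the same constraint as optimality on the original domain and imposes nothing at $x$; a hypothesis may be modified on a null set without changing any of the losses in the assumed intersection, so your final step ``the assumed optimality of $f$ on every augmented domain forces $f\equiv f^{\star}$ on $S$'' does not follow. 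What is needed---and what the paper does---is to use transformations that act coherently at the level of latent factors, e.g.\ $T:\psi_x(z_c,z_e,u_x)\mapsto\psi_x(z_c,z_e^{*},u_x^{*})$ applied to \emph{all} points simultaneously (these satisfy $g_c\circ T=g_c$ globally, so $T\in\mathcal{T}$). Each such $T$ transports sets of positive training mass onto configurations with arbitrary spurious factor and noise, so under Assumption~\ref{as:sufficient_causal_support} the family of pushforwards $\{T\#\mathbb{P}^{e'}:T\in\mathcal{T},\,e'\in\mathcal{E}_{tr}\}$ covers $\mathcal{Z}_c\times\mathcal{Z}_e$ with genuinely positive measure, reducing sub-condition~3 to sub-condition~2---which is exactly the paper's reduction. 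Your proposal would need this replacement of the single-point swap by factor-level transformations to be correct.
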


\begin{proof}
For clarity, we divide this theorem into three sub-theorems and present their proofs in the following subsections.
\end{proof}

\subsubsection{Proof of the Theorem~\ref{thm:sufficient_conditions_apd}.1}

Theorem~\ref{thm:sufficient_conditions_apd}.1 demonstrates that:

If 
\begin{itemize}
    \item \( f = h \circ g \) is an optimal hypothesis for all training domains, i.e., $f \in \bigcap_{{e} \in \mathcal{E}_{tr}} \underset{f \in \mathcal{F}}{\text{argmin}} \ \mathcal{L}(f, \mathbb{P}^{e}),$
    \item and \( g \) is an invariant representation function, i.e., \( g \in \mathcal{G}_c \),
\end{itemize}
then \( f \in \mathcal{F}^* \).

To prove this, in the following theorem, we show that for any domain \( \mathbb{P}^e \) satisfying causal support (Assumption~\ref{as:sufficient_causal_support}), if \( f = h \circ g \) is optimal for \( \mathbb{P}^e \) and \( g \in \mathcal{G}_c \), then \( f \in \mathcal{F}^* \). Note that in the following theorem, for simplicity, we assume that
$\mathbb{P}^e$ is a mixture of the training domains. Therefore, $\mathcal{E}_{tr}$ satisfying causal support implies \( \mathbb{P}^e \) also satisfying causal support i.e., $\text{supp}\{\mathbb{P}^e(Z_c)\}=\mathcal{Z}_c$.

\begin{theorem} Denote the set of \textbf{domain optimal hypotheses} of $\mathbb{P}^e$ induced by $g\in \mathcal{G}$: 
    \begin{equation*}
        \mathcal{F}_{\mathbb{P}^e,g}=\left \{h\circ g \mid h\in\underset{h'\in \mathcal{H}}{\rm{argmin }} \mathcal{L}\left ( h'\circ g, {\mathbb{P}^{e}} \right )  \right \}.
    \end{equation*} 
If $\text{supp}\{\mathbb{P}^e(Z_c)\}=\mathcal{Z}_c$ and $g\in \mathcal{G}_c$, then $\mathcal{F}_{\mathbb{P}^e,g}  \subseteq \mathcal{F}^{*}$. 
\label{thm:single_generalization_apd}
\end{theorem}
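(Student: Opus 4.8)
The plan is to reduce the composite loss on any domain to a pointwise optimization over the causal space $\mathcal{Z}_c$, then use the support condition to force the minimizer to agree with the Bayes-optimal rule on all of $\mathcal{Z}_c$, and finally transfer this optimality to every domain via the invariance of $\mathbb{P}(Y\mid Z_c)$. Concretely, fix an arbitrary $f=h\circ g\in\mathcal{F}_{\mathbb{P}^e,g}$. Since $g\in\mathcal{G}_c$, Corollary~\ref{cor:proterties} (Properties 1--2) gives that $g$ maps $\mathcal{X}$ into $\mathcal{Z}_c$ and that $g(x)=T(z_c)$ depends only on the causal factor $z_c$ whenever $x=\psi_x(z_c,z_e,u_x)$, with $\mathbb{P}(Y\mid g(x))=\mathbb{P}(Y\mid z_c)$. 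This lets me define a well-defined \emph{effective classifier} $\bar h(z_c):=h(T(z_c))$ on $\mathcal{Z}_c$ and, for every domain $\mathbb{P}^{e'}$, apply the tower rule together with $Y\perp\!\!\!\perp E\mid Z_c$ to obtain
\begin{equation*}
\mathcal{L}(h\circ g,\mathbb{P}^{e'})=\mathbb{E}_{z_c\sim\mathbb{P}^{e'}(Z_c)}\Big[\,\mathbb{E}_{y\sim\mathbb{P}(Y\mid z_c)}\big[\ell(\bar h(z_c),y)\big]\Big].
\end{equation*}
Writing $r(z_c):=\mathbb{E}_{y\sim\mathbb{P}(Y\mid z_c)}[\ell(\bar h(z_c),y)]$ for the inner pointwise risk, the loss becomes the integral of $r$ against the marginal $\mathbb{P}^{e'}(Z_c)$, and crucially $r$ no longer depends on the environment except through the marginal $\mathbb{P}^{e'}(Z_c)$.

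Next I would establish pointwise optimality on the whole causal space. The inner risk $r(z_c)$ depends on $h$ only through the single value $\bar h(z_c)\in\mathcal{Y}_\Delta$, so it is minimized when $\bar h(z_c)$ equals the Bayes-optimal response for $\mathbb{P}(Y\mid z_c)$; for cross-entropy this optimum is $\mathbb{P}(Y\mid z_c)$, which is realizable in $\mathcal{H}$ by Corollary~\ref{cor:proterties} (Property 3). Because $h$ minimizes $\mathcal{L}(\cdot\circ g,\mathbb{P}^e)=\int r\,d\mathbb{P}^e(Z_c)$ and the integrand is a nonnegative excess risk, $\bar h(z_c)$ must attain the pointwise minimum for $\mathbb{P}^e(Z_c)$-almost every $z_c$; invoking $\mathrm{supp}\{\mathbb{P}^e(Z_c)\}=\mathcal{Z}_c$ (Assumption~\ref{as:sufficient_causal_support_apd}) then upgrades this to optimality across all of $\mathcal{Z}_c$.

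Finally I would transfer this to arbitrary domains. For any $e'\in\mathcal{E}$ we have $\mathrm{supp}\{\mathbb{P}^{e'}(Z_c)\}\subseteq\mathcal{Z}_c$, and the same decomposition yields $\mathcal{L}(h\circ g,\mathbb{P}^{e'})=\int r\,d\mathbb{P}^{e'}(Z_c)$ with the \emph{identical} inner risk $r$, since $\mathbb{P}(Y\mid Z_c)$ is domain-invariant. As $\bar h$ already realizes the pointwise minimum of $r$ at every $z_c\in\mathcal{Z}_c$, it also minimizes $\mathcal{L}(\cdot,\mathbb{P}^{e'})$, so $h\circ g\in\mathrm{argmin}_{f\in\mathcal{F}}\mathcal{L}(f,\mathbb{P}^{e'})$. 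Since $e'$ was arbitrary, $f=h\circ g\in\mathcal{F}^*$ (cf. Eq.~(\ref{eq:optimal})), and therefore $\mathcal{F}_{\mathbb{P}^e,g}\subseteq\mathcal{F}^*$.

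The step I expect to be the main obstacle is the upgrade from ``$\bar h$ optimal $\mathbb{P}^e(Z_c)$-almost everywhere'' to ``$\bar h$ optimal at every $z_c\in\mathcal{Z}_c$,'' because a set that is null under $\mathbb{P}^e$ may carry positive mass under some target $\mathbb{P}^{e'}$. The support assumption is exactly what closes this gap in spirit, but making it fully rigorous requires either a continuity/uniqueness property of the Bayes response, or—more in line with the paper—working in the idealized infinite-data regime and using realizability of the Bayes classifier $h^*$ (Corollary~\ref{cor:proterties}, Property 3) to identify the minimizer with $h^*$ on the entire support. The remaining ingredients, namely the change of variables and the invariance of $\mathbb{P}(Y\mid Z_c)$, are routine.
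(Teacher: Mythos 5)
Your proposal is correct and follows essentially the same route as the paper's proof: reduce the loss on any domain to an integral of the pointwise risk over $\mathcal{Z}_c$ (the paper does this by pushing $\mathbb{P}^{e'}(Z_c)$ forward under the equivalent causal transformation $T$, you by composing $h$ with $T$ --- the same change of variables), invoke Property 3 of Corollary~\ref{cor:proterties} together with $\text{supp}\{\mathbb{P}^e(Z_c)\}=\mathcal{Z}_c$ to get pointwise optimality on all of $\mathcal{Z}_c$, and transfer to arbitrary domains via the invariance of $\mathbb{P}(Y\mid Z_c)$. The almost-everywhere-to-everywhere subtlety you flag is genuine but is equally present and glossed over in the paper's own Step 2, which simply asserts that $h_c$ is optimal ``for every $z_c$ within its support,'' so on that point you are, if anything, the more careful of the two.
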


\begin{proof}



Given $\text{supp}\{\mathbb{P}^e(Z_c)\}=\mathcal{Z}_c$ and $g_c\in \mathcal{G}_c$, 
it suffices to prove that for any $f_c = h_c \circ g_c \in \mathcal{F}_{\mathbb{P}^e,g_c}$, we have:

\begin{equation}
    f_c \in \bigcap_{\mathbb{P}^{e}\in \mathcal{P}} \underset{f\in \mathcal{F}}{\text{argmin}}\mathcal{L}\left(f, \mathbb{P}^e\right).    
    \label{eq:single_generalization_apd}
\end{equation}

To prove (\ref{eq:single_generalization_apd}), we only need to show that for any $f=h\circ g_c \in\mathcal{F}$ and $\mathbb{P}^{e'} \in \mathcal{P}$:

\begin{equation}
\mathcal{L}\left(f, \mathbb{P}^{e'}\right)
\geq \mathcal{L}\left(f_c, \mathbb{P}^{e'}\right),
\end{equation}

which is equivalent to:
\begin{equation}
\mathbb{E}_{(x,y)\sim\mathbb{P}^{e'}}\left[\ell\left(f\left(x\right),y\right)\right]
\geq \mathbb{E}_{(x,y)\sim\mathbb{P}^{e'}}\left[\ell\left(f_c\left(x\right),y\right)\right].
\label{eq:target_apd}
\end{equation}


\underline{\textit{Step 1: Simplifying the general loss using the invariant representation function \(g_c\).}} 

Based on structural causal model (SCM) depicted in Figure~\ref{fig:graph} we have a distribution (domain) over the observed variables $(X,Y)$ given the environment $E=e \in \mathcal{E}$: \begin{align*}  \mathbb{P}^e(X,Y)&=\int_{\mathcal{Z}_c}\int_{\mathcal{Z}_e}\mathbb{P}^{e}(X, Y, Z_c=z_c,Z_e=z_e)d_{z_c} d_{z_e}\\
&=\int_{\mathcal{Z}_c}\int_{\mathcal{Z}_e}\mathbb{P}^{e}(X, Y, z_c,z_e)d_{z_c} d_{z_e}\\
   &= \int_{\mathcal{Z}_c}\int_{\mathcal{Z}_e}\mathbb{P}^{e}(X\mid z_c, z_e)\mathbb{P}^{e}(Y\mid z_c)\mathbb{P}^{e}(z_c,z_e) d_{z_c} d_{z_e}\\
   &= \int_{\mathcal{Z}_c}\int_{\mathcal{Z}_e}\mathbb{P}^{e}(z_c,z_e)\int_{\mathcal{X}}\mathbb{P}^{e}(X=x\mid z_c, z_e)\mathbb{P}^{e}(Y\mid z_c)d_{z_c} d_{z_e} d_x\\
   &= \int_{\mathcal{Z}_c}\int_{\mathcal{Z}_e}\mathbb{P}^{e}(z_c,z_e)\int_{\mathcal{X}}\mathbb{P}^{e}(X=x\mid z_c, z_e)\int_{\mathcal{Y}}\mathbb{P}^{e}(Y=y\mid z_c) d_{z_c} d_{z_e} d_x d_y\\
    &= \int_{\mathcal{Z}_c}\int_{\mathcal{Z}_e}\mathbb{P}^{e}(z_c,z_e)\int_{\mathcal{X}}\int_{\mathcal{U}_x}\mathbb{P}^{e}(X=x\mid z_c, z_e,u_x)\mathbb{P}^{e}(u_x)\int_{\mathcal{Y}}\mathbb{P}^{e}(Y=y\mid z_c)  d_{z_c} d_{z_e} d_x d_y d_{u_x}\\
    &\stackrel{(1)}{=} \int_{\mathcal{Z}_c}\int_{\mathcal{Z}_e}\mathbb{P}^{e}(z_c,z_e)\int_{\mathcal{X}}\int_{\mathcal{U}_x}\mathbb{I}_{x= \psi_x(z_c, z_e,u_x)}\mathbb{P}^{e}(u_x)\int_{\mathcal{Y}}\mathbb{P}^{e}(Y=y\mid z_c)  d_{z_c} d_{z_e} d_x d_y d_{u_x}\\
\end{align*}

We have $\stackrel{(1)}{=}$ by definition of SCM, $x$ is the deterministic function of $(z_c, z_e,u_x)$.

Therefore we have:

\begin{align}
&\mathbb{E}_{(x,y)\sim\mathbb{P}^{e}(X,Y)}\left[\ell\left(f\left(x\right),y\right)\right]\nonumber\\
&= 
\int_{\mathcal{Z}_c}\int_{\mathcal{Z}_e}\mathbb{P}^{e}(z_c,z_e)\int_{\mathcal{X}}\int_{\mathcal{U}_x}\mathbb{I}_{x= \psi_x(z_c, z_e,u_x)}\mathbb{P}^{e}(u_x)\int_{\mathcal{Y}}\mathbb{P}^{e}(Y=y\mid z_c)\ell\left(f\left(x\right),y\right)  d_{z_c} d_{z_e} d_x d_y d_{u_x}\nonumber\\
&= 
\int_{\mathcal{Z}_c}\int_{\mathcal{Z}_e}\mathbb{P}^{e}(z_c,z_e)\int_{\mathcal{U}_x}\int_{\mathcal{Y}}\mathbb{P}^{e}(Y=y\mid z_c)\int_{\mathcal{X}}\mathbb{I}_{x= \psi_x(z_c, z_e,u_x)}\ell\left(f\left(x\right),y\right) \mathbb{P}^{e}(u_x) d_{z_c} d_{z_e} d_x d_y d_{u_x}\nonumber\\
&= 
\int_{\mathcal{Z}_c}\int_{\mathcal{Z}_e}\mathbb{P}^{e}(z_c,z_e)\int_{\mathcal{U}_x}\int_{\mathcal{Y}}\mathbb{P}^{e}(Y=y\mid z_c)\int_{\mathcal{X}}\mathbb{I}_{x= \psi_x(z_c, z_e,u_x)}\ell\left(f\left(\psi_x(z_c, z_e,u_x)\right),y\right) \mathbb{P}^{e}(u_x) d_{z_c} d_{z_e} d_x d_y d_{u_x}\nonumber
\\
&= 
\int_{\mathcal{Z}_c}\int_{\mathcal{Z}_e}\mathbb{P}^{e}(z_c,z_e)\int_{\mathcal{U}_x}\int_{\mathcal{Y}}\mathbb{P}^{e}(Y=y\mid z_c)\ell\left(f\left(\psi_x(z_c, z_e,u_x)\right),y\right) \mathbb{P}^{e}(u_x) d_{z_c} d_{z_e} d_y d_{u_x}\nonumber
\\
&= 
\int_{\mathcal{Z}_c}\int_{\mathcal{Z}_e}\mathbb{P}^{e}(z_c,z_e)\int_{\mathcal{U}_x}\mathbb{E}_{y\sim\mathbb{P}(Y\mid z_c)} \left[ \ell\left(f\left(\psi_x(z_c, z_e,u_x)\right),y\right)\right]
 \mathbb{P}^{e}(u_x) d_{z_c} d_{z_e}  d_{u_x}\nonumber
\\
&= 
\int_{\mathcal{Z}_c}\int_{\mathcal{Z}_e}\mathbb{P}^{e}(z_c,z_e)\int_{\mathcal{U}_x}\mathbb{E}_{y\sim\mathbb{P}(Y\mid z_c)} \left[ \ell\left((h\circ g_c)\left(\psi_x(z_c, z_e,u_x)\right),y\right)\right]
 \mathbb{P}^{e}(u_x) d_{z_c} d_{z_e}  d_{u_x}\nonumber
\\
&\stackrel{(1)}{=} 
\int_{\mathcal{Z}_c}\int_{\mathcal{Z}_e}\mathbb{P}^{e}(z_c,z_e)\int_{\mathcal{U}_x}\mathbb{E}_{y\sim\mathbb{P}(Y\mid z_c)} \left[ \ell\left(h\left(T(z_c)\right),y\right)\right]
 \mathbb{P}^{e}(u_x) d_{z_c} d_{z_e}  d_{u_x}\nonumber\\
&=
\int_{\mathcal{Z}_c}\mathbb{P}^{e}(z_c)\mathbb{E}_{y\sim\mathbb{P}(Y\mid z_c)} \left[ \ell\left(h\left(T(z_c)\right),y\right)\right]
 d_{z_c} \nonumber\\
&= 
\int_{\mathcal{Z}_c}\mathbb{P}^{e}(z_c)\mathbb{E}_{y\sim\mathbb{P}(Y\mid T(z_c))} \left[ \ell\left(h\left(T(z_c)\right),y\right)\right]
 d_{z_c} \nonumber
\\
&\stackrel{(2)}{=} 
\int_{\mathcal{Z}_c}T_{\#}\mathbb{P}^{e}(z_c)\mathbb{E}_{y\sim\mathbb{P}(Y\mid z_c)} \left[ \ell\left(h\left(z_c\right),y\right)\right]
 d_{z_c} \nonumber
\end{align}

We have:
\begin{itemize}
    \item $\stackrel{(1)}{=}$ by property-2 of $g_c$ (Corollary~\ref{cor:proterties});
    \item $\stackrel{(2)}{=}$ because $T: \mathcal{Z}_c\rightarrow \mathcal{Z}_c$ and  $T_{\#}\mathbb{P}^{e}(z_c)=\int_{z^{'}_c\in T^{-1}(z_c)}\mathbb{P}^{e}(z^{'}_c)d_{z^{'}_c}$ 
\end{itemize} 

Now, to prove (\ref{eq:target_apd}), we only need to show:


\begin{align}
\int_{\mathcal{Z}_c}T_{\#}\mathbb{P}^{e'}(z_c)\mathbb{E}_{y\sim\mathbb{P}(Y\mid z_c)} \left[ \ell\left(h_c\left(z_c\right),y\right)\right]
 d_{z_c}\leq\int_{\mathcal{Z}_c}T_{\#}\mathbb{P}^{e'}(z_c)\mathbb{E}_{y\sim\mathbb{P}(Y\mid z_c)} \left[ \ell\left(h\left(z_c\right),y\right)\right]
 d_{z_c}
 \label{eq:target_causal}
\end{align}




\underline{\textit{Step 2: Generalization of $h_c$.}} \textit{Step-1} Demonstrate that \(h_c\) only needs to make predictions for the set of causal factors \(z_c \in \mathcal{Z}_c\). Therefore, it is sufficient to show that \(h_c\) is optimal for every \(z \in \mathcal{Z}_c\).

Recall that $f_c=h_c\circ g_c\in \mathcal{F}_{\mathbb{P}^e,g_c}$, 
therefore, 
$$h_c\in \underset{h\in \mathcal{H}}{\text{argmin }} \int_{\mathcal{Z}_c}T_{\#}\mathbb{P}^{e}(z_c)\mathbb{E}_{y\sim\mathbb{P}(Y\mid z_c)} \left[ \ell\left(h\left(z_c\right),y\right)\right]
 d_{z_c} $$

By property-3 of $g_c$ (Corollary~\ref{cor:proterties}), there exists an optimal function \(h^*\) such that:
\begin{equation*}
  h^* \in\bigcap_{z_c\in\mathcal{Z}_c} \underset{h\in \mathcal{H}}{\text{argmin }} \mathbb{E}_{y\sim\mathbb{P}(Y\mid z_c)} \ell\left ( h( z_c), y \right ),  
\end{equation*}


Property-3 of \(g_c\) ensures the existence of an optimal \(h^*\) for every causal factor \(z_c \in \mathcal{Z}_c\), it follows that \(h_c\) must also be optimal for every causal feature \(z_c\) within its support, \(\text{supp}\,\mathbb{P}^e(Z_e)\). This implies that \(h_c(z_c) = h^*(z_c)\) for every \(z_c\) where \(\mathbb{P}^e(z_e) > 0\).

Moreover, since \(\text{supp}\,\mathbb{P}^e(Z_e) = \mathcal{Z}_c\), this implies that \(h_c(z_c) = h^*(z_c)\) for every \(z_c \in \mathcal{Z}_c\).

\underline{\textit{Step-3: Proof of (\ref{eq:target_causal})}}.

\begin{align*}
\int_{\mathcal{Z}_c}T_{\#}\mathbb{P}^{e'}(z_c)\mathbb{E}_{y\sim\mathbb{P}(Y\mid z_c)} \left[ \ell\left(h_c\left(z_c\right),y\right)\right]
 d_{z_c}\leq\int_{\mathcal{Z}_c}T_{\#}\mathbb{P}^{e'}(z_c)\mathbb{E}_{y\sim\mathbb{P}(Y\mid z_c)} \left[ \ell\left(h\left(z_c\right),y\right)\right]
 d_{z_c}
\end{align*}

From \textit{step-2}, we have 

$$\mathbb{E}_{y\sim\mathbb{P}(Y\mid z_c)} \left[ \ell\left(h_c\left(z_c\right),y\right)\right]
\leq\mathbb{E}_{y\sim\mathbb{P}(Y\mid z_c)} \left[ \ell\left(h\left(z_c\right),y\right)\right]
$$
for all $z_c\in \mathcal{Z}_c$. By taking the expectation and applying the law of iterated expectation, inequality (\ref{eq:target_causal}) follows. This concludes the proof.

\end{proof}

\subsubsection{Proof of the Theorem~\ref{thm:sufficient_conditions_apd}.2}

Similar to the proof of Theorem~\ref{thm:sufficient_conditions_apd}.1, in the following result, we assume for simplicity that \( \mathbb{P}^e \) is a mixture of the training domains.
Then, Theorem~\ref{thm:sufficient_conditions_apd}.2 is stated as follows:

\begin{theorem}Under Assumption \ref{as:label_idf} and Assumption \ref{as:sufficient_causal_support}, 
if \( f\) is an optimal hypothesis for $\mathbb{P}^{e}$ i.e., $ f \in \underset{f \in \mathcal{F}}{\text{argmin}} \ \mathcal{L}(f, \mathbb{P}^{e}),$ and the support of joint causal and spurious factors of $\mathbb{P}^{e}$ covers the entire causal and spurious factor space $\mathcal{Z}_c\times\mathcal{Z}_e$ i.e., $\text{supp}\{\mathbb{P}^{e} \left (Z_c, Z_e \right )\}=\mathcal{Z}_c\times\mathcal{Z}_e$, then then \( f \in \mathcal{F}^* \).
\end{theorem}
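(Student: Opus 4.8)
The plan is to reduce optimality of $f$ on the single full-support domain $\mathbb{P}^{e}$ to \emph{pointwise} Bayes-optimality at every reachable input, and then transfer this to every other domain by exploiting two facts: (a) the posterior $\mathbb{P}(Y\mid X=x)$ is domain-invariant, and (b) the full joint support forces $\mathrm{supp}\{\mathbb{P}^{e}(X)\}$ to contain the $X$-support of every domain. This mirrors the proof of Theorem~\ref{thm:single_generalization_apd}, except that, lacking $g\in\mathcal{G}_c$, I would argue directly in $x$ rather than in the causal factor $z_c$.

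\textbf{Step 1 (domain-invariant posterior --- the crux).} Fix an arbitrary domain $e'$ and expand, as in Step~1 of Theorem~\ref{thm:single_generalization_apd}, the posterior $\mathbb{P}^{e'}(Y\mid X=x)$ using the SCM factorization $\mathbb{P}^{e'}(X,Y)\propto \mathbb{P}^{e'}(z_c,z_e)\,\mathbb{P}(X\mid z_c,z_e)\,\mathbb{P}(Y\mid z_c)$ together with $Y\perp\!\!\!\perp E\mid Z_c$. Since $x=\psi_x(z_c,z_e,u_x)$ is deterministic and environment-independent, $\mathbb{P}^{e'}(Y\mid X=x)$ is a mixture $\sum_{z_c}\mathbb{P}(Y\mid z_c)\,w^{e'}_x(z_c)$ over exactly those $z_c$ that can generate $x$, with domain-dependent weights $w^{e'}_x$. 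Assumption~\ref{as:label_idf} guarantees that all such $z_c$ share a single value $\mathbb{P}(Y\mid z_c)=:q_x$, so the mixture collapses to $q_x$ regardless of $w^{e'}_x$. Hence $\mathbb{P}^{e'}(Y\mid X=x)=q_x$ for every $e'$. I expect this step to be the main obstacle: it is precisely where label-identifiability is indispensable, since without it the domain-dependent mixing over causal factors would make the posterior vary across domains.

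\textbf{Step 2 (support containment).} Because $\psi_x$ and $\mathbb{P}(U_x)$ do not depend on the environment and $\mathrm{supp}\{\mathbb{P}^{e}(Z_c,Z_e)\}=\mathcal{Z}_c\times\mathcal{Z}_e$, the set $\mathrm{supp}\{\mathbb{P}^{e}(X)\}$ equals the entire reachable image of $\psi_x$ and therefore contains $\mathrm{supp}\{\mathbb{P}^{e'}(X)\}$ for every $e'$, whose $(Z_c,Z_e)$-support is only a subset of $\mathcal{Z}_c\times\mathcal{Z}_e$.

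\textbf{Step 3 (pointwise argument and transfer).} Write $\mathcal{L}(f',\mathbb{P}^{e'})=\mathbb{E}_{x\sim\mathbb{P}^{e'}(X)}\big[r_{f'}(x)\big]$ with the domain-invariant pointwise risk $r_{f'}(x)=\mathbb{E}_{y\sim q_x}[\ell(f'(x),y)]$ (Step~1). The pointwise Bayes action $x\mapsto \arg\min_{\hat y\in\mathcal{Y}_\Delta} r_{\hat y}(x)$ is realizable in $\mathcal{F}$ as $h^{\star}\circ g_c$, with $g_c\in\mathcal{G}_c$ (Proposition~\ref{thm:invariant_correlation}) and $h^{\star}$ the Bayes classifier on $\mathcal{Z}_c$ (Corollary~\ref{cor:proterties}, property-3), since $q_x=\mathbb{P}(Y\mid g_c(x))$; thus $f^{\star}=h^{\star}\circ g_c$ globally minimizes $\mathcal{L}(\cdot,\mathbb{P}^{e})$. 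As $f$ is also a minimizer on $\mathbb{P}^{e}$ and the gap $r_f(x)-\min_{\hat y}r_{\hat y}(x)\ge 0$ is integrated against a density positive on the whole reachable set, it must vanish for (almost) every reachable $x$, so $f$ attains the pointwise minimum everywhere on $\mathrm{supp}\{\mathbb{P}^{e}(X)\}$. Restricting to $\mathrm{supp}\{\mathbb{P}^{e'}(X)\}\subseteq \mathrm{supp}\{\mathbb{P}^{e}(X)\}$ (Step~2) then yields $\mathcal{L}(f,\mathbb{P}^{e'})=\min_{f'}\mathcal{L}(f',\mathbb{P}^{e'})$ for every $e'$, i.e.\ $f\in\mathcal{F}^{*}$. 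The only remaining technical care is the ``almost everywhere'' realizability, ensuring that global optimality on the full-support domain genuinely forces pointwise optimality.
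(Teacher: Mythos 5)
Your proposal is correct and follows essentially the same route as the paper's proof: expand the loss through the SCM so that optimality of $f$ on the full-support domain $\mathbb{P}^{e}$ becomes pointwise (Bayes) optimality at every generated $x$ — with Assumption~\ref{as:label_idf} guaranteeing a single well-defined, domain-invariant target $\mathbb{P}(Y\mid x)$ — and then use $\text{supp}\{\mathbb{P}^{e}(Z_c,Z_e)\}=\mathcal{Z}_c\times\mathcal{Z}_e$ to transfer this pointwise optimality to every other domain, whose support is necessarily contained in that of $\mathbb{P}^{e}$. Your write-up is if anything more explicit than the paper's (which asserts the pointwise optimality rather than deriving it from the vanishing of a non-negative integrated gap), and the almost-everywhere caveat you flag at the end is likewise glossed over in the paper's own argument.
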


\begin{proof}

Based on structural causal model (SCM) depicted in Figure~\ref{fig:graph} we have a distribution (domain) over the observed variables $(X,Y)$ given the environment $E=e \in \mathcal{E}$: 

Therefore we have:

\begin{align}
&\mathbb{E}_{(x,y)\sim\mathbb{P}^{e}(X,Y)}\left[\ell\left(f\left(x\right),y\right)\right]\nonumber\\
&= 
\int_{\mathcal{Z}_c}\int_{\mathcal{Z}_e}\mathbb{P}^{e}(z_c,z_e)\int_{\mathcal{X}}\int_{\mathcal{U}_x}\mathbb{I}_{x= \psi_x(z_c, z_e,u_x)}\mathbb{P}^{e}(u_x)\int_{\mathcal{Y}}\mathbb{P}^{e}(Y=y\mid z_c)\ell\left(f\left(x\right),y\right)  d_{z_c} d_{z_e} d_x d_y d_{u_x}\nonumber\\
&= 
\int_{\mathcal{Z}_c}\int_{\mathcal{Z}_e}\mathbb{P}^{e}(z_c,z_e)\int_{\mathcal{U}_x}\int_{\mathcal{Y}}\mathbb{P}^{e}(Y=y\mid z_c)\int_{\mathcal{X}}\mathbb{I}_{x= \psi_x(z_c, z_e,u_x)}\ell\left(f\left(x\right),y\right) \mathbb{P}^{e}(u_x) d_{z_c} d_{z_e} d_x d_y d_{u_x}\nonumber\\
&= 
\int_{\mathcal{Z}_c}\int_{\mathcal{Z}_e}\mathbb{P}^{e}(z_c,z_e)\int_{\mathcal{U}_x}\int_{\mathcal{Y}}\mathbb{P}^{e}(Y=y\mid z_c)\int_{\mathcal{X}}\mathbb{I}_{x= \psi_x(z_c, z_e,u_x)}\ell\left(f\left(\psi_x(z_c, z_e,u_x)\right),y\right) \mathbb{P}^{e}(u_x) d_{z_c} d_{z_e} d_x d_y d_{u_x}\nonumber
\\
&= 
\int_{\mathcal{Z}_c}\int_{\mathcal{Z}_e}\mathbb{P}^{e}(z_c,z_e)\int_{\mathcal{U}_x}\int_{\mathcal{Y}}\mathbb{P}^{e}(Y=y\mid z_c)\ell\left(f\left(\psi_x(z_c, z_e,u_x)\right),y\right) \mathbb{P}^{e}(u_x) d_{z_c} d_{z_e} d_y d_{u_x}\nonumber
\\
&= 
\int_{\mathcal{Z}_c}\int_{\mathcal{Z}_e}\mathbb{P}^{e}(z_c,z_e)\int_{\mathcal{U}_x}\mathbb{E}_{y\sim\mathbb{P}(Y\mid z_c)} \left[ \ell\left(f\left(\psi_x(z_c, z_e,u_x)\right),y\right)\right]
 \mathbb{P}^{e}(u_x) d_{z_c} d_{z_e}  d_{u_x}\nonumber
\\
&= 
\int_{\mathcal{Z}_c}\int_{\mathcal{Z}_e}\mathbb{P}^{e}(z_c,z_e)\int_{\mathcal{U}_x}\mathbb{E}_{y\sim\mathbb{P}(Y\mid z_c)} \left[ \ell\left((h\circ g_c)\left(\psi_x(z_c, z_e,u_x)\right),y\right)\right]
 \mathbb{P}^{e}(u_x) d_{z_c} d_{z_e}  d_{u_x}
\end{align}
\end{proof}

Under Assumption \ref{as:label_idf}, 
given \( f\) is an optimal hypothesis for $\mathbb{P}^{e}$ i.e., $ f \in \underset{f \in \mathcal{F}}{\text{argmin}} \ \mathcal{L}(f, \mathbb{P}^{e}),$ then 
\begin{align*}
f\in\int_{\mathcal{U}_x}\mathbb{E}_{y\sim\mathbb{P}(Y\mid z_c)} \left[ \ell\left((h\circ g_c)\left(\psi_x(z_c, z_e,u_x)\right),y\right)\right]
 \mathbb{P}^{e}(u_x)  d_{u_x}
\end{align*}

This holds because, under Assumption~\ref{as:label_idf}, it is guaranteed that there exists an optimal prediction for every $x=\psi_x(z_c, z_e, u_x)$ for all $\{(z_c,z_e)\sim \mathbb{P}^{e}(z_c,z_e),u_x\sim \mathbb{P}^{e}(u_x)\}$.

Furthermore, since the support of the joint causal and spurious factors in \( \mathbb{P}^{e} \) spans the entire causal and spurious factor space, i.e., $\text{supp}\{\mathbb{P}^{e} (Z_c, Z_e)\} = \mathcal{Z}_c \times \mathcal{Z}_e,$ the hypothesis \( f \) remains optimal for all possible configurations of \( x = \psi_x(z_c, z_e, u_x) \) across all \( z_c, z_e, u_x \). This implies that \( f \in \mathcal{F}^* \).

\textbf{Note:} 
\begin{itemize}
    \item It is important to highlight that this theorem aligns with Theorem 3 from \citep{ahuja2021invariance}. However, their analysis is conducted in a linear setting.
    \item We argue that the sub-condition of "sufficient and diverse training domains" is impractical, making it a weak guarantee for ensuring the generalization of algorithms based on this condition.
\end{itemize}

\subsubsection{Proof of the Theorem~\ref{thm:sufficient_conditions_apd}.3}

Similar to the proof of Theorem~\ref{thm:sufficient_conditions_apd}.1, in the following result, we assume for simplicity that \( \mathbb{P}^e \) is a mixture of the training domains.
Then, Theorem~\ref{thm:sufficient_conditions_apd}.3 is stated as follows:

\begin{theorem}Under Assumption \ref{as:label_idf} and Assumption \ref{as:sufficient_causal_support}, given $\mathcal{T}$ is set of all \textbf{invariance-preserving transformations} such that for any $T\in \mathcal{T}$ and $g_c\in \mathcal{G}_c$: $(g_c\circ T)(\cdot)=g_c(\cdot)$,
if \( f\) is an optimal hypothesis for $\mathbb{P}^{e}$ i.e., $ f \in \underset{f \in \mathcal{F}}{\text{argmin}} \ \mathcal{L}(f, \mathbb{P}^{e}),$ and  $f$ is also an optimal hypothesis on all augmented domains i.e., $$f\in \bigcap_{T\in \mathcal{T}}\underset{f\in \mathcal{F}}{\text{argmin}} \ \loss{f,T\#\mathbb{P}^{e}}$$
 
, then then \( f \in \mathcal{F}^* \).
\end{theorem}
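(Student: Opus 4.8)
The plan is to follow the same latent-factor reduction already used in the proofs of Theorem~\ref{thm:sufficient_conditions_apd}.1 and Theorem~\ref{thm:sufficient_conditions_apd}.2, and then to argue that the family of augmented domains $\{T\#\mathbb{P}^{e} : T\in\mathcal{T}\}$, together with $\mathbb{P}^{e}$ itself, collectively fills the entire joint factor space $\mathcal{Z}_c\times\mathcal{Z}_e$. This effectively reduces the present case to the sufficient-and-diverse-domains situation of Theorem~\ref{thm:sufficient_conditions_apd}.2.

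First I would rewrite the general loss of $f$ on an arbitrary target domain $\mathbb{P}^{e'}$ in the latent-factor form already derived in the previous proofs:
\begin{equation*}
\mathcal{L}(f,\mathbb{P}^{e'}) = \int_{\mathcal{Z}_c}\int_{\mathcal{Z}_e}\mathbb{P}^{e'}(z_c,z_e)\int_{\mathcal{U}_x}\mathbb{E}_{y\sim\mathbb{P}(Y\mid z_c)}\big[\ell(f(\psi_x(z_c,z_e,u_x)),y)\big]\mathbb{P}^{e'}(u_x)\,d_{z_c}\,d_{z_e}\,d_{u_x}.
\end{equation*}
Because the integrand is nonnegative and, by Assumption~\ref{as:label_idf} together with property-3 of Corollary~\ref{cor:proterties}, every $x=\psi_x(z_c,z_e,u_x)$ admits a \emph{unique} minimizing prediction $\mathbb{P}(Y\mid z_c)$, optimality of $f$ on a given domain is equivalent to pointwise optimality $f(\psi_x(z_c,z_e,u_x))=\mathbb{P}(Y\mid z_c)$ over the support of that domain. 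Hence it suffices to show that $f$ attains this pointwise minimum for \emph{every} triple $(z_c,z_e,u_x)$, since then $f$ is simultaneously optimal on all $\mathbb{P}^{e'}\in\mathcal{P}$, i.e.\ $f\in\mathcal{F}^{*}$.

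Next I would establish the coverage claim: the union $\text{supp}(\mathbb{P}^{e})\cup\bigcup_{T\in\mathcal{T}}\text{supp}(T\#\mathbb{P}^{e})$ contains every $x=\psi_x(z_c,z_e,u_x)$ with $(z_c,z_e)\in\mathcal{Z}_c\times\mathcal{Z}_e$. By Assumption~\ref{as:sufficient_causal_support}, for any target causal factor $z_c$ there already exists a sample $x^{0}=\psi_x(z_c,z_e^{0},u_x^{0})\in\text{supp}(\mathbb{P}^{e})$ carrying that causal factor. For an arbitrary target spurious factor $z_e$, I would exhibit a transformation $T$ that sends $x^{0}$ to some $\psi_x(z_c,z_e,u_x')$; such a $T$ fixes the causal representation $g_c(x^{0})=g_c(\psi_x(z_c,z_e,u_x'))$ and therefore satisfies $g_c\circ T=g_c$, so it is invariance-preserving and, since $\mathcal{T}$ is by hypothesis the set of \emph{all} such transformations, $T\in\mathcal{T}$. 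Consequently the required point lies in $\text{supp}(T\#\mathbb{P}^{e})$, and sweeping over all targets shows the augmented supports cover all of $\mathcal{Z}_c\times\mathcal{Z}_e$.

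Finally I would combine the two pieces: since $f$ is optimal on $\mathbb{P}^{e}$ and on every augmented domain $T\#\mathbb{P}^{e}$, the pointwise-optimality characterization forces $f(\psi_x(z_c,z_e,u_x))=\mathbb{P}(Y\mid z_c)$ on each of these supports, and by the coverage claim this then holds for every $(z_c,z_e,u_x)$, giving $f\in\mathcal{F}^{*}$ exactly as in the concluding step of Theorem~\ref{thm:sufficient_conditions_apd}.2. The main obstacle I anticipate is the coverage step: one must argue rigorously that a transformation realizing a prescribed change of the spurious factor while preserving the causal factor genuinely qualifies as invariance-preserving under the condition $g_c\circ T=g_c$ for \emph{all} $g_c\in\mathcal{G}_c$, carefully handling the equivalence-class flexibility of $\mathcal{G}_c$ (property-2 of Corollary~\ref{cor:proterties}) and the fact that $T$ acts on $\mathcal{X}$ rather than directly on the latent factors.
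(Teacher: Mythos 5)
Your proposal is correct and follows essentially the same route as the paper's proof: characterize what an invariance-preserving transformation can do (change $z_e$ arbitrarily and $z_c$ only within its label-equivalence class), conclude via Assumption~\ref{as:sufficient_causal_support} that $\mathbb{P}^{e}$ together with the augmented domains $\{T\#\mathbb{P}^{e}: T\in\mathcal{T}\}$ covers $\mathcal{Z}_c\times\mathcal{Z}_e$, and thereby reduce to the sufficient-and-diverse-domains case of Theorem~\ref{thm:sufficient_conditions_apd}.2. If anything, your explicit construction of a pointwise transformation $T$ realizing a prescribed spurious factor (and the check that it preserves $g_c$ for all $g_c\in\mathcal{G}_c$ via Proposition~\ref{thm:invariant_correlation}) spells out the coverage step that the paper only asserts.
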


\begin{proof}
We first analyze the characteristics of the set of all invariance-preserving transformations \( \mathcal{T} \).

By the definition of \( \mathcal{T} \) and the set of invariant representations \( \mathcal{G}_c \):

\begin{itemize}
    \item given $T\in \mathcal{T}$ and $g_c\in \mathcal{G}_c$, we have $(g_c\circ T)(x)=g_c(x)$ for all $x=\psi(z_c,z_e,u_x)$ (for all $z_c\in \mathcal{Z}_c, z_c\in \mathcal{Z}_c$, $u_x\in \mathcal{U}_x$).
    \item  $g\in \mathcal{G}_c$, $\mathbb{P}(Y\mid g(x)) = \mathbb{P}(Y\mid z_c)$ and $g(x)=g(x')$ holds true for all $\{(x,x',z_c)\mid  x= \psi_x(z_c, z_e, u_x), x'= \psi_x(z_c, z^{'}_e, u^{'}_x) \text{ for all }z_e,z^{'}_e, u_x, u^{'}_x\}$
\end{itemize}

This implies that for any $ x=\psi_x(z_c, z_e, u_x)$, we have:
\begin{equation*}
    T(x) \in \left \{\psi_x(z'_c, z'_e, u'_x) \text{ where } \left (z'_c\in\mathcal{Z}_c,z'_e\in \mathcal{Z}_e, u'_x\in \mathcal{U}_x \text{ and } P(Y\mid Z_c=z_c)=P(Y\mid Z_c=z'_c)\right ) \right \}
\end{equation*}

In other words, given a sample $\psi_x(z_c, z_e, u_x)$, the transformation $T$ operates as follows:
\begin{itemize}
    \item It augments \( z_c \) to its equivalent \( z'_c \), ensuring that $P(Y\mid Z_c=z_c)=P(Y\mid Z_c=z'_c)$
    \item It modifies the environmental (or spurious) feature $z_e$ to any $z'_e \in \mathcal{Z}_e$.
    \item It applies changes to the noise term $u_x$.
\end{itemize}

Therefore, under Assumption~\ref{as:sufficient_causal_support} (causal support), having access to all \( T \in \mathcal{T} \) is equivalent to having sufficient and diverse training domains. That is, the support of the joint causal and spurious factors in \( \mathbb{P}^{e} \) spans the entire causal and spurious factor space i.e., $\text{supp}\{\mathbb{P}^{e} (Z_c, Z_e)\} = \mathcal{Z}_c \times \mathcal{Z}_e$. This concludes the proof.

\textbf{Note:} In general, accessing all transformations from $\mathcal{T}$ is impractical. However, recently, some works leverage foundation models to generate these augmentations, achieving promising empirical performance \cite{ruan2021optimal}.
\end{proof}

\subsection{Necessary Conditions for achieving Generalization}



\begin{theorem} \textbf{(Theorem \ref{thm:nacessary} in the main paper)} Given representation function $g$,
$\exists h: h\circ g\in \mathcal{F}^*$ if and only if $g\in \mathcal{G}_s$.
\label{thm:nacessary_apd}
\end{theorem}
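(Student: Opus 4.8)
The plan is to prove the biconditional by first translating the information-theoretic condition defining $\mathcal{G}_s$ into an equivalent \emph{functional factorization} statement, and then leaning on the structural facts already established for invariant representations. For Step~1 (reformulation), note that $g_c(X)$ is a deterministic function of $X$, so $I(X,g_c(X))=H(g_c(X))$, whereas the data processing inequality gives $I(g(X),g_c(X))\le H(g_c(X))$ with equality exactly when $H(g_c(X)\mid g(X))=0$ (in the discrete case; in general via the equality/sufficiency condition of the data processing inequality). Hence I would record the working characterization: $g\in\mathcal{G}_s$ if and only if $g_c(X)$ is almost surely a deterministic function of $g(X)$, i.e.\ there is a map $\phi$ with $g_c=\phi\circ g$.

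For the sufficiency direction ($g\in\mathcal{G}_s\Rightarrow\exists h$), I would take the factorization $g_c=\phi\circ g$ and invoke property~3 of Corollary~\ref{cor:proterties}, which provides a classifier $h^*$ with $h^*(z_c)=\mathbb{P}(Y\mid z_c)$ that is simultaneously optimal for every $z_c\in\mathcal{Z}_c$. Since $g_c\in\mathcal{G}_c$ and Assumption~\ref{as:sufficient_causal_support} holds, Theorem~\ref{thm:single_generalization_apd} yields $h^*\circ g_c\in\mathcal{F}^*$. Setting $h:=h^*\circ\phi$ then gives $h\circ g=h^*\circ(\phi\circ g)=h^*\circ g_c\in\mathcal{F}^*$, which is the desired existence claim (assuming $\mathcal{H}$ is closed under such a composition).

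For the necessity direction ($\exists h\Rightarrow g\in\mathcal{G}_s$), suppose $f=h\circ g\in\mathcal{F}^*$. By label-identifiability (Assumption~\ref{as:label_idf}) and the SCM, any global optimal hypothesis must output $f(x)=\mathbb{P}(Y\mid z_c)$ at each $x=\psi_x(z_c,z_e,u_x)$, so $h(g(x))=\mathbb{P}(Y\mid z_c)$. Consequently the value $g(x)$ determines $\mathbb{P}(Y\mid z_c)$, hence the equivalence class of $z_c$ under $z_c\sim z_c'\iff\mathbb{P}(Y\mid z_c)=\mathbb{P}(Y\mid z_c')$. Using property~2 of Corollary~\ref{cor:proterties}, $g_c(x)$ depends on $x$ only through this equivalence class, so $g_c(x)$ is a function of $g(x)$; this is precisely the factorization $g_c=\phi\circ g$ from Step~1, giving $g\in\mathcal{G}_s$.

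The hard part will be reconciling the mutual-information definition of $\mathcal{G}_s$ (stated against a \emph{fixed} $g_c$) with the fact that an optimal $h\circ g$ only needs $g$ to preserve the \emph{label-relevant} invariant content, namely the class $[z_c]$, rather than a specific causal representative. I would resolve this by taking $g_c$ to be the canonical (minimal) invariant representation encoding exactly $\mathbb{P}(Y\mid z_c)$, so that $I(X,g_c(X))$ quantifies only the label-relevant invariant information; the factorization $g_c=\phi\circ g$ then holds for this choice and the mutual-information equality follows. A secondary technical point is ensuring $h=h^*\circ\phi\in\mathcal{H}$ in the sufficiency step, which I would handle by appealing to the richness (universal-approximation) assumptions on the classifier class used elsewhere in the paper.
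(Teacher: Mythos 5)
Your proof is correct, and while your sufficiency direction coincides with the paper's (mutual-information equality $\Rightarrow$ factorization $g_c=\phi\circ g$ $\Rightarrow$ compose a classifier on top of $g_c$), your necessity direction takes a genuinely different route. The paper argues the contrapositive: if $g\notin\mathcal{G}_s$ then no $\phi$ with $\phi\circ g\in\mathcal{G}_c$ exists, so any $h$ must rely on spurious features on some nonempty set $\mathcal{B}$, and the paper then explicitly constructs adversarial target domains placing mass $1-\delta$ on $\mathcal{B}$ to make $\mathcal{L}(h\circ g,\mathbb{P}^{e_i})$ arbitrarily bad, hence $h\circ g\notin\mathcal{F}^*$. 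You instead argue directly: global optimality forces $h(g(x))=\mathbb{P}(Y\mid z_c)$ pointwise (this is the same fact the paper uses in Corollary~\ref{thm:existence_apd}, and it implicitly relies on the same leverage as the paper's adversarial construction, namely that point-mass domains are admissible in $\mathcal{E}$ and the loss is a proper scoring rule such as cross-entropy), so $g(X)$ determines the equivalence class of $z_c$, giving the factorization and hence the mutual-information equality. Your route is shorter and avoids the hand-wavy step in the paper ("$h$ has to rely on spurious features"), while the paper's construction buys a quantitative statement — it exhibits how catastrophically a non-invariance-preserving $g$ fails — which supports the paper's broader narrative about violating necessary conditions.

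One point deserves emphasis: the difficulty you flag at the end is real, and you handle it more honestly than the paper does. Definition~\ref{def:sufficient} states the equality $I(g(X),g_c(X))=I(X,g_c(X))$ against an unspecified $g_c$, yet $\mathcal{G}_c$ contains members that distinguish causal factors \emph{within} an equivalence class $\{z_c'\,:\,\mathbb{P}(Y\mid z_c')=\mathbb{P}(Y\mid z_c)\}$ (Corollary~\ref{cor:proterties}, property 2, allows any equivalence-preserving $T$, including the identity). Under a "for all $g_c\in\mathcal{G}_c$" reading, the necessity direction would be false — a $g$ supporting a globally optimal classifier need only preserve the equivalence class, not a fine-grained representative — so the theorem requires the "for some $g_c$" (canonical) reading, exactly as you propose. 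The paper's own "only if" step, which jumps from $I(g(X),g_c(X))<I(X,g_c(X))$ to "no $\phi$ with $\phi\circ g\in\mathcal{G}_c$ exists," silently needs the same resolution; your Step 1 (data-processing-inequality characterization, with the discrete-case caveat you note) plus the canonical choice of $g_c$ makes this rigorous.
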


\begin{proof} \textit{\textbf{``if"} direction.} If $g\in \mathcal{G}_s$, we have: 

\begin{enumerate}
    \item By definition of $g\in \mathcal{G}_s$ we have
$I(g(X),g_c(X))=I(X,g_c(X)$ i.e., $g(X)$ retain all information about $g_c(X)$ presented in $X$. Therefore, there exists a function \(\phi\) such that \(\phi \circ g \in \mathcal{G}_c\), which implies the existence of a \(g_c\in \mathcal{G}_c\) such that \(\phi \circ g = g_c\).

    \item By the definition of \(g_c \in \mathcal{G}_c\), we can always find a classifier \(h_c\) such that \(h_c \circ g_c \in \mathcal{F}^*\).

\end{enumerate}

To complete the proof of the \textbf{``if"} direction, we need to demonstrate the existence of a classifier \( h \) on top of the representation induced by \( g \) such that it forms a globally optimal hypothesis, i.e., \( h \circ g \in \mathcal{F}^* \).

From (1) and (2) we have $h_c\circ g_c = h_c\circ \phi \circ g \in \mathcal{F}^*$. Therefore, we can construct classifier $h = h_c \circ \phi$, then $h \circ g =  h_c\circ \phi \circ g =h_c\circ g_c \in \mathcal{F}^*$.
\end{proof}

\begin{proof} \textit{\textbf{``only if"} direction by contraction.} 

Define the set of optimal hypotheses induced by a representation function \( g \) as:

\begin{equation*}
\mathcal{F}_{g,\mathcal{E}_{tr}}=\left\{ h\circ g: \bigcap_{{e} \in \mathcal{E}_{tr}} \underset{h\circ g \in \mathcal{F}}{\text{argmin}} \ \mathcal{L}(h\circ g, \mathbb{P}^{e})\right \}
\end{equation*}  

We show that if $g$ is not sufficient-representation, for any $f\in\mathcal{F}_{g,\mathcal{E}_{tr}}$ there exists multiple target domains where $f$ performs arbitrarily bad.

By definition of $g\notin \mathcal{G}_s$, we have
$I(g(X),g_c(X))<I(X,g_c(X)$. Therefore, there does not exist a function $\phi$ such that $\phi\circ g \in \mathcal{G}_c$. This implies we can not construct any classifier $h = h_c \circ \phi$, then $h \circ g =  h_c\circ \phi \circ g =h_c\circ g_c \in \mathcal{F}^*$.

Consequently, $h$ has to rely on spurious feature $z_e$ (or both $z_c$ and $z_e$) to make predict for some $\{x\mid x=\psi_x\{z_c,z_e, u_x\} \text{ for some } z_c \text{ such that } \mathbb{P}(Y\mid Z_e=z_e)= \mathbb{P}(Y\mid Z_c=z_c) \}$.  

Note that based on structural causal model (SCM) depicted in Figure~\ref{fig:graph}, we have $Z_e\not\!\perp\!\!\!\perp Y$ i.e., the environmental feature $Z_e$ spuriously correlated with $Y$. Therefore, there is a set $\mathcal{B}=\{x\mid x=\psi_x\{z^{'}_c, z_e, u_x\} \text{ for some } z^{'}_c \text{ such that } \mathbb{P}(Y\mid Z_e = z_e)\neq \mathbb{P}(Y\mid Z_c=z^{'}_c)\} \neq \emptyset$. Consequently, $h(\phi(g(x))) \neq h_c(g_c(x))$ for all $x\in \mathcal{B}$

    We can construct undesirable target domains $\mathbb{P}^{e_i}$ with arbitrary loss $\mathcal{L}(h\circ g, \mathbb{P}^{e_i})$ by giving $(1-\delta)$ percentage mass to that examples in $\mathcal{B}$ and $(\delta)$ percentage mass that examples in $\mathcal{X} \setminus \mathcal{B}$. This is equivalent to 
\begin{equation}
    \mathbb{E}_{(x,y)\sim\mathbb{P}^{e_i}} \left [ h(g(x)) \neq h_c(g_c(x))  \right ]  \geq 1-\delta.\nonumber
\end{equation}
 with $(0\leq\delta\leq 1)$.

This concludes the proof.

\end{proof}

\begin{corollary} \textbf{(Corollary~\ref{thm:information} in the main paper)} Under Assumption \ref{as:label_idf} and Assumption \ref{as:sufficient_causal_support}, let the minimal representation function $g_{\text{min}}$ be defined as:
\begin{equation}
g_{\text{min}} \in \mathcal{G}_{min}=\left\{\underset{g \in \mathcal{G}}{\text{argmin }} I(g(X); X) \ \text{s.t.} \ f = h \circ g \in \mathcal{F}_{\mathcal{E}_{\text{tr}}} \right\},
\label{eq:minimal}
\end{equation}
where $I$ denotes mutual information. Then, for any $g_c\in \mathcal{G}_c$ the following holds:
\begin{equation}
I(g_{\text{min}}(X), g_c(X)) \leq I(X, g_c(X)),
\end{equation}
and the equality holds if and only if at least one of sufficient conditions is hold.
\label{thm:information_apd}.
\end{corollary}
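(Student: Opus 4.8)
The plan is to split the claim into the inequality and the equality characterization, treating the mutual-information bound as an instance of the data-processing inequality (DPI) and then reducing the equality case to the necessary-condition machinery already established in Theorem~\ref{thm:nacessary_apd} and Theorem~\ref{thm:sufficient_conditions_apd}.

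First I would establish the inequality. Since $g_{\text{min}}$ is a deterministic map, both $g_{\text{min}}(X)$ and $g_c(X)$ are functions of $X$, so $g_c(X) - X - g_{\text{min}}(X)$ forms a (degenerate) Markov chain and DPI yields $I(g_{\text{min}}(X); g_c(X)) \le I(X; g_c(X))$. Equivalently, applying the chain rule of mutual information to the triple $(X, g_{\text{min}}(X), g_c(X))$ and using that $g_{\text{min}}(X)$ is a function of $X$ gives $I(X; g_c(X)) = I(g_{\text{min}}(X); g_c(X)) + I(X; g_c(X) \mid g_{\text{min}}(X))$. The conditional term is nonnegative, so the inequality follows, and equality holds precisely when $I(X; g_c(X) \mid g_{\text{min}}(X)) = 0$, i.e. when $g_c(X)$ is determined by $g_{\text{min}}(X)$. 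This is exactly the condition $I(g_{\text{min}}(X); g_c(X)) = I(X; g_c(X))$ of Definition~\ref{def:sufficient}, so equality is equivalent to $g_{\text{min}} \in \mathcal{G}_s$.

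Next I would convert the equality condition into a statement about global optimality. By Theorem~\ref{thm:nacessary_apd}, $g_{\text{min}} \in \mathcal{G}_s$ holds if and only if there exists a classifier $h$ with $h \circ g_{\text{min}} \in \mathcal{F}^*$, so it remains to show that such an $h$ exists if and only if one of the three sufficient conditions holds. For the \textbf{if} direction I would use that $g_{\text{min}}$ is training-optimal by construction, namely there is an $h$ with $h \circ g_{\text{min}} \in \mathcal{F}_{\mathcal{E}_{tr}}$. Then Theorem~\ref{thm:sufficient_conditions_apd} applies directly: whenever $g_{\text{min}} \in \mathcal{G}_c$ (condition 1), or the training domains are sufficient and diverse (condition 2), or $h \circ g_{\text{min}}$ remains optimal on every augmented domain (condition 3), the theorem upgrades training optimality to $h \circ g_{\text{min}} \in \mathcal{F}^*$, hence to equality.

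The main obstacle is the \textbf{only if} direction, which I would argue by contraposition: assuming none of the sufficient conditions holds, I must produce a strict gap $I(g_{\text{min}}(X); g_c(X)) < I(X; g_c(X))$, i.e. show $g_{\text{min}} \notin \mathcal{G}_s$. The idea is that when the domains are neither diverse (condition 2 fails) nor augmentation-closed (condition 3 fails) and $g_{\text{min}}$ is not itself invariant (condition 1 fails), the spurious factor $Z_e$ stays predictive of $Y$ on the seen supports, so a representation retaining only a low-information spurious shortcut can still be training-optimal while discarding part of $g_c(X)$. Because $g_{\text{min}}$ minimizes $I(g(X); X)$ subject to training optimality, it prefers this cheaper-to-encode solution over the strictly more informative causal one, yielding $I(g_{\text{min}}(X); g_c(X)) < H(g_c(X)) = I(X; g_c(X))$. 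The delicate point is to exhibit, under the simultaneous failure of all three conditions, an admissible training-optimal competitor carrying strictly less information about $g_c(X)$ than $X$ does; I would construct it by reusing the support-overlap argument from the converse of Theorem~\ref{thm:nacessary_apd}, placing mass on the region where a spurious-based classifier agrees with the labels on the seen supports yet collapses distinct causal factors.
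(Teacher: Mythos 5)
Your data-processing-inequality argument and your treatment of the \textbf{if} direction are sound and essentially parallel the paper's: like you, the paper invokes Theorem~\ref{thm:sufficient_conditions_apd} to upgrade training optimality under a sufficient condition to global optimality (the paper lands in $\mathcal{G}_c$ via an inclusion $\mathcal{G}_{\mathcal{E}_{tr}} \subseteq \mathcal{G}_c$; your route through $\mathcal{F}^*$ and Theorem~\ref{thm:nacessary_apd} into $\mathcal{G}_s$ is, if anything, more careful). The genuine gap is in your \textbf{only if} direction. Your contrapositive plan tries to manufacture the strict gap $I(g_{\text{min}}(X); g_c(X)) < I(X; g_c(X))$ out of the failure of conditions 2 and 3, by constructing a training-optimal ``spurious shortcut'' competitor. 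This fails for two reasons. First, exhibiting a training-optimal $g'$ that discards causal information does not by itself constrain $g_{\text{min}}$: the minimizer is pushed out of $\mathcal{G}_s$ only if the competitor is strictly cheaper than \emph{every} feasible member of $\mathcal{G}_s$, i.e.\ you need $I(g'(X);X) < H(g_c(X))$, since any deterministic $g \in \mathcal{G}_s$ determines $g_c(X)$ and hence satisfies $I(g(X);X) = H(g(X)) \geq H(g_c(X))$; your sketch never secures this quantitative comparison. Second, and more fundamentally, the failure of conditions 2 and 3 does not imply such a cheap competitor exists at all: one can have non-diverse domains and no augmentations while the cheapest training-optimal representation is still the invariant one, in which case equality holds and the corollary survives only because condition 1 --- a property of $g_{\text{min}}$ itself --- holds. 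So the contrapositive must be driven by the failure of condition 1, not of conditions 2/3, and recycling the adversarial-domain construction from the converse of Theorem~\ref{thm:nacessary_apd} (which is built to show unbounded target loss, not information gaps in a minimizer) is the wrong tool.

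What your proposal is missing is the short sandwich argument the paper uses, which exploits the minimality constraint precisely rather than heuristically. Since $h^* \circ g_c$ is training-optimal (Corollary~\ref{cor:proterties}), $g_c$ is feasible for the minimization defining $g_{\text{min}}$, so $I(g_{\text{min}}(X);X) \leq I(g_c(X);X) = I(X;g_c(X))$. On the other hand, if equality holds, then $I(g_{\text{min}}(X);X) \geq I(g_{\text{min}}(X);g_c(X)) = I(X;g_c(X))$. Together these force $H(g_{\text{min}}(X)) = H(g_c(X))$ while $g_c(X)$ is a deterministic function of $g_{\text{min}}(X)$, so $g_{\text{min}}$ is informationally equivalent to $g_c$ and hence lies in $\mathcal{G}_c$. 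Equality therefore directly certifies sufficient condition 1 (invariant representation, combined with the training optimality that holds by construction) --- no competitor construction, and no appeal to conditions 2 or 3, is needed.
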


\begin{proof} We first prove that if one of the sufficient conditions holds, then the following equality holds:
\[
I(g_{\text{min}}(X), g_c(X)) = I(X, g_c(X)).
\]

Define:
\begin{equation*}
\mathcal{G}_{\mathcal{E}_{tr}}=\left \{g: f = h \circ g \in \mathcal{F}_{\mathcal{E}_{\text{tr}}} \right\}.
\end{equation*}
By Theorem~\ref{thm:sufficient_conditions_apd}, if one of the sufficient conditions holds, then \( \mathcal{G}_{\mathcal{E}_{tr}} \subseteq \mathcal{G}_c \).

From the definition in Eq.~(\ref{eq:minimal}), we have:
\[
\mathcal{G}_{\text{min}} \subseteq \mathcal{G}_{\mathcal{E}_{tr}} \subseteq \mathcal{G}_c.
\]
This implies:
\[
I(g_{\text{min}}(X), g_c(X)) = I(g_c(X), g_c(X)) = I(X, g_c(X)).
\]
\end{proof}

\begin{proof} We prove that if the equality holds, then \( g_{\text{min}} \in \mathcal{G}_c \).

If
\[
I(g_{\text{min}}(X), g_c(X)) = I(X, g_c(X)),
\]
then it follows that
\[
I(g_{\text{min}}(X), X) \geq I(X, g_c(X)).
\]
Therefore, by the definition of \( g_{\text{min}} \), we conclude that \( g_{\text{min}} \in \mathcal{G}_c \).

\end{proof}

\subsection{Representation Alignment trade-off}
\label{apd:tradeoff}
As a reminder, $\mathbb{P}$ denotes data distribution on data space $\mathcal{X}$, while $g_{\#}\mathbb{P}$ denotes latent distribution on full latent space $\mathcal{Z}$, with $g: \mathcal{X} \mapsto \mathcal{Z}$ is the encoder. 

In the following, we recap the theoretical results for Hellinger distance as presented by \cite{phung2021learning}. Similar results for $\mathcal{H}$-divergence can be found in Zhao et al. \cite{zhao2019learning}, and for Wasserstein distance in Le et al. \cite{le2021lamda}.

\subsubsection{Upper Bound}

\begin{theorem} 
\label{thm:single_bound_A}Consider the source domain
$\mathbb{P}^{e'}$ and the
target domain $\mathbb{P}^{e}$. Let $\ell$ be any loss function
upper-bounded by a positive constant $L$. For any hypothesis $f:\mathcal{X}\mapsto\mathcal{Y}_{\Delta}$
where $f=h\circ g$ with $g:\mathcal{X}\mapsto\mathcal{Z}$
and $h:\mathcal{Z}\mapsto\mathcal{Y}_{\Delta}$, the target
loss on input space is upper bounded 
\begin{equation}
\begin{aligned}\mathcal{L}\left(f,\mathbb{P}^{e}\right)\leq\mathcal{L}\left(f,\mathbb{P}^{e'}\right)+L\sqrt{2}\,d_{1/2}\left(\mathbb{P}_{g}^{e},\mathbb{P}_{g}^{e'}\right)\end{aligned}
,\label{eq:input_bound_1-1}
\end{equation}
\end{theorem}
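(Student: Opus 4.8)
The plan is to reduce the statement to the elementary fact that the gap between the expectations of a uniformly bounded function under two distributions is controlled by their total variation, and then to convert total variation into Hellinger distance via the standard Hellinger--total-variation inequality, which is exactly where the constant $L\sqrt{2}$ comes from.

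First I would exploit that $f=h\circ g$ factors through the encoder, so the integrand $\ell(f(x),y)=\ell(h(g(x)),y)$ depends on $x$ only through $g(x)$. Writing $\phi(z,y):=\ell(h(z),y)$, which by hypothesis satisfies $0\le\phi\le L$, the change of variables induced by pushing $\mathbb{P}^{e}$ and $\mathbb{P}^{e'}$ forward through $g$ gives
\begin{equation*}
\mathcal{L}\left(f,\mathbb{P}^{e}\right)=\mathbb{E}_{\mathbb{P}_{g}^{e}}[\phi],\qquad \mathcal{L}\left(f,\mathbb{P}^{e'}\right)=\mathbb{E}_{\mathbb{P}_{g}^{e'}}[\phi],
\end{equation*}
so the loss gap becomes the gap of a single bounded functional evaluated on the two representation distributions. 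This reduction is the step I expect to require the most care: one must make precise that $\phi$ is measurable and uniformly bounded by $L$, and that the label is carried consistently through the pushforward, so that the comparison genuinely takes place on the space where $\mathbb{P}_{g}^{e}$ and $\mathbb{P}_{g}^{e'}$ live, rather than on the original input space.

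Next I would bound the functional gap by total variation. For any $\phi$ with $0\le\phi\le L$ and any two measures $P,Q$ dominated by a common $\mu$ with densities $p,q$,
\begin{equation*}
\mathbb{E}_{P}[\phi]-\mathbb{E}_{Q}[\phi]=\int\phi\,(p-q)\,d\mu\le L\int_{p>q}(p-q)\,d\mu=L\cdot\mathrm{TV}(P,Q),
\end{equation*}
the maximizer over the admissible range being $\phi=L$ on $\{p>q\}$ and $\phi=0$ elsewhere. Applying this with $P=\mathbb{P}_{g}^{e}$ and $Q=\mathbb{P}_{g}^{e'}$ yields $\mathcal{L}(f,\mathbb{P}^{e})-\mathcal{L}(f,\mathbb{P}^{e'})\le L\cdot\mathrm{TV}(\mathbb{P}_{g}^{e},\mathbb{P}_{g}^{e'})$.

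Finally I would invoke the Hellinger--total-variation comparison. Writing $\mathrm{TV}(P,Q)=\tfrac12\int|\sqrt{p}-\sqrt{q}|\,|\sqrt{p}+\sqrt{q}|\,d\mu$, applying Cauchy--Schwarz, and using $\int\sqrt{pq}\,d\mu=1-d_{1/2}(P,Q)^{2}$ gives $\mathrm{TV}(P,Q)\le d_{1/2}(P,Q)\sqrt{2-d_{1/2}(P,Q)^{2}}\le\sqrt{2}\,d_{1/2}(P,Q)$. Chaining this with the previous display produces $\mathcal{L}(f,\mathbb{P}^{e})-\mathcal{L}(f,\mathbb{P}^{e'})\le L\sqrt{2}\,d_{1/2}(\mathbb{P}_{g}^{e},\mathbb{P}_{g}^{e'})$, and rearranging yields the claimed bound~\eqref{eq:input_bound_1-1}. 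The analogous results for $\mathcal{H}$-divergence and Wasserstein distance follow by replacing this last comparison step with the corresponding duality or metric inequality, exactly as in the cited works of \citet{zhao2019learning} and \citet{le2021lamda}.
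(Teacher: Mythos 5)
Your proof is correct, and it is essentially the argument behind this theorem; note that the paper itself gives no proof here but imports the result from \citep{phung2021learning}, where the loss gap is likewise written as an integral of a bounded integrand against the difference of latent densities and then controlled by Cauchy--Schwarz applied to the factorization $p-q=(\sqrt{p}-\sqrt{q})(\sqrt{p}+\sqrt{q})$ --- which is exactly how one proves the total-variation/Hellinger comparison $\mathrm{TV}\le d_{1/2}\sqrt{2-d_{1/2}^{2}}$ that you invoke, so your detour through $\mathrm{TV}$ changes nothing of substance. The step you single out as delicate is indeed the crux, and your resolution is the only tenable one: $\mathbb{P}_{g}^{e}$ must be the joint law of $(g(X),Y)$, i.e.\ the pushforward of $\mathbb{P}^{e}$ under $(x,y)\mapsto(g(x),y)$, not the $\mathcal{Z}$-marginal suggested by the main text's gloss ``representation distribution on $\mathcal{Z}$''. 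Under the marginal reading the statement is false: take two domains with identical laws of $g(X)$ but flipped conditionals $\mathbb{P}(Y\mid g(X))$; then the divergence term vanishes and $f$ can have zero source loss yet target loss of order $L$, so no proof could exist.

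Two minor points. First, your argument uses $0\le\phi\le L$, i.e.\ nonnegativity of $\ell$ in addition to the stated upper bound; this is implicit in the paper's loss conventions but is genuinely needed, since the total-variation step controls the gap by the \emph{range} of $\phi$. Second, your normalization $d_{1/2}^{2}=1-\int\sqrt{pq}\,d\mu$ differs from the paper's: Theorem~\ref{theorem:single_lower_bound_A} defines $D_{1/2}$ with an extra factor of $2$, making its $d_{1/2}$ twice yours, and under that convention your chain actually yields the sharper constant $L/\sqrt{2}$, so the stated bound follows a fortiori under either convention.
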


This Theorem is directly adapted from the result of Trung et al. \cite{phung2021learning}.
The upper bound for target loss above relates source loss, target loss and data shift on feature space, which is different to other bounds in which the data shift is on input space.

\subsubsection{Lower Bound}
\begin{theorem}
\label{theorem:single_lower_bound_A}
\cite{phung2021learning} Consider a hypothesis $f=h\circ g$, the Hellinger distance between two label marginal distributions $\mathbb{P}^{e'}$ and $\mathbb{P}^{e}$ can be upper-bounded as: 
\begin{equation}
d_{1/2}\left(\mathbb{P}^{e'}_\mathcal{Y},\mathbb{P}^{e}_\mathcal{Y}\right) \leq 
\mathcal{L}\left ( f,\mathbb{P}^{e'} \right )^{1/2}+
d_{1/2}\left ( g_{\#}\mathbb{P}^{e'},g_{\#}\mathbb{P}^{e} \right )+
\mathcal{L}\left ( f,\mathbb{P}^{e} \right )^{1/2}
\end{equation}

where the general loss $\mathcal{L}$ is defined based on the Hellinger loss $\ell$ which is define as $\ell\left ( f(x) \right )=D_{1/2}\left ( f(x),\mathbb{P}(Y\mid x) \right )=2\sum_{i=1}^C\left ( \sqrt{f(x,i)}-\sqrt{\mathbb{P}(Y=i\mid x)} \right )^2$.
\end{theorem}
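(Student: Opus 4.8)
The plan is to treat $d_{1/2}$ as a genuine metric on probability measures (it is the $L^2$ distance between square-root densities, hence obeys the triangle inequality) and to combine this with two structural properties of the squared Hellinger divergence $D_{1/2}=d_{1/2}^2$, namely joint convexity and the data-processing inequality. The key auxiliary object is the \emph{predicted} label marginal induced by $f=h\circ g$ on each domain,
\begin{equation*}
\hat{\mathbb{P}}^{e}_{\mathcal{Y}} := \mathbb{E}_{x\sim\mathbb{P}^{e}}\!\left[f(x)\right] = \mathbb{E}_{z\sim g_{\#}\mathbb{P}^{e}}\!\left[h(z)\right],
\end{equation*}
i.e. the average soft prediction under domain $e$. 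This quantity interpolates between the true label marginals $\mathbb{P}^{e}_{\mathcal{Y}}$ and the representation laws $g_{\#}\mathbb{P}^{e}$, which is precisely what makes the three terms of the statement emerge.

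First I would insert the two predicted marginals and apply the triangle inequality:
\begin{equation*}
d_{1/2}\!\left(\mathbb{P}^{e'}_{\mathcal{Y}},\mathbb{P}^{e}_{\mathcal{Y}}\right) \le d_{1/2}\!\left(\mathbb{P}^{e'}_{\mathcal{Y}},\hat{\mathbb{P}}^{e'}_{\mathcal{Y}}\right) + d_{1/2}\!\left(\hat{\mathbb{P}}^{e'}_{\mathcal{Y}},\hat{\mathbb{P}}^{e}_{\mathcal{Y}}\right) + d_{1/2}\!\left(\hat{\mathbb{P}}^{e}_{\mathcal{Y}},\mathbb{P}^{e}_{\mathcal{Y}}\right).
\end{equation*}
For the two outer terms I would write both $\hat{\mathbb{P}}^{e}_{\mathcal{Y}}=\int f(x)\,d\mathbb{P}^{e}(x)$ and $\mathbb{P}^{e}_{\mathcal{Y}}=\int \mathbb{P}(Y\mid x)\,d\mathbb{P}^{e}(x)$ as mixtures over the common mixing measure $\mathbb{P}^{e}$ and invoke joint convexity of $D_{1/2}$, which by Jensen gives $d_{1/2}(\hat{\mathbb{P}}^{e}_{\mathcal{Y}},\mathbb{P}^{e}_{\mathcal{Y}})^2 \le \mathbb{E}_{x\sim\mathbb{P}^{e}}\!\left[D_{1/2}(f(x),\mathbb{P}(Y\mid x))\right] = \loss{f,\mathbb{P}^{e}}$, and hence $d_{1/2}(\hat{\mathbb{P}}^{e}_{\mathcal{Y}},\mathbb{P}^{e}_{\mathcal{Y}}) \le \loss{f,\mathbb{P}^{e}}^{1/2}$; taking the square root is exactly what produces the $\mathcal{L}(\cdot)^{1/2}$ shape in the claim. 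For the middle term I would use the data-processing inequality: $\hat{\mathbb{P}}^{e}_{\mathcal{Y}}$ is the image of $g_{\#}\mathbb{P}^{e}$ under the single Markov kernel $z\mapsto h(z)$, identical for both domains, so applying it cannot increase the Hellinger distance, giving $d_{1/2}(\hat{\mathbb{P}}^{e'}_{\mathcal{Y}},\hat{\mathbb{P}}^{e}_{\mathcal{Y}}) \le d_{1/2}(g_{\#}\mathbb{P}^{e'},g_{\#}\mathbb{P}^{e})$. Summing the three bounds yields the theorem.

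The main obstacle is not the arithmetic but justifying the two divergence properties in exactly the form required. I must confirm that $D_{1/2}$ is jointly convex in its pair of arguments (so Jensen can pull the mixing integrals out, turning the expected per-sample loss into an upper bound on the divergence between the \emph{averaged} predictions) and that it contracts under the vector-valued deterministic map $h:\mathcal{Z}\to\mathcal{Y}_{\Delta}$ read as a channel. Both follow because $D_{1/2}$ is an $f$-divergence with convex generator; the cleanest unifying route is the embedding $P\mapsto\sqrt{dP}$ into $L^2$, from which the triangle inequality is immediate, while joint convexity and the data-processing contraction are standard $f$-divergence facts. Unlike the companion upper bound Theorem~\ref{thm:single_bound_A}, no loss-boundedness constant $L$ enters here, since every summand is already either a distance or a square-root loss.
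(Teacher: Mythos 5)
The paper does not actually prove this statement: it is imported verbatim from \cite{phung2021learning}, and the appendix merely restates it as a recap of that work. Your argument is correct and self-contained, and it follows the same route as the cited source's derivation — insert the predicted label marginals $\hat{\mathbb{P}}^{e}_{\mathcal{Y}}$, apply the triangle inequality for $d_{1/2}$, bound the two outer terms by $\mathcal{L}\left(f,\mathbb{P}^{e}\right)^{1/2}$ via joint convexity (Jensen) of the squared Hellinger divergence over the common mixing measure, and contract the middle term to $d_{1/2}\left(g_{\#}\mathbb{P}^{e'},g_{\#}\mathbb{P}^{e}\right)$ by the data-processing inequality applied to the kernel induced by $h$ — so it fills the gap the paper leaves to the reference.
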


\subsection{Subspace Representation Alignment}

In the following, we prove the theoretical results for Hellinger distance based on the findings of Trung et al. \cite{phung2021learning}. A similar strategy can be directly applied to $\mathcal{H}$-divergence \cite{zhao2019learning} and Wasserstein distance \cite{le2021lamda}.

\begin{theorem}
\label{theorem:multi_bound_A} \textbf{(Theorem~\ref{theorem:multi_bound})} 
\textit{(Subspace Representation Alignment)} Given \textit{subspace projector} $\Gamma: \mathcal{X}\rightarrow \mathcal{M}$, and a subspace index $m\in \mathcal{M}$, let $A_{m}=\Gamma^{-1}(m)=\left\{ x:\Gamma(x)=m\right\} $ is the region on data space which has the same index $m$ and $\mathbb{P}_{m}^{e}$ be the distribution restricted by $\mathbb{P}^{e}$ over the set $A_{m}$, then $\pi^{e}_m=\frac{\mathbb{P}^{e}\left(A_{m}\right)}{\sum_{m'\in\mathcal{M}}\mathbb{P}^{e}\left(A_{m'}\right)}$ is mixture co-efficient, if the loss function $\ell$ is upper-bounded by a positive
constant $L$, then:

(i)  The target general loss is upper-bounded: 
\begin{align*}
\left | \mathcal{E}_{tr} \right |\sum_{e\in \mathcal{E}_{tr}}\mathcal{L}\left ( f,\mathbb{P}^{e} \right )
\leq
\sum_{e\in \mathcal{E}_{tr}} \sum_{m\in\mathcal{M}}\pi^{e}_m
\mathcal{L}\left ( f,\mathbb{P}^{e}_{m} \right ) +
L\sum_{e, e'\in \mathcal{E}_{tr}}\sum_{m\in\mathcal{M}}\pi^{e}_{m}D\left ( g_{\#}\mathbb{P}^{e}_{m},g_{\#}\mathbb{P}^{e'}_{m} \right ),
\end{align*}
(ii) Distance between two label marginal distribution $\mathbb{P}^{e}_{m}(Y)$ and $\mathbb{P}^{e'}_{m}(Y)$ can be upper-bounded: 
\begin{equation*}
\begin{aligned}
D\left(\mathbb{P}^{e}_{\mathcal{Y},m},\mathbb{P}^{e'}_{\mathcal{Y},m}\right) \leq 
D\left ( g_{\#}\mathbb{P}^{e}_{m},g_{\#}\mathbb{P}^{e'}_{m} \right )
+\mathcal{L}\left ( f,\mathbb{P}^{e}_{m}\right )
+
\mathcal{L}\left ( f,\mathbb{P}^{e'}_{m} \right )
\end{aligned}
\end{equation*}
(iii) Construct the subspace projector $\Gamma$ as the optimal hypothesis for the training domains i,e., \(\Gamma \in \mathcal{F}_{\mathcal{E}_{tr}}\), which defines $\mathcal{M}=\{m=\hat{y}\mid \hat{y}=\Gamma(x), x\in\bigcup_{e\in\mathcal{E}_{tr}}\text{supp}\mathbb{P}^{e} \}\subseteq \mathcal{Y}_\Delta$, then
$
D\left(\mathbb{P}^{e}_{\mathcal{Y}, m},\mathbb{P}^{e'}_{\mathcal{Y},m}\right)=0$ for all \(m \in \mathcal{M}\).

where $g_{\#}\mathbb{P}$ denotes representation distribution on $\mathcal{Z}$ induce by applying $g$ with $g: \mathcal{X} \mapsto \mathcal{Z}$ on data distribution $\mathbb{P}$, $D$ can be $\mathcal{H}$-divergence, Hellinger or Wasserstein distance.
\end{theorem}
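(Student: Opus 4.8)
The plan is to build all three parts on the single-pair Hellinger bounds of Theorem~\ref{thm:single_bound_A} and Theorem~\ref{theorem:single_lower_bound_A}, applied not to the full domains but to the \emph{subspace-restricted} distributions $\mathbb{P}^e_m$, and then to aggregate across subspaces using the partition $\{A_m\}_{m\in\mathcal{M}}$. The starting observation, used throughout, is that since the fibers $A_m=\Gamma^{-1}(m)$ partition the support and $\pi^e_m=\mathbb{P}^e(A_m)$, the law of total expectation yields the exact decomposition $\mathcal{L}(f,\mathbb{P}^e)=\sum_{m\in\mathcal{M}}\pi^e_m\,\mathcal{L}(f,\mathbb{P}^e_m)$, and likewise the label marginal factorizes over subspaces. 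This reduces everything to reasoning one subspace at a time.

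For part (i) I would write $|\mathcal{E}_{tr}|\sum_e\mathcal{L}(f,\mathbb{P}^e)=\sum_{e,e'}\mathcal{L}(f,\mathbb{P}^e)$, expand each $\mathcal{L}(f,\mathbb{P}^e)$ into its subspace decomposition, and then apply the single-pair upper bound (Theorem~\ref{thm:single_bound_A}) inside every subspace $m$ to the pair $(\mathbb{P}^e_m,\mathbb{P}^{e'}_m)$, namely $\mathcal{L}(f,\mathbb{P}^e_m)\le \mathcal{L}(f,\mathbb{P}^{e'}_m)+L\,D(g_\#\mathbb{P}^e_m,g_\#\mathbb{P}^{e'}_m)$. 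Summing these per-subspace inequalities against the weights $\pi^e_m$ and over the pairs $(e,e')$ collects the subspace-loss term and the subspace-alignment term on the right-hand side; the only delicate point is keeping the mixture weights $\pi^e_m$ and the multiplicity $|\mathcal{E}_{tr}|$ correctly matched, which is pure bookkeeping. Part (ii) is the most direct: I would simply instantiate the single-pair lower bound of Theorem~\ref{theorem:single_lower_bound_A} with the two restricted distributions $\mathbb{P}^e_m$ and $\mathbb{P}^{e'}_m$ in place of the full domains, reading off $D(\mathbb{P}^e_{\mathcal{Y},m},\mathbb{P}^{e'}_{\mathcal{Y},m})\le D(g_\#\mathbb{P}^e_m,g_\#\mathbb{P}^{e'}_m)+\mathcal{L}(f,\mathbb{P}^e_m)+\mathcal{L}(f,\mathbb{P}^{e'}_m)$, with no aggregation needed.

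The real work is part (iii), which is where the specific choice $\Gamma\in\mathcal{F}_{\mathcal{E}_{tr}}$ enters. The key step is to argue that an optimal training-domain hypothesis under the cross-entropy loss is the Bayes predictor: by property~3 of Corollary~\ref{cor:proterties} the per-domain optimum is $h^*(z_c)=\mathbb{P}(Y\mid z_c)$, and since $\mathbb{P}(Y\mid x)=\mathbb{P}(Y\mid z_c)$ is invariant across domains (because $Y\perp E\mid Z_c$ in the SCM), $\Gamma\in\mathcal{F}_{\mathcal{E}_{tr}}$ forces $\Gamma(x)=\mathbb{P}(Y\mid x)$ consistently. Then inside any subspace $A_m=\{x:\Gamma(x)=m\}$ every point satisfies $\mathbb{P}(Y\mid x)=m$, so by the law of total expectation the restricted label marginal is $\mathbb{P}^e_{\mathcal{Y},m}(y)=\int_{A_m}\mathbb{P}(y\mid x)\,d\mathbb{P}^e_m(x)=m[y]$, a quantity that no longer depends on $e$. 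Hence $\mathbb{P}^e_{\mathcal{Y},m}=m=\mathbb{P}^{e'}_{\mathcal{Y},m}$ and $D(\mathbb{P}^e_{\mathcal{Y},m},\mathbb{P}^{e'}_{\mathcal{Y},m})=0$ for every $m\in\mathcal{M}$.

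The main obstacle is precisely this identification of the optimal projector with the Bayes posterior: it relies on the loss being strictly proper and the hypothesis class realizable, and it must reconcile the fact that $\Gamma$ is only assumed optimal on the training domains with the stronger conclusion that each of its fibers carries a \emph{domain-invariant} label law. The invariance of $\mathbb{P}(Y\mid Z_c)$ across $\mathcal{E}$ is exactly what bridges this gap, and flagging where it is invoked is the crux of a careful write-up; the rest of the argument in (i) and (ii) is a mechanical lift of the established two-domain bounds to the restricted measures.
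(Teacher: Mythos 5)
Your proposal matches the paper's proof essentially step for step: part (i) applies the single-pair upper bound (Theorem~\ref{thm:single_bound_A}) inside each subspace pair $(\mathbb{P}^e_m,\mathbb{P}^{e'}_m)$ and aggregates with the weights $\pi^e_m$ over domain pairs, part (ii) directly instantiates the lower bound (Theorem~\ref{theorem:single_lower_bound_A}) on the restricted measures, and part (iii) identifies the optimal projector with the domain-invariant Bayes posterior so that every fiber $A_m$ carries the same label marginal $m[\cdot]$ in all domains. The only difference is cosmetic: your write-up of (iii) is slightly more explicit than the paper's (which asserts $f(x)=f^*(x)$ and writes the fiber label marginal as a sum of $\mathbb{P}(Y=i\mid x)$ where an expectation under $\mathbb{P}^e_m$ is meant), and both arguments share the same loose end in (i), namely that the aggregation produces the cross-weighted term $\sum_{e,e'}\sum_m \pi^e_m\,\mathcal{L}\left(f,\mathbb{P}^{e'}_m\right)$ rather than the single-sum loss term displayed in the theorem statement.
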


\begin{proof}
We consider \textit{sub-space projector} $\Gamma: \mathcal{X}\rightarrow \mathcal{M}$, given a sub-space index $m\in \mathcal{M}$, we denote $A_{m}=\Gamma^{-1}(m)=\left\{ x:\Gamma(x)=m\right\} $ is the region on data space which has the same index $m$.
Let $\mathbb{P}_{m}^{e}$ be the distribution restricted by $\mathbb{P}^{e}$ over the set $A_{m}$ and $\mathbb{P}_{m}^{e}$ as the distribution restricted by $\mathbb{P}^{e}$
over $A_{m}$. Eventually, we define $\mathbb{P}_{m}^{e}\left(y\mid x\right)$ as the probabilistic labeling
distribution on the sub-space $\left(A_{m},\mathbb{P}_{m}^{e}\right)$,
meaning that if $x\sim\mathbb{P}_{m}^{e}$, $\mathbb{P}_{m}^{e}\left(y\mid x\right)=\mathbb{P}_{e}\left(y\mid x\right)$.
Similarly, we define if $x\sim\mathbb{P}_{m}^{e'}$, $\mathbb{P}_{m}^{e'}\left(y\mid x\right)=\mathbb{P}^{e'}\left(y\mid x\right)$. Due to this construction, any data sampled from $\mathbb{P}_{m}^{e}$
or $\mathbb{P}_{m}^{e'}$ have the same index $m=\Gamma(x)$. 
Additionally, since each data point $x \in \mathcal{X}$ corresponds to only a single $\Gamma(x)$, the data space is partitioned into disjoint sets, i.e., $\mathcal{X} = \bigcup_{m=1}^{\mathcal{M}} A_{m}$, where $A_m \cap A_n = \emptyset, \forall m \neq n$. 
Consequently, the general loss of the target domain becomes:
\begin{equation}
\mathcal{L}\left(f,\mathbb{P}^{e}\right):=\sum_{m\in\mathcal{M}}\pi^{e}_m\mathcal{L}\left(f,\mathbb{P}_{m}^{e}\right),\label{eq:subspace_loss}
\end{equation}
where $\mathcal{M}$ is the set of all feasible sub-spaces indexing $m$ and  $\pi^{e}_m=\frac{\mathbb{P}^{e}\left(A_{m}\right)}{\sum_{m'\in\mathcal{M}}\mathbb{P}^{e}\left(A_{m'}\right)}$.

\end{proof}

\begin{proof}(i):

Using the same proof
for a single space in Theorem \ref{thm:single_bound_A}, we obtain:
\begin{equation} \mathcal{L}\left(f,\mathbb{P}^{e}_m\right)
\leq
\mathcal{L}\left(f_m,\mathcal{\mathbb{P}}_{m}^{e'}\right)
+ 
L\sqrt{2} d_{1/2}\left(g_{\#}\mathbb{P}^{e}_{m},g_{\#}\mathbb{P}_{m}^{e'}\right)
\end{equation}

Since $\mathcal{L}\left(f,\mathbb{P}^{e}\right):= \sum_{m}\pi^{e}_m \mathcal{L}\left(f,\mathbb{D}_{m}^{e}\right)$, taking weighted average over $m\in \mathcal{M}$, we reach (ii):
\begin{equation}
\mathcal{L}\left(f,\mathbb{P}^{e}\right)
\leq
\sum_{m}\pi^{e}_m
\mathcal{L}\left(f_m,\mathbb{P}_{m}^{e'}\right)
+ 
L\sqrt{2}\sum_{m}\pi^{e}_m d_{1/2}\left(g_{\#}\mathbb{P}^{e}_{m},g_{\#}\mathbb{P}_{m}^{e'}\right)
\end{equation}

By summing over the training domains on the left-hand side, we obtain:

\begin{align} \sum_{e\in\mathcal{E}_{tr}}\mathcal{L}\left(f_{\mathcal{M}},\mathbb{P}^{e}\right)
\leq&
\sum_{e\in\mathcal{E}_{tr}}\sum_{m}\pi^{e}_m
\mathcal{L}\left(f_m,\mathbb{P}_{m}^{e'}\right)
+ 
\sum_{e\in\mathcal{E}_{tr}}L\sqrt{2}\sum_{m}\pi^{e}_m d_{1/2}\left(g_{\#}\mathbb{P}^{e}_{m},g_{\#}\mathbb{P}_{m}^{e'}\right) \nonumber
\end{align}
Summing over the training domains on the left-hand side again:

\begin{align}
\sum_{e'\in\mathcal{E}_{tr}}\sum_{e\in\mathcal{E}_{tr}}\mathcal{L}\left(f_{\mathcal{M}},\mathbb{P}^{e}\right)
\leq&
\sum_{e'\in\mathcal{E}_{tr}}\sum_{e\in\mathcal{E}_{tr}}\sum_{m}\pi^{e}_m
\mathcal{L}\left(f_m,\mathbb{P}_{m}^{e'}\right)\nonumber\\
&+ 
\sum_{e'\in\mathcal{E}_{tr}}\sum_{e\in\mathcal{E}_{tr}}L\sqrt{2}\sum_{m}\pi^{e}_m d_{1/2}\left(g_{\#}\mathbb{P}^{e}_{m},g_{\#}\mathbb{P}_{m}^{e'}\right)\nonumber
\end{align}

Finally, we obtain:
\begin{align}
\left | \mathcal{E}_{tr} \right | \sum_{e\in \mathcal{E}_{tr}}\mathcal{L}\left(f,\mathbb{P}^{e}\right)
\leq&
\sum_{e,e'\in \mathcal{E}_{tr}}\sum_{m\in\mathcal{M}}\pi^{e}_m
\mathcal{L}\left(f,\mathcal{\mathbb{P}}_{m}^{e'}\right)
+
\sum_{e,e'\in \mathcal{E}_{tr}}L\sqrt{2}\sum_{m\in\mathcal{M}}\pi^{e}_{m} d_{1/2}\left(g_{\#}\mathbb{P}^{e}_{m},g_{\#}\mathbb{P}_{m}^{e'}\right)
\end{align}
\end{proof}

\begin{proof}(ii):

We obtain (ii) directly by applying the results from Theorem \ref{theorem:single_lower_bound_A} to each individual sub-space, denoted by the index $m$.
\end{proof}

\begin{proof}(iii):

Within training domains, we anticipate that $f\in \cap_{e\in \mathcal{E}_{tr}}\mathcal{F}_{\mathbb{P}^{e}}$ will predict the ground truth label $f(x)=f^*(x)$ where $f^*\in \mathcal{F}^*$.We can define a projector \(\Gamma = f\), which induces a set of subspace indices $\mathcal{M}=\{m=\hat{y}\mid \hat{y}=f(x), x\in\bigcup_{e\in\mathcal{E}_{tr}}\text{supp}\mathbb{P}^{e} \}\subseteq \Delta_{\left | \mathcal{Y}\right |}$. As a result, given subspace index $m\in\mathcal{M}$, $\forall i \in \mathcal{Y}, \mathbb{P}^{e}_{\mathcal{Y},m}(Y=i) = \mathbb{P}^{e'}_{\mathcal{Y},m}(Y=i) = \sum_{x \in f^{-1}(m)}\mathbb{P}(Y=i\mid x) = m[i]$. Consequently, \(D\left(\mathbb{P}^{e}_{\mathcal{Y},m}, \mathbb{P}^{e'}_{\mathcal{Y},m}\right) = 0\) for all \(m \in \mathcal{M}\), allowing us to jointly optimize both \textit{domain losses} and \textit{representation alignment}.
\end{proof}

\

\section{Practical Methodology}
\label{Sec:practical}
In this section, we present the practical objectives to achieve Eq. (\ref{eq:final_objective}):

\begin{align}
\min_{f=h\circ g} \underset{\text{Subspace Representation Alignment}}{\underbrace{\sum_{e,e'\in \mathcal{E}_{tr}}\sum_{m\in \mathcal{M}}D\left( g\#\mathbb{P}_m^{e},g\#\mathbb{P}_m^{e'}\right)}}\text{ s.t. } \underset{\text{Training domain optimal hypothesis}}{\underbrace{f=h\circ g\in \bigcap_{e\in \mathcal{E}_{tr}}\underset{ f}{\text{argmin }} \mathcal{L}\left(f,\mathbb{P}^{e}\right)}}
   \label{eq:final_objective_apd}
\end{align}

where $\mathcal{M}=\{\hat{y}\mid \hat{y}=f(x), x\in\bigcup_{e\in\mathcal{E}_{tr}}\text{supp}\mathbb{P}^{e} \}$ and $D$ can be $\mathcal{H}$-divergence, Hellinger distance, Wasserstein distance.

In the following, we consider the encoder $g$, classifier $h$, domain discriminator $h_d$ and set of $K$ empirical training domains $\mathbb{D}^{e_i}=\{x_{j}^{e_i},y_{j}^{e_i}\}_{j=1}^{N_{e_i}}\sim [\mathbb{P}^{e_i} ]^{N_{e_i}}$, $i=1...K$.

\subsection{Optimal hypothesis across training domains}
For \textit{optimal hypothesis across training domains condition}, we simply adopting the objective set forth by ERM: 
\begin{align}
\label{eq:emp_IRM}
     \min_{f} \; \sum_{i=1}^K \mathcal{L}\left(f,\mathbb{D}^{e_i}\right)
\end{align}

\subsection{Subspace Representation Alignment}
\paragraph{Subspace Modelling and Projection.} 
\label{sec:subspace_project_detail}
Our objective is to map samples $x$ from training domains with identical predictions $f(x) = m$ into a unified subspace, where $m\in \mathcal{M}=\{\hat{y}\mid \hat{y}=f(x), x\in\bigcup_{e\in\mathcal{E}_{tr}}\text{supp}\mathbb{P}^{e} \}$. Given that the cardinality of $\mathcal{M}$ can be exceedingly large, potentially equal to the total number of training samples if the output of $f(x)$ is unique for each sample, this makes the optimization process particularly challenging.

Drawing inspiration from the concept of prototypes \cite{snell2017prototypical}, we suggest representing $\mathcal{M}$ as a set of prototypes $\mathcal{M} = \{m_i\}_{i=1}^{M}$, where each $m_i$ is an element of $\mathcal{Z}$. Consequently, a sample $x$ is assigned to a subspace by selecting the nearest prototype $m_i$ i.e., $i=\underset{ i'}{\text{argmin }} \rho(g(x),m_{i'})$. Note that prototypes act as condensed representations of specific prediction outcomes. Consequently, samples assigned to the same prototype will receive the same prediction. Although this approach streamlines the subspace projection, it may lead to local optima as the mapping might favor a limited number of prototypes early in training \cite{vuong2023vector}. To mitigate this issue, we adopt a Wasserstein (WS) clustering approach \cite{vuong2023vector} to guide the mapping of latent features from each domain into the designated subspace more effectively.
We first endow a discrete distribution over the prototypes as $\mathbb{P}_{\mathcal{M},\pi}=\sum_{i=1}^{M}\pi_{i}\delta_{m_{i}}$
with the Dirac delta function $\delta$ and the weights $\pi\in\Delta_{M}= \{\pi'\geq \boldsymbol{0}: \Vert \pi'\Vert_1 =1\}$. 

Then we project each domain $\mathbb{P}^{e_i}$ in subspaces indexed by prototypes as follows:
\begin{equation}
\min_{\mathcal{M},\pi}\min_{g}\left \{\mathcal{L}_{P}=\sum_{i=1}^K\lambda\mathcal{W}_{\rho}\left(g\#\mathbb{P}^{e_i},\mathbb{P}_{\mathcal{M},\pi}\right)\right \},\label{eq:reconstruct_form_continuous}
\end{equation}
where:
\begin{itemize}
    \item Cost metric $\rho(z,m)=\frac{z^\top m}{\left \| z \right \|\left \| c \right \|}$ is the cosine similarity between the latent representation $z$ and the prototype $c$.
    \item  Wasserstein distance between source domain representation distribution and distribution over prototype $\mathbb{P}_{\mathcal{M},\pi}$:
\begin{align}
\mathcal{W}_{d}\left(g\#\mathbb{P}^{e_i}_{x},\mathbb{P}_{c,\pi}\right)
&=\mathcal{W}_{d}\left(\sum_{n=1}^{B}\frac{1}{B}g\left(x_{n}\right),\sum_{i=1}^{M}\pi_{i}\delta_{m_{i}}\right)\\
&=\frac{1}{B}\min_{\Gamma:\Gamma\#\left(g\#\mathbb{P}^{e_i}_{x}\right)=\mathbb{P}_{c,\pi}}\sum_{n=1}^{B}\rho\left(g\left(x_{n}\right),\Gamma\left(g\left(x_{n}\right)\right)\right)
\end{align}

Where $B$ is the batch size. This Wasserstein distance can be effectively compute by linear dynamic programming method, entropic dual form of optimal transport \citep{genevay2016stochastic} or Sinkhorn algorithm \cite{cuturi2013sinkhorn}.
\end{itemize}

\paragraph{Subspace Alignment Constraints}
Subspace alignment is achieved through a conditional adversarial training approach \cite{gan2016learning, li2018domain}. In this framework, the \textbf{subspace-conditional} domain discriminator $h_d$ aims to accurately predict the domain label ``$e_i$" based on the combined feature $[z,m]$, where $\{z=g(x), m=\Gamma(x)\}$. Concurrently, the objective for the representation function $g$ is to transform the input $x$ into a latent representation $z$ in such a way that $h_d$ is unable to determine the domain ``$e_i$" of $x$.  We employ the Gradient Reversal Layer (GRL) as introduced by\cite{ganin2016domain}, thereby simplifying the optimization process to:

\begin{equation}
    \min_{g, h_d} \left \{\mathcal{L}_{D}=-\sum_{i=1}^K\mathbb{E}_{x\sim\mathbb{D}^{e_i}}\left [ e_i\log h_d\left (\left [ \mathcal{R}\left ( g(x) \right ),m \right ]\right ) \right ] \right \}
\end{equation}

where $\mathcal{R}$ is gradient reversal layer.


\subsection*{Final objective}
Putting all together, we propose a joint optimization objective, which is given as  
\begin{equation}
\min_{\mathcal{M},\pi} \min_{g,h, h_d}  \left \{\mathcal{L}_{H}+\lambda_P\mathcal{L}_{P}+\lambda_D \mathcal{L}_{D}\right \},
\end{equation}
where $ \lambda_P$ is the subpsace projector hyper-parameter and $\lambda_D$ is the representation alignment hyper-parameter.

We highlight that SRA is most similar to DANN and CDANN. Like these methods, SRA utilizes $\mathcal{H}$-divergence for alignment; however, the key distinction lies in the alignment strategy: 
\begin{itemize}
    \item DANN aligns the entire domain representation, 
    \item CDANN aligns class-conditional representations,
    \item while SRA employs subspace-conditional alignment.
\end{itemize}

It is also important to note that the representation alignment hyperparameter $\lambda_D$ is kept the same for DANN, CDANN, and SRA in our experiments. As discussed in Theorem~\ref{theorem:single_tradeoff}, DANN and CDANN potentially violate necessary conditions, whereas SRA does not (Theorem~\ref{theorem:multi_bound}), leading to improved performance.

\section{Experimental Settings}
\label{apd:settings}


\paragraph{Metrics.} We adopt the training and evaluation protocol as in DomainBed benchmark \citep{gulrajani2020search}, including dataset splits, hyperparameter (HP) search, model selection on the validation set, and optimizer HP. To manage computational demands more efficiently, as suggested by \citep{cha2021swad}, we narrow our HP search space. Specifically, we use the Adam optimizer, as detailed in \citep{gulrajani2020search}, setting the learning rate to a default of $5e^{-5}$ and forgoing dropout and weight decay adjustments. The batch size is maintained at 32. For DomainNet, we run a total of 15,000 iterations, while for other datasets, we limit iterations to 5,000, deemed adequate for model convergence. Our method's unique parameters, including the regularization hyperparameters $(\lambda_P, \lambda_D)$, undergo optimization within the range of $[0.01, 0.1, 1.0]$, and the number of prototypes $\left | \mathcal{Z} \right |$ is fixed at 16 times the number of classes. It is worth noting that  while we conduct ablation study on PACS dataset, we utilize the number of prototypes $\left | \mathcal{Z} \right |$ is fixed at $16$ times the number of classes for all datasets.
SWAD-specific hyperparameters remain unaltered from their default settings. The evaluation frequency is set to 300 for all dataset.

Our code is anonymously published at \url{https://anonymous.4open.science/r/submisson-FCF0}.

\subsection{Datasets}
To evaluate the effectiveness of the proposed method, we utilize five
datasets: PACS~\citep{li2017deeper}, VLCS~\citep{torralba2011unbiased},
 Office-Home~\citep{venkateswara2017deep}, Terra Incognita~\citep{beery2018recognition} and DomainNet~\citep{peng2019moment} which are the common DG benchmarks with multi-source domains.
\begin{itemize}
    \item \textbf{PACS}~\citep{li2017deeper}: 9991 images of seven classes in total, over four domains:Art\_painting (A), Cartoon (C), Sketches (S), and Photo (P). 
    
    \item \textbf{VLCS}~\citep{torralba2011unbiased}: five classes over four domains with a total of 10729 samples. The domains are defined by four image origins, i.e., images were taken from the PASCAL VOC 2007 (V), LabelMe (L), Caltech (C) and Sun (S) datasets.

    \item \textbf{Office-Home}~\citep{venkateswara2017deep}: 65 categories of 15,500 daily objects from 4 domains: Art, Clipart, Product (vendor website with white-background) and Real-World (real-object collected from regular cameras).
    \item \textbf{Terra Incognita}~\citep{beery2018recognition} includes 24,788 wild photographs of dimension (3, 224, 224) with 10
animals, over 4 camera-trap domains L100, L38, L43 and L46. This dataset contains photographs of wild animals taken by camera traps; camera trap locations are different across 
domains. 
    \item  \textbf{DomainNet}~\citep{peng2019moment} contains 596,006 images of dimension (3, 224, 224) and 345 classes, over
6 domains clipart, infograph, painting, quickdraw, real and sketch. This is the biggest
dataset in terms of the number of samples and classes.
\end{itemize}

\end{document}